\def\eqref#1{equation~\ref{#1}}
\def\1{\bm{1}}
\DeclareMathAlphabet{\mathsfit}{\encodingdefault}{\sfdefault}{m}{sl}
\SetMathAlphabet{\mathsfit}{bold}{\encodingdefault}{\sfdefault}{bx}{n}
\newcommand{\reals} {{\mathbb{R}}}
\newcommand{\hL}    {{\widehat{L}}}
\newcommand{\Vhat}      {{\widehat{V}}}
\newcommand{\Ehat}      {{\widehat{E}}}
\newcommand{\What}      {{\widehat{W}}}
\newcommand{\Dhat}      {{\widehat{D}}} 
\newcommand{\what}      {{\hat{w}}}
\newcommand{\vhat}      {{\hat{v}}}
\newcommand{\vmap}      {{\pi}}
\newcommand{\Lhat}      {{\widehat{L}}}
\newcommand{\Fcal}      {{\mathcal{F}}}
\newcommand{\Lcal}      {{\mathcal{L}}}
\newcommand{\fhat}      {{\hat{f}}}
\newcommand{\vertexmap}  {{vertex map}}
\newcommand{\Qform}     {{\mathsf{Q}}}
\newcommand{\RQ}        {{\mathsf{R}}}
\newcommand{\dwL}       {{\mathsf{L}}}
\newcommand{\proj}      {{\mathcal{P}}}
\newcommand{\lift}      {{\mathcal{U}}}
\newcommand{\myMu}      {{\mathcal{M}_{\theta}}}
\newcommand{\N}      {N} 
\newcommand{\genL}      {{\mathcal{O}_G}} %
\newcommand{\combL}      {{L}}%
\newcommand{\n}      {n} 
\newcommand{\sx}      {\hat{x}} 
\newcommand{\sgenL}      {{\mathcal{O}_{\Ghat}}}
\newcommand{\scombL}      {{\Lhat}} %
\newcommand{\sdwL}      {\widehat{\dwL}}%
\newcommand{\LtildeVhat}      {\scombL}  %
\newcommand{\bx}      {x} 
\newcommand{\Ghat}      {{\widehat{G}}}
\newcommand{\nn}  {{\texttt{GOREN} }}
\newcommand{\nntight}  {{\texttt{GOREN}}}
\newcommand{\nL}  {{\mathcal{L}}}
\newcommand{\snL}  {{\widehat{\mathcal{L}}}}
\newcommand{\IN}  {{\Pi}} %
\newcommand{\ppt}  {({P^+})^T}
\newcommand{\lbd} {{\boldsymbol{\lambda}}}
\newcommand{\w} {\mathbf{w}}
\newcommand{\wt} {\mathbf{w}^t}
\newcommand{\wg} {\mathbf{w}_{\Ghat}}
\newcommand{\wgt} {{\mathbf{w}_{\Ghat}^t}}
\newcommand{\mini}{{\operatorname{minimize}}}
\newcommand{\A}{{U \operatorname{Diag}(\boldsymbol{\lambda}) U^{T}}}
\newcommand{\mask}[1]{\overline{#1}}
\newcommand{\lap}[1]{\mathfrak{L}{#1} }
\newcommand{\lapstar}[1]{\mathfrak{L^*}{#1} }
\newcommand{\er}{Erd\H{o}s-R\' {e}nyi }
\definecolor{darkblue}{rgb}{0.0, 0.0, 0.8}
\definecolor{darkred}{rgb}{0.8, 0.0, 0.0}
\definecolor{darkgreen}{rgb}{0.0, 0.8, 0.0}
\long\def\remove#1{}
\newtheorem{theorem}{Theorem}[section] %
\newtheorem{proposition}[theorem]{Proposition}
\newtheorem{lemma}[theorem]{Lemma}
\newtheorem{definition}{Definition}
\definecolor{darkred}{rgb}{1, 0.1, 0.3}
\definecolor{darkgreen}{rgb}{0.5, 0.8, 0.1}
\definecolor{darkpurple}{rgb}{1.0, 0, 1.0}
\definecolor{darkblue}{rgb}{0, 0, 1.0}
\newcommand {\mm}[1] {\ifmmode{#1}\else{\mbox{\(#1\)}}\fi}
\newcommand{\tabtopvspace}{{\vspace{-15pt}}} %
\newcommand{\tabbottomvspace}{{\vspace{-15pt}}} %
\newcommand{\figtopvspace}{{\vspace{-10pt}}} %
\newcommand{\figbottomvspace}{{\vspace{-10pt}}} %
\newcommand{\eqtopvspace}{{\vspace{-8pt}}} %
\title{Graph Coarsening with Neural Networks}
\author{Chen Cai
\thanks{University of California San Diego, \texttt{c1cai@ucsd.edu}} \\
\And
Dingkang Wang
\thanks{Ohio State University, \texttt{wang.6150@osu.edu}}
\And
Yusu Wang
\thanks{%
University of California San Diego, 
\texttt{yusuwang@ucsd.edu}}
}
\begin{document}

\maketitle

\begin{abstract}
As large-scale graphs become increasingly more prevalent, it poses significant computational challenges to process, extract and analyze large graph data. Graph coarsening is one popular technique to reduce the size of a graph while maintaining essential properties. 
Despite rich graph coarsening literature, there is only limited exploration of data-driven methods in the field. In this work, we leverage the recent progress of deep learning on graphs for graph coarsening. We first propose a framework for measuring the quality of coarsening algorithm and show that depending on the goal, we need to carefully choose the Laplace operator on the coarse graph and associated projection/lift operators. Motivated by the observation that the current choice of edge weight for the coarse graph may be sub-optimal, we parametrize the weight assignment map with graph neural networks and train it to improve the coarsening quality in an unsupervised way. Through extensive experiments on both synthetic and real networks, we demonstrate that our method %
significantly improves common graph coarsening methods under various metrics, reduction ratios, graph sizes, and graph types. It generalizes to graphs of larger size ($25\times$ of training graphs), is adaptive to different losses (differentiable and non-differentiable), and scales to much larger graphs than previous work.  
\end{abstract}

\section{Introduction}
Many complex structures can be modeled by graphs, such as social networks, molecular graphs, biological protein-protein interaction networks, knowledge graphs, and recommender systems. As large scale-graphs become increasingly ubiquitous in various applications, they pose significant computational challenges to process, extract and analyze information. It is therefore natural to look for ways to simplify the graph while preserving the properties of interest. 

There are two major ways to simplify graphs. First, one may reduce the number of edges, known as graph edge sparsification. It is known that pairwise distance (spanner), graph cut (cut sparsifier), eigenvalues (spectral sparsifier) can be approximately maintained via removing edges. A key result \citep{spielman2004nearly} in the spectral sparsification is that any dense graph of size $N$ can be sparsified to $O(Nlog^cN/\epsilon^2)$ edges in nearly linear time using a simple randomized algorithm based on the effective resistance. %

Alternatively, one could also reduce the number of nodes to a subset of the original node set. The first challenge here is how to choose the topology (edge set) of the smaller graph spanned by the sparsified node set. 
On the extreme, one can take the complete graph spanned by the sampled nodes. However, its dense structure prohibits easy interpretation and poses computational overhead for setting the $\Theta(n^2)$ weights of edges. This paper focuses on \emph{graph coarsening}, which reduces the number of nodes %
by contracting disjoint sets of connected vertices.
The original idea dates back to the algebraic multigrid literature \citep{ruge1987algebraic} and has found various applications in graph partitioning \citep{hendrickson1995multi, karypis1998fast, kushnir2006fast}, visualization \citep{harel2000fast, hu2005efficient, walshaw2000multilevel} and machine learning \citep{lafon2006diffusion, gavish2010multiscale, shuman2015multiscale}. 

However, most existing graph coarsening algorithms come with two restrictions. First, they are \emph{prespecified} and not adapted to specific data nor different goals. 
Second, most coarsening algorithms set the edge weights of the coarse graph equal to the sum of weights of crossing edges in the original graph. This means the weights of the coarse graph is determined by the coarsening algorithm (of the vertex set), leaving no room for adjustment. 

With the two observations above, we aim to develop a data-driven approach to better assigning weights for the coarse graph depending on specific goals at hand. 
We will leverage the recent progress of deep learning on graphs to develop a framework to learn to assign edge weights \emph{in an unsupervised manner} from a collection of input (small) graphs. This learned weight-assignment map can then be applied to new graphs (of potentially much larger sizes). 
In particular, our contributions are threefold. 
\begin{itemize}[itemsep=0pt,topsep=0pt,leftmargin=10pt] %
\item First, depending on the quantity of interest $\mathcal{F}$ (such as the quadratic form w.r.t. Laplace operator), one has to carefully choose projection/lift operator to relate quantities defined on graphs of different sizes. 
We formulate this as the invariance of $\mathcal{F}$ under lift map, and provide three cases of projection/lift map as well as the corresponding operators on the coarse graph. Interestingly, those operators all can be seen as the special cases of doubly-weighted Laplace operators on coarse graphs \citep{horak2013spectra}. 

\item Second, we are the first to propose and develop a framework to learn the edge weights of the coarse graphs via graph neural networks (GNN) in an unsupervised manner. We show convincing results both theoretically and empirically that changing the weights is crucial to improve the quality of coarse graphs. 

\item Third, through extensive experiments on both synthetic graphs and real networks, we demonstrate that our method \nn significantly improves common graph coarsening methods under different evaluation metrics, reduction ratios, graph sizes, and graph types. It generalizes to graphs of larger size (than the training graphs), adapts to different losses (so as to preserve different properties of original graphs), and scales to much larger graphs than what previous work can handle. Even for losses that are not differentiable w.r.t the weights of the coarse graph, we show training networks with a differentiable auxiliary loss still improves the result. 
\end{itemize}

\section{Related Work}
\textbf{Graph sparsification}. Graph sparsification is firstly proposed to solve linear systems involving combinatorial graph Laplacian efficiently. \cite{spielman2011spectral, spielman2011graph} showed that for any undirected graph $G$ of $N$ vertices, a spectral sparsifier of $G$ with only $O(Nlog^cN/\epsilon^2)$ edges can be constructed in nearly-linear time. \footnote{The algorithm runs in $O(M.\text{polylog}N)$ time, where $M$ and $N$ are the numbers of edges and vertices.} Later on, the time complexity and the dependency on the number of the edges are reduced by various researchers \citep{batson2012twice, allen2015spectral, lee2018constructing, lee2017sdp}. %

\textbf{Graph coarsening}. Previous work on graph coarsening focuses on preserving different properties, usually related to the spectrum of the original graph and coarse graph. \cite{loukas2018spectrally, loukas2019graph} focus on the restricted spectral approximation, a modification of the spectral similarity measure used for graph sparsification. \cite{hermsdorff2019unifying} develop a probabilistic framework to preserve inverse Laplacian.

\textbf{Deep learning on graphs}. As an effort of generalizing convolution neural network to the graphs and manifolds, graph neural networks is proposed to analyze graph-structured data. They have achieved state-of-the-art performance in node classification \citep{kipf2016semi}, knowledge graph completion \citep{schlichtkrull2018modeling}, link prediction \citep{dettmers2018convolutional, gurukar2019network}, combinatorial optimization \citep{li2018combinatorial, khalil2017learning}, property prediction \citep{duvenaud2015convolutional, xie2018crystal} and physics simulation \citep{sanchez2020learning}. 

\textbf{Deep generative model for graphs}.
To generative realistic graphs such as molecules and parse trees, various approaches have been taken to model complex distributions over structures and attributes, such as variational autoencoder \citep{simonovsky2018graphvae, ma2018constrained}, generative adversarial networks (GAN) \citep{de2018molgan, zhou2019misc}, deep autoregressive model \citep{liao2019efficient, you2018graphrnn, li2018learning}, and reinforcement learning type approach \citep{you2018graph}.  \cite{zhou2019misc} proposes a GAN-based framework to preserve the hierarchical community structure via algebraic multigrid method during the generation process. However, different from our approach, the coarse graphs in \cite{zhou2019misc} are not learned.

\section{Proposed Approach: Learning Edge Weight with GNN}
\label{sec:method}

\subsection{High-level overview}
\label{subsec-high-level-overview}

\begin{wrapfigure}{r}{0.28\textwidth}
\vspace{-20pt}
 \begin{center}
 \label{fig-toy-example}
  \includegraphics[width=0.26\textwidth]{./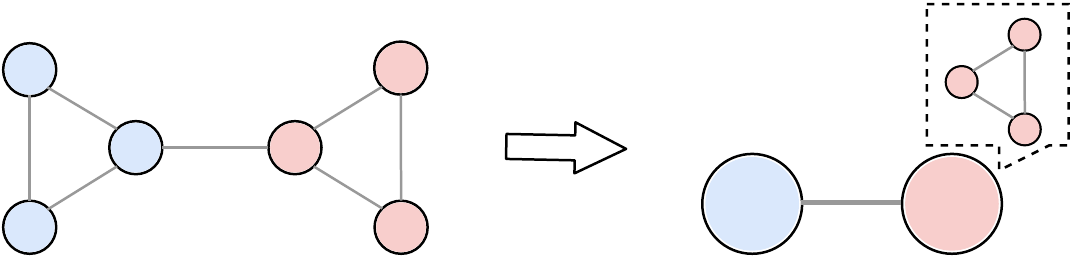}
 \end{center}
\figbottomvspace
\end{wrapfigure}
Our input is a non-attributed (weighted or unweighted) graph $G = (V,E)$. 
Our goal is to construct an appropriate ``coarser" graph $\Ghat = (\Vhat, \Ehat)$ that preserves certain properties of $G$. 
Here, by a ``coarser" graph, we assume that $|\Vhat| << |V|$ and there is a surjective map $\vmap: V \to \Vhat$ that we call the \emph{\vertexmap}. Intuitively, (see figure on the right), for any node $\vhat \in \Vhat$, all nodes $\vmap^{-1}(\vhat) \subset V$ are mapped to this \emph{super-node} $\vhat$ in the coarser graph $\Ghat$. %
We will later propose a GNN based framework that can be trained using a collection of existing graphs \emph{in an unsupervised manner}, so as to construct such a coarse graph $\Ghat$ for a future input graph $G$ (presumably coming from the same family as training graphs) that can preserve properties of $G$ effectively.

We will in particular focus on preserving properties of the \emph{Laplace operator $\genL$} %
of $G$, which is by far the most common operator associated to graphs, and forms the foundation for spectral methods. 
Specifically, given $G = (V = \{v_1, \ldots, v_{\N}\}, E)$ with $w: E\to \reals$ being the weight function for $G$ (all edges have weight 1 if $G$ is unweighted), let $W$ the corresponding $\N \times \N$ edge-weight matrix where $W[i][j] = w(v_i, v_j)$ if edge $(v_i,v_j)\in E$ and $0$ otherwise. 
Set $D$ to be the $N \times \N$ diagonal matrix with $D[i][i]$ equal to the sum of weights of all edges incident to $v_i$. 
The standard \emph{(un-normalized) combinatorial Laplace operator} of $G$ is then defined as $\combL = D - W$. The \emph{normalized Laplacian} is defined as $\mathcal{L} = D^{-1/2}\combL D^{-1/2} = I - D^{-1/2}WD^{-1/2}$. 

However, to make this problem as well as our proposed approach concrete, various components need to be built appropriately. 
We provide an overview here, and they will be detailed in the remainder of this section. 

\begin{itemize}[itemsep=0pt,topsep=0pt,leftmargin=10pt]%
  \item \vspace*{-0.05in}Assuming that the set of super-nodes $\Vhat$ as well as the map $\vmap: V \to \Vhat$ are given, one still need to decide how to set up the connectivity (i.e, edge set $\Ehat$) for the coarse graph $\Ghat = (\Vhat, \Ehat)$. We introduce a natural choice in Section \ref{subsec:coarsegraph}, and provide some justification for this choice. 
  \item As the graph $G$ and the coarse graph $\Ghat$ have the different number of nodes, their Laplace operators $\genL $ and $\sgenL$ of two graphs are not directly comparable. Instead, we will compare $\Fcal(\genL , f)$ and $\Fcal(\sgenL, \fhat)$, where $\Fcal$ is a functional intrinsic to the graph at hand (invariant to the permutation of vertices), such as the quadratic form or Rayleigh quotient. 
  However, it turns out that depending on the choice of $\Fcal$, we need to choose the precise form of the Laplacian $\sgenL$, as well as the (so-called lifting and projection) maps relating these two objects, carefully, so as they are comparable. We describe these in detail in Section \ref{subsec:coarseLaplace}. 
  \item In Section \ref{subsec:weights} we show that adjusting the weights of the coarse graph $\Ghat$ can significantly improve the quality of $\Ghat$. This motivates a learning approach to learn a strategy (a map) to assign these weights from a collection of given graphs. We then propose a GNN-based framework to do so in an unsupervised manner. Extensive experimental studies will be presented in Section \ref{sec:exp}.  
\end{itemize}

\subsection{Construction of Coarse graph } %
\label{subsec:coarsegraph}

Assume that we are already given the set of super-nodes $\Vhat = \{\vhat_1, \ldots, \vhat_{\n}\}$ for the coarse graph $\Ghat$ together with the \vertexmap{} $\pi: V\to \Vhat$ -- There has been much prior work on computing the sparsified set $\Vhat \subset V$ and $\pi$ \citep{loukas2018spectrally, loukas2019graph}; and if
the \vertexmap{} $\pi$ is not given, then we can simply define it by setting $\pi(v)$ for each $v\in V$ to be the nearest neighbor of $v$ in $\Vhat$ in terms of graph shortest path distance in $G$ \citep{dey2013graph}.

To construct edges for the coarse graph $\Ghat = (\Vhat, \Ehat)$ together with the edge weight function $\what: \Ehat\to \reals$, instead of using a complete weighted graph over $\Vhat$, which is too dense and expensive, we set $\Ehat$ to be those edges ``induced" from $G$ when collapsing each \emph{cluster $\vmap^{-1}(\vhat)$} to its corresponding super-node $\vhat \in \Vhat$: Specifically, $(\vhat, \vhat') \in \Ehat$ if and only if there is an edge $(v, v') \in E$ such that $\vmap(v) = \vhat$ and $\vmap(v') = \vhat'$. 
The weight of this edge is  
$\what(\vhat, \vhat') := \sum_{(v, v')\in E\big(\vmap^{-1}(\vhat), \vmap^{-1}(\vhat') \big) } w(v, v')$
where $E(A, B) \subseteq E$ stands for the set of edges crossing sets $A, B \subseteq V$; 
i.e., $\what(\vhat,\vhat')$ is the total weights of all crossing edges in $G$ between clusters $\vmap^{-1}(\vhat)$ and $\vmap^{-1}(\vhat')$ in $V$. 
We refer to $\Ghat$ constructed this way the \emph{$\Vhat$-induced coarse graph}. 
As shown in \cite{dey2013graph}, if the original graph $G$ is the $1$-skeleton of a hidden space $X$, then this induced graph captures the topological of $X$ at a coarser level if $\Vhat$ is a so-called $\delta$-net of the original vertex set $V$ w.r.t. the graph shortest path metric. 

Let $\What$ be the edge weight matrix, and $\Dhat$ be the diagonal matrix encoding the sum of edge weights incident to each vertex as before. Then the standard combinatorial Laplace operator w.r.t. $\Ghat$ is simply $\scombL = \Dhat - \What$. 

{{\bf Relation to the operator of \citep{loukas2019graph}.~}}
Interestingly, this construction of the coarse graph $\Ghat$ coincides with the coarse Laplace operator for a sparsified vertex set $\Vhat$ constructed by \cite{loukas2019graph}. 
We will use this view of the Laplace operator later; hence we briefly introduce the construction of \cite{loukas2019graph} (adapted to our setting): 
Given the \vertexmap{} $\pi: V\to \Vhat$, we set a $\n \times \N$ matrix $P$ by $P[r, i] = \left\{\begin{array}{ll}\frac{1}{\left|\vmap^{-1}(\vhat_r)\right|} & \text { if } v_{i} \in \vmap^{-1}(\vhat_r) \\ 0 & \text { otherwise }\end{array}\right.$.
In what follows, we denote $\gamma_r:= \left|\vmap^{-1}(\vhat_r)\right|$ for any $r\in [1,\n]$, which is the size of the cluster of $\vhat_r$ in $V$. 
$P$ can be considered as the weighted projection matrix of the vertex set from $V$ to $\Vhat$. 
Let $P^+$ denote the Moore-Penrose pseudoinverse of $P$, which can be intuitively viewed as a way to lift a function on $\Vhat$ (a vector in $\reals^\n$) to a function over $V$ (a vector in $\reals^\N$). 
As shown in \cite{loukas2019graph}, $P^+$ is the $\N \times \n$ matrix where $P^+[i, r] = 1$ if and only if $\vmap(v_i) = \vhat_r$. See Appendix \ref{appendix-missing-proof} for a toy example.
Finally, \cite{loukas2019graph} defines an operator for the coarsened vertex set $\Vhat$ to be
$\tilde{L}_\Vhat = (P^+)^T \combL P^+$. %
Intuitively, $\LtildeVhat$ operators on $\n$-vectors. For any $\n$-vector $\fhat \in \reals^\n$, $\tilde{L}_\Vhat \fhat$ first lifts $\fhat$ to a $\N$-vector $f = P^+ \fhat$, and then perform $\combL $ on $f$, and then project it down to $\n$-dimensional via $(P^+)^T$.

\begin{proposition}\citep{loukas2019graph}
The combinatorial graph Laplace operator $\Lhat = \Dhat - \What$ for the $\Vhat$-induced coarse graph $\Ghat$ constructed above equals to the operator $\tilde{L}_\Vhat = (P^+)^T \combL P^+$. 
\end{proposition}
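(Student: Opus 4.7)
The plan is to verify the identity entry-by-entry by recognizing that $P^+$ has a very simple combinatorial meaning: its $r$th column is precisely the indicator vector $\chi_r \in \reals^{\N}$ of the cluster $\vmap^{-1}(\vhat_r) \subseteq V$. This follows immediately from the given description $P^+[i,r]=1 \Leftrightarrow \vmap(v_i)=\vhat_r$. Consequently, $[(P^+)^T \combL P^+]_{r,s} = \chi_r^T \combL \, \chi_s$, so the whole computation reduces to evaluating a quadratic form in indicator vectors of the partition.

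First I would apply the standard Dirichlet-energy identity
\[
\chi_r^T \combL \, \chi_s \;=\; \sum_{(v_i,v_j)\in E} w(v_i,v_j)\,\bigl(\chi_r[i]-\chi_r[j]\bigr)\bigl(\chi_s[i]-\chi_s[j]\bigr),
\]
where the sum is over unordered edges. This single identity already handles the tricky bookkeeping: for any edge $(v_i,v_j)$ whose endpoints lie in the same cluster, both differences vanish, so intra-cluster edges do not contribute. This is the key structural cancellation and the place where the definitions of $\combL=D-W$ and of the induced coarse graph mesh together.

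Next I would split into cases. For $r \neq s$: an edge $(v_i,v_j)\in E$ contributes nontrivially only when one endpoint lies in $\vmap^{-1}(\vhat_r)$ and the other in $\vmap^{-1}(\vhat_s)$, in which case the product of differences equals $-1$. Summing these weights gives exactly $-\what(\vhat_r,\vhat_s)$ by the definition of the induced weight in Section~\ref{subsec:coarsegraph}, matching $\scombL[r,s] = -\What[r,s]$. For $r=s$: an edge contributes $+1$ iff exactly one endpoint lies in $\vmap^{-1}(\vhat_r)$, so the sum equals the total weight of edges of $G$ crossing out of the cluster $\vmap^{-1}(\vhat_r)$. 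By the induced-weight definition this is precisely $\sum_{s' \neq r} \what(\vhat_r,\vhat_{s'}) = \Dhat[r,r]$, which matches $\scombL[r,r]=\Dhat[r,r]-\What[r,r]=\Dhat[r,r]$ (using the convention that the coarse graph has no self-loops, so $\What[r,r]=0$).

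The main obstacle, minor as it is, is being careful with conventions: one must check that the intra-cluster edges of $G$ really do drop out on both sides --- on the $(P^+)^T L P^+$ side via the Dirichlet-energy cancellation, and on the $\Dhat-\What$ side because the induced construction produces no self-loops at super-nodes and $\Dhat$ sums only the inter-cluster edge weights. Once these conventions are pinned down, the off-diagonal and diagonal case analyses above give the equality $\scombL = (P^+)^T \combL P^+$ at every entry, completing the proof.
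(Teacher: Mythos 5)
The paper does not actually give a proof of this proposition: it is attributed to \cite{loukas2019graph} and used as a black box, and the appendix only proves the subsequent statements about quadratic forms, Rayleigh quotients, and the normalized Laplacian. So there is no paper proof to compare against, and you have filled a gap rather than duplicated an argument.

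Your proof is correct and self-contained. The key observation — that the $r$th column of $P^+$ is the indicator vector $\chi_r$ of the cluster $\vmap^{-1}(\vhat_r)$, so $\bigl[(P^+)^T \combL P^+\bigr]_{r,s} = \chi_r^T \combL\,\chi_s$ — reduces the claim to evaluating the Dirichlet bilinear form on partition indicators, and the case split on where the endpoints of each edge of $G$ fall is exactly the right bookkeeping. Two small remarks. First, your worry about the self-loop convention is real but ultimately harmless: even if one allows the induced coarse graph to carry a self-loop of weight $\What[r,r]$ equal to the total intra-cluster weight, that contribution appears in both $\Dhat[r,r]$ and $\What[r,r]$ with the standard convention $\Dhat[r,r]=\sum_{s'}\What[r,s']$, so it cancels in $\Lhat[r,r]=\Dhat[r,r]-\What[r,r]=\sum_{s'\neq r}\what(\vhat_r,\vhat_{s'})$, which is exactly the quantity your Dirichlet computation produces on the diagonal. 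Second, since the argument is entrywise it would be cleaner to state explicitly that the bilinear (not just quadratic) Dirichlet identity $x^T\combL y=\sum_{(v_i,v_j)\in E} w(v_i,v_j)(x_i-x_j)(y_i-y_j)$ is being used for $r\neq s$; this is a one-line consequence of $\combL=D-W$ and the symmetry of $W$, but worth naming. With those clarifications, the proof is complete and, given that the paper defers this statement to prior work, is a useful addition.
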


\subsection{Laplace operator for the coarse graph } %
\label{subsec:coarseLaplace} 

We now have an input graph $G = (V, E)$ and a coarse graph $\Ghat$ induced from the sparsified node set $\Vhat$, and we wish to compare their corresponding Laplace operators. However, as $\genL $ operates on $\reals^\N$ (i.e, functions on the vertex set $V$ of $G$) and $\sgenL$ operates on $\reals^\n$, we will compare them by their effects on ``corresponding" objects. %
\cite{loukas2018spectrally, loukas2019graph} proposed to use the quadratic form to measure the similarity between the two linear operators. In particular, given a linear operator $A$ on $\reals^N$ and any $x\in \reals^N$, $\Qform_A(x) = x^T A x$. The quadratic form has also been used for measuring spectral approximation under edge sparsification. 
The proof of the following result is in Appendix \ref{appendix-missing-proof}. 

\begin{proposition}\label{prop:qform}
For any vector $\sx \in \reals^\n$, we have that $\Qform_{\LtildeVhat}(\sx) = \Qform_{\combL }(P^+ \sx)$, where $\Lhat$ is the combinatorial Laplace operator for the $\Vhat$-induced coarse graph $\Ghat$ constructed above. 
That is, set $\bx:= P^+ \sx$ as the lift of $\sx$ in $\reals^\N$, then $\sx^T \LtildeVhat \sx = \bx^T \combL \bx$. 

\end{proposition}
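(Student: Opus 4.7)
The plan is to observe that this proposition is essentially an immediate consequence of the preceding proposition (which identifies $\LtildeVhat$ with $(P^+)^T \combL P^+$), combined with the elementary algebraic fact that $(AB)^T = B^T A^T$. So the entire argument reduces to a one-line chain of equalities rather than any substantive new idea.

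Specifically, I would begin by invoking the earlier proposition to rewrite $\LtildeVhat = (P^+)^T \combL P^+$. Then I would substitute into the definition of the quadratic form: $\Qform_{\LtildeVhat}(\sx) = \sx^T \LtildeVhat \sx = \sx^T (P^+)^T \combL P^+ \sx$. Using the transpose identity, this regroups as $(P^+ \sx)^T \combL (P^+ \sx)$, which by definition is $\Qform_{\combL}(P^+ \sx)$. Setting $\bx := P^+\sx$ gives the final stated form $\sx^T \LtildeVhat \sx = \bx^T \combL \bx$.

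There is no real obstacle here; the only thing one has to be careful about is that the earlier proposition is genuinely available (which it is, since it appears just above in the text) and that the transpose manipulation $(P^+ \sx)^T = \sx^T (P^+)^T$ is applied correctly. Everything else is a formal rewriting. If anything, the more conceptually interesting content lies not in this proof but in the interpretation: the proposition says that comparing quadratic forms of $\genL$ and $\sgenL$ via the lifting map $P^+$ is exactly the right notion of ``preservation'' when the quantity of interest $\Fcal$ is the quadratic form, thereby justifying the choice of $\LtildeVhat$ (equivalently $\scombL = \Dhat - \What$) as the coarse Laplacian in this setting.
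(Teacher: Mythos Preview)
Your proposal is correct and essentially identical to the paper's own proof: both use the preceding identification $\LtildeVhat = (P^+)^T \combL P^+$ and then expand the quadratic form via $(P^+\sx)^T = \sx^T (P^+)^T$ to obtain the one-line chain of equalities. The only cosmetic difference is that the paper reads the chain starting from $\Qform_{\combL}(P^+\sx)$ and ending at $\Qform_{\LtildeVhat}(\sx)$, while you go in the reverse direction.
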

Intuitively, this suggests that if later, we measure the similarity between $\combL $ and some Laplace operator for the coarse graph $\Ghat$ based on a loss from quadratic form difference, then we should choose the Laplace operator $\sgenL$ to be $\LtildeVhat$ and compare $\Qform_{\scombL}(P \bx)$ with $\Qform_{\combL} (\bx)$. 
We further formalize this by considering the \emph{lifting map} $\lift: \reals^\n \to \reals^\N$ as well as a projection map $\proj: \reals^\N \to \reals^\n$, where $\proj \cdot \lift = Id_{\n}$. Proposition \ref{prop:qform} suggests that for quadratic form-based similarity, the choices are $\lift = P^+, \proj = P$, and $\sgenL = \LtildeVhat$. See the first row in Table \ref{tab:operator}. 

\begin{table}[htp!]
\tabtopvspace
\renewcommand{\arraystretch}{1.2}
\centering
\caption{Depending on the choice of $\mathcal{F}$ (quantity that we want to preserve) and $\genL$, we have different projection/lift operators and resulting $\sgenL$ on the coarse graph.}
\label{tab:operator}
\resizebox{0.8\textwidth}{!}{
\begin{tabular}{@{}llllll@{}}
\toprule
Quantity $\mathcal{F}$ of interest & $\genL$ & Projection $\proj$ & Lift $\lift$ & $\sgenL$  & Invariant under $\lift$ \\ \midrule
Quadratic form $\Qform$ & $L$ & $P$  & ${P^{+}}$ & Combinatorial Laplace $\widehat{L}$ & $\Qform_L(\lift \sx) = \Qform_{\widehat{L}}(\sx)$\\
 Rayleigh quotient $\RQ$ & $L$ & $\Gamma^{-1/2}{{(P^{+})}^T}$ & ${P^{+}}\Gamma^{-1/2}$ & Doubly-weighted Laplace $\sdwL$ & $\RQ_L(\lift \sx) = \RQ_{\sdwL} (\sx)$\\
 Quadratic form $\Qform$ & $\mathcal{L}$ & $ \widehat{D}^{1/2}PD^{-1/2} $ & $ D^{1/2}{(P^{+})} \widehat{D}^{-1/2}$ & Normalized Laplace $\widehat{\mathcal{L}}$ & $\Qform_\mathcal{L}(\lift \sx) = \Qform_{\widehat{\mathcal{L}}}(\sx)$\\ \bottomrule
\end{tabular}
}
\vspace{-5pt}
\end{table}

On the other hand, eigenvectors and eigenvalues of a linear operator $A$ are more directly related, via Courant-Fischer Min-Max Theorem, to its Rayleigh quotient $\RQ_A (x) = \frac{x^T A x}{x^T x}$. 
Interestingly, in this case, to preserve the Rayleigh quotient, we should change the choice of $\sgenL$ to be the following \emph{doubly-weighted Laplace operator} for a graph that is both edge and vertex weighted. 

Specifically, for the coarse graph $\Ghat$, we assume that each vertex $\vhat \in \Vhat$ is weighted by $\gamma_\vhat := |\vmap^{-1}(\vhat)|$, the size of the cluster from $G$ that got collapsed into $\vhat$. Let $\Gamma$ be the vertex matrix, which is the $\n \times \n$ diagonal matrix with $\Gamma[r][r] = \gamma_{\vhat_r}$. 
The \emph{doubly-weighted Laplace operator} for a vertex- and edge-weighted graph $\Ghat$ is then defined as: 
\vspace{-3pt}
\begin{align*}\label{eqn:doublyweightedL}
  \sdwL = \Gamma^{-1/2} (\Dhat - \What) \Gamma^{-1/2} = \Gamma^{-1/2} \LtildeVhat \Gamma^{-1/2} = (P^+ \Gamma^{-1/2})^T \combL (P^+ \Gamma^{-1/2}). 
\end{align*}
The concept of doubly-weighted Laplace for a vertex- and edge-weighted graph is not new, see e.g \cite{chung1996combinatorial, horak2013spectra, xu2019weighted}. In particular, \cite{horak2013spectra} proposes a general form of combinatorial Laplace operator for a simplicial complex where all simplices are weighted, and our doubly-weighted Laplace has the same eigenstructure as their Laplacian when restricted to graphs. %
See Appendix \ref{simplicial-laplace-appendix} for details. 
Using the doubly-weighted Laplacian for Rayleigh quotient based similarity measurement between the original graph and the coarse graph is justified by the following result (proof in Appendix \ref{simplicial-laplace-appendix}). 
\begin{proposition}\label{prop:Rayleyquotient}
For any vector $x \in \reals^\n$, we have that $\RQ_{\sdwL}(\sx) = \RQ_{\combL }(P^+ \Gamma^{-1/2} \sx)$. That is, set the lift of $\sx$ in $\reals^\N$ to be $\bx = P^+ \Gamma^{-1/2} \sx$, then we have that 
$\frac{\sx^T \sdwL \sx}{\sx^T \sx} = \frac{\bx^T \combL \bx}{\bx^T \bx}$. 
\end{proposition}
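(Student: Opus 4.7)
The plan is a direct verification: show that numerator and denominator of $\RQ_{\sdwL}(\sx)$ and $\RQ_{\combL}(\bx)$ agree separately, where $\bx = P^+\Gamma^{-1/2}\sx$.

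First, for the numerator I would simply plug in the factored form of the doubly-weighted Laplace operator provided just above the proposition, namely
\[
\sdwL \;=\; (P^+ \Gamma^{-1/2})^T \, \combL \, (P^+ \Gamma^{-1/2}).
\]
Then $\sx^T \sdwL \sx = (P^+\Gamma^{-1/2}\sx)^T \combL (P^+\Gamma^{-1/2}\sx) = \bx^T \combL \bx$. This is essentially the same calculation that underlies Proposition~\ref{prop:qform}; no new idea is needed.

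The nontrivial step is showing the denominators agree, i.e.\ $\sx^T \sx = \bx^T \bx$. Expanding,
\[
\bx^T \bx \;=\; \sx^T \,\Gamma^{-1/2}(P^+)^T P^+ \Gamma^{-1/2}\, \sx,
\]
so it suffices to establish the identity $(P^+)^T P^+ = \Gamma$. This is where I would spend the bulk of the argument, and it is the one place the specific structure of $P^+$ matters. Recall from Section~\ref{subsec:coarsegraph} that $P^+$ is the $\N\times \n$ indicator matrix with $P^+[i,r]=1$ iff $\vmap(v_i)=\vhat_r$, so each row of $P^+$ has exactly one nonzero entry. Therefore
\[
\bigl((P^+)^T P^+\bigr)[r,s] \;=\; \sum_{i=1}^{\N} P^+[i,r]\,P^+[i,s] \;=\; \bigl|\{i: \vmap(v_i)=\vhat_r=\vhat_s\}\bigr|,
\]
which is $0$ if $r\neq s$ and $\gamma_r = |\vmap^{-1}(\vhat_r)|$ if $r=s$. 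Hence $(P^+)^T P^+ = \Gamma$, and consequently $\Gamma^{-1/2}(P^+)^T P^+ \Gamma^{-1/2} = I_\n$, giving $\bx^T\bx = \sx^T\sx$.

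Combining the two computations yields
\[
\RQ_{\sdwL}(\sx) \;=\; \frac{\sx^T \sdwL \sx}{\sx^T \sx} \;=\; \frac{\bx^T \combL \bx}{\bx^T \bx} \;=\; \RQ_{\combL}(\bx),
\]
which is the claim. I do not expect any real obstacle; the only subtlety is recognizing (and carefully justifying) $(P^+)^T P^+ = \Gamma$ from the explicit indicator description of $P^+$. If desired, this identity can also be seen as a consequence of the fact that $P$ is the row-normalization of $(P^+)^T$: one has $P = \Gamma^{-1}(P^+)^T$, so $PP^+ = \Gamma^{-1}(P^+)^T P^+ = I_\n$ forces $(P^+)^T P^+ = \Gamma$.
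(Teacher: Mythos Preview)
Your proposal is correct and follows essentially the same approach as the paper: separately match the numerators via the factorization $\sdwL = (P^+\Gamma^{-1/2})^T \combL (P^+\Gamma^{-1/2})$ and the denominators via $\Gamma^{-1/2}(P^+)^T P^+ \Gamma^{-1/2} = I_\n$. The paper's proof simply asserts the latter identity in one line, whereas you spell out the computation $(P^+)^T P^+ = \Gamma$ from the indicator structure of $P^+$; this extra justification is a welcome addition but not a different route.
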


Finally, if using the normalized Laplace $\Lcal$ for the original graph $G$, then the appropriate Laplace operator for the coarse graph and corresponding projection/lift maps are listed in the last row of Table \ref{tab:operator}, with proofs in Appendix \ref{appendix-missing-proof}. 

\subsection{A GNN-based framework for learning for constructing the coarse graph } \label{subsec:weights}

\begin{figure}[htbp]
\begin{center}
\centering
\figtopvspace
\includegraphics[width=0.8\linewidth]{./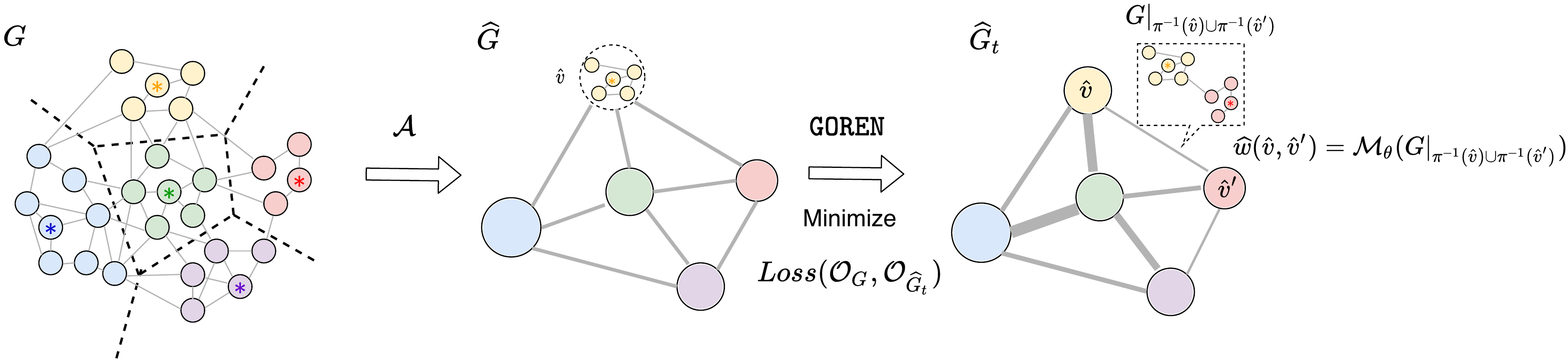}
\vspace*{-0.05in}\caption{An illustration of learnable coarsening framework. Existing coarsening algorithm determines the topology of coarse graph $\Ghat$, while \nn resets the edge weights of the coarse graph. }
\label{framework-fig}
\end{center}
\figbottomvspace
\end{figure}

In the previous section, we argued that depending on what similarity measures we use, appropriate Laplace operator $\sgenL$ for the coarse graph $\Ghat$ should be used. 
Now consider the specific case of Rayleigh quotient, which can be thought of as a proxy to measure similarities between the low-frequency eigenvalues of the original graph Laplacian and the one for the coarse graph. As described above, here we set $\sgenL$ as the doubly-weighted Laplacian $\sdwL = \Gamma^{-1/2} (\Dhat - \What) \Gamma^{-1/2}$. 

\paragraph{The effect of weight adjustments. }
\begin{wrapfigure}{r}{0.20\textwidth}
\figtopvspace
 \begin{center}
\includegraphics[width=0.19\textwidth]{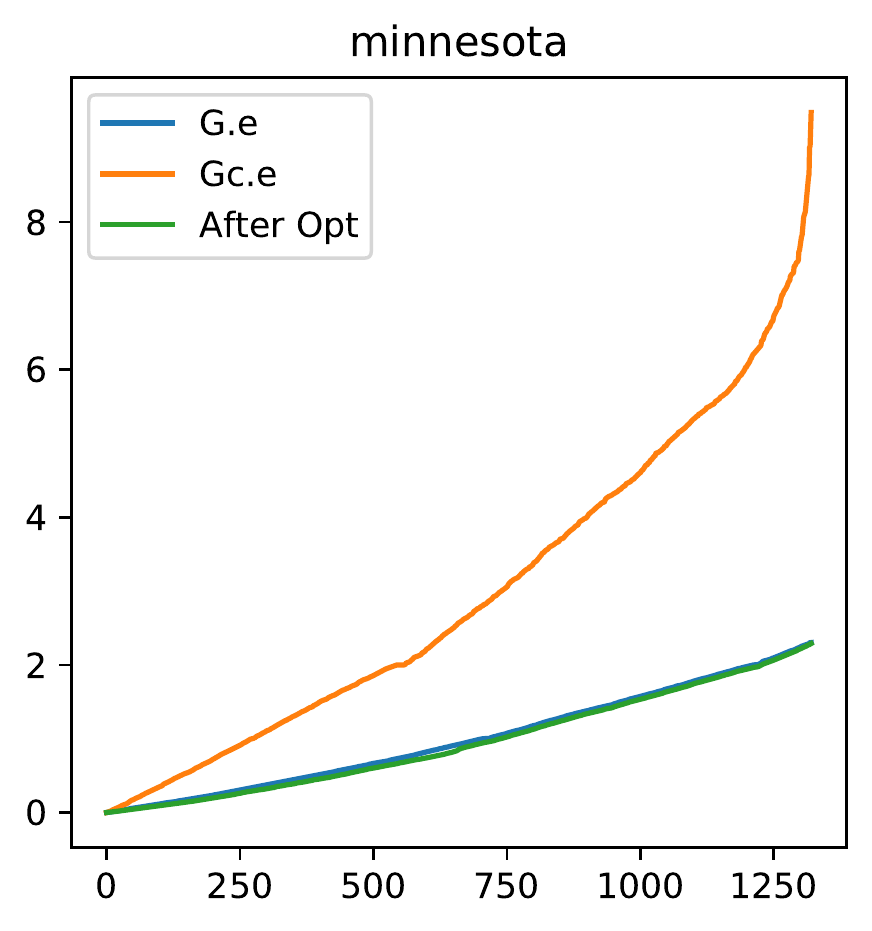}
 \end{center}
\figbottomvspace
\end{wrapfigure}

 We develop an iterative algorithm with convergence guarantee (to KKT point in \ref{convergence-appendix}) for optimizing over edge weights of $\Ghat$ for better spectrum alignment. %
As shown in the figure on the right, after changing the edge weight of the coarse graph, the resulting graph Laplacian has eigenvalues much closer (almost identical) to the first $\n$ eigenvalues of the original graph Laplacian. More specifically, in this figure, $G.e$ and $Gc.e$ stand for the eigenvalues of the original graph $G$ and coarse graph $\Ghat$ constructed by the so-called Variation-Edge coarsening algorithm \citep{loukas2019graph}. ``After-Opt" stands for the eigenvalues of coarse graphs when weights are optimized by our iterative algorithm. See Appendix \ref{iterative-algo-appendix} for the description of our iterative algorithm, its convergence results, and full experiment results.

{{\bf {A GNN-based framework for learning weight assignment map.}~}
The discussions above indicate that we can obtain better Laplace operators for the coarse graph by using better-informed weights than simply summing up the weights of crossing edges from the two clusters. 
More specifically, suppose we have a fixed strategy to generate $\Vhat$ from an input graph $G = (V, E)$. 
Now given an edge $(\vhat, \vhat') \in \Ehat$ in the induced coarse graph $\Ghat = (\Vhat, \Ehat)$, we model its weight $\what(\vhat, \vhat')$ by a \emph{weight-assignment function} $\mu( G|_{\vmap^{-1}(\vhat) \cup \vmap^{-1}(\vhat')} )$, where $G|_A$ is the subgraph of $G$ induced by a subset of vertices $A$. 
However, it is not clear how to setup this function $\mu$. Instead, we will learn it from a collection of input graphs in an \emph{unsupervised} manner. 
Specifically, we will parametrize the weight-assignment map $\mu$ by a learnable neural network $\myMu$. 
See Figure \ref{framework-fig} for an illustration.

In particular, we use Graph Isomorphism Network (GIN) \citep{xu2018powerful} to represent $\myMu$. %
We initialize the model by setting the edge attribute of the coarse graph to be 1. %
Our node feature is set to be a 5-dimensional vector based on LDP (Local Degree Profile) \citep{cai2018simple}. We enforce the learned weight of the coarse graph to be positive by applying one extra ReLU layer to the final output. %
All models are trained with Adam optimizer with a learning rate of 0.001. %
See Appendix \ref{model-detail} for more details. We name our model as \textbf{G}raph c\textbf{O}arsening \textbf{R}efinem\textbf{E}nt \textbf{N}etwork (\nntight). %

Given a graph $G$ and a coarsening algorithm $\mathcal{A}$, the general form of loss is
\eqtopvspace
\begin{align}
Loss (\genL, \mathcal{O}_{\widehat{G_t}}) = \frac{1}{k} \sum_{i=1}^{k} | \mathcal{F}( \mathcal{O}_G ,f_i) - \mathcal{F} (\mathcal{O}_{\widehat{G_t}} , \proj f_i) |, 
\end{align}
where $f_i$ is signal on the original graph (such as eigenvectors) and $\proj f_i$ is its projection. We use $\mathcal{O}_{\widehat{G_t}}$ to denote the operator of the coarse graph \textit{during training}, while $\mathcal{O}_{\widehat{G}}$ standing for the operator defined w.r.t. the coarse graph output by coarsening algorithm $\mathcal{A}$. That is, we will start with $\mathcal{O}_{\Ghat}$ and modify it to $\mathcal{O}_{\widehat{G_t}}$ during the training.
The loss can be instantiated for different cases in Table \ref{tab:operator}. For example, a loss based on quadratic form means that we choose $\mathcal{O}_G, \mathcal{O}_{\widehat{G_t}}$ to be the combinatorial Laplacian of $G$ and $\widehat{G_t}$, and the resulting \emph{quadratic loss} has the form:  
\eqtopvspace
\begin{align} \label{loss-equ}
Loss (L, \hL_t) = \frac{1}{k} \sum_{i=1}^{k} |f_i^T L f_i - (Pf_i)^T \hL_t (Pf_i)|. 
\end{align}
It can be seen as a natural analog of the loss for spectral sparsification in the context of graph coarsening, which is also adopted in \cite{loukas2019graph}. 
Similarly, one can use a loss based on the Rayleigh quotient, by choosing $\mathcal{F}$ from the second row of Table \ref{tab:operator}. 
Our framework for graph coarsening is flexible. Many different loss functions can be used as long as it is differentiable in the weights of the coarse graph. we will demonstrate this point in Section \ref{flex}. %

Finally, given a collection of training graphs $G_1, \ldots, G_m$, we will train for parameters in the module $\myMu$ to minimize the total loss on training graphs. 
When a test graph $G_{test}$ is given, we simply apply $\myMu$ to set up weight for each edge in $\widehat{G_{test}}$, obtaining a new graph $\widehat{G_{test, t}}$. We compare $Loss (\mathcal{O}_{G_{test}}, \mathcal{O}_{\widehat{G_{test, t}}})$ against $Loss (\mathcal{O}_{G_{test}}, \mathcal{O}_{\widehat{G_{test}}})$ and expect the former loss is smaller.

\section{Experiments}
\label{sec:exp}

In the following experiments, we apply six existing coarsening algorithms to obtain the coarsened vertex set $\Vhat$, which are Affinity \citep{livne2012lean}, Algebraic Distance \citep{chen2011algebraic}, Heavy edge matching \citep{dhillon2007weighted,ron2011relaxation}, as well as two local variation methods based on edge and neighborhood respectively \citep{loukas2019graph}, and a simple baseline (BL); See Appendix \ref{methods} for detailed descriptions. The two local variation methods are considered to be state-of-the-art graph coarsening algorithms \cite{loukas2019graph}. We show that our \nn framework can improve the qualities of coarse graphs produced by these methods. 

\subsection{Proof of Concept}
\label{sec:proof-of-concept}

\begin{wraptable}[8]{r}{7.5cm}
\vspace{-20pt}
\scriptsize
\centering
\renewcommand{\arraystretch}{1.2}
\caption{The error reduction after applying \nntight.}
\label{better-fit-short}
\begin{tabular}{@{}llllll@{}}
\toprule
 Dataset  & Affinity & \makecell[l]{Algebraic \\ Distance} &  \makecell[l]{Heavy \\ Edge} & \makecell[l]{Local var \\ (edges)} & \makecell[l]{Local var \\ (neigh.)} \\
\midrule 
   Airfoil &    91.7\% &  88.2\% &  86.1\%  & 43.2\% & 73.6\% \\
  Minnesota &    49.8\% &  57.2\% &  30.1\%  & 5.50\%  & 1.60\% \\ 
    Yeast &    49.7\% &  51.3\% &  37.4\% & 27.9\% & 21.1\% \\ 
    Bunny &    84.7\% &  69.1\% &  61.2\%  & 19.3\% & 81.6\%\\
\bottomrule
\end{tabular}
\end{wraptable}

As proof of concept, we show that \nn can improve common coarsening methods on multiple graphs (see \ref{loukas-data} for details).
Following the same setting as \cite{loukas2019graph}, we use the relative eigenvalue error as evaluation metric. It is defined as
$ \frac{1}{k} \sum_{i=1}^{k} \frac{\left|\widehat{\lambda}_{i}-\lambda_{i}\right|}{\lambda_{i}}$, where $\lambda_i, \widehat{\lambda}_i$ denotes eigenvalues of combinatorial Laplacian $L$ for $G$ and doubly-weighted Laplacian $\mathsf{\Lhat}$ for $\Ghat$ respectively, and $k$ is set to be 40. For simplicity, this error is denoted as \textit{Eigenerror} in the remainder of the paper. Denote the Eigenerror of graph coarsening method as $l_1$ and Eigenerror obtained by \nn as $l_2$. In Table \ref{better-fit-short}, we show the \textit{error-reduction ratio}, defined as $\frac{l_1-l_2}{l_1}$. The ratio is upper bounded by 100\% in the case of improvement (and the larger the value is, the better); but it is not lower bounded. %

Since it is hard to directly optimize Eigenerror, the loss function we use in our \nn set to be the \emph{Rayleigh loss} $Loss (\genL, \mathcal{O}_{\widehat{G_t}}) = \frac{1}{k} \sum_{i=1}^{k} | \mathcal{F}( \mathcal{O}_G ,f_i) - \mathcal{F} (\mathcal{O}_{\widehat{G_t}} , \proj f_i) |$ where $\mathcal{F}$ is Rayleigh quotient, $\proj = \Gamma^{-1/2} \ppt$ and $\mathcal{O}_{\Ghat_t}$ being doubly-weighted Laplacian $\sdwL_t$. %
In other words, We use Rayleigh loss as a differentiable proxy for the Eigenerror. As we can see in Table \ref{better-fit-short}, \nn reduces the Eigenerror by a large margin for \textit{training} graphs, which serves as a sanity check for our framework, as well as for using Rayleigh loss as a proxy for Eigenerror. Due to space limit, see Table \ref{better-fit} for full results where we reproduce the results in \cite{loukas2019graph} up to small differences. 
In Table \ref{nondiff-table}, we will demonstrate this training strategy also generalizes well to unseen graphs. 

\begin{table}
\renewcommand{\arraystretch}{1.2}
\tabtopvspace
\caption{Loss: quadratic loss. Laplacian: combinatorial Laplacian for both original and coarse graphs. Each entry $x (y)$ is: $x =$ loss w/o learning, and $y =$ improvement percentage.} 
\label{loss_quadratic_lap_none_eigen_False_ratio_5}
\begin{center}
\resizebox{0.8\textwidth}{!}{
\begin{tabular}{@{}lllllllll@{}}
\addlinespace[-\aboverulesep] 
  \cmidrule[\heavyrulewidth]{2-8}
  & Dataset &   BL & Affinity & \makecell[l]{Algebraic \\ Distance} &  \makecell[l]{Heavy \\ Edge} & \makecell[l]{Local var \\ (edges)} & \makecell[l]{Local var \\ (neigh.)} \\ 
  \cmidrule{2-8}
  \parbox[t]{2mm}{\multirow{4}{*}{\rotatebox[origin=c]{90}{Synthetic}}} & BA &  0.44 (16.1\%) &  0.44 (4.4\%) &   0.68 (4.3\%) &  0.61 (3.6\%) &    0.21 (14.1\%) &    0.18 (72.7\%) \\
  & ER &   0.36 (1.1\%) &  0.52 (0.8\%) &   0.35 (0.4\%) &  0.36 (0.2\%) &    0.18 (1.2\%) &     0.02 (7.4\%) \\
  & GEO &  0.71 (87.3\%) &  0.20 (57.8\%) &  0.24 (31.4\%) & 0.55 (80.4\%) &    0.10 (59.6\%) &    0.27 (65.0\%) \\
  & WS &  0.45 (62.9\%) & 0.09 (82.1\%) &  0.09 (60.6\%) &      0.52 (51.8\%) &    0.09 (69.9\%) &    0.11 (84.2\%) \\
  \cmidrule{2-8}
  \parbox[t]{2mm}{\multirow{5}{*}{\rotatebox[origin=c]{90}{Real}}} & CS &  0.39 (40.0\%) & 0.21 (29.8\%) &  0.17 (26.4\%) &  0.14 (20.9\%) &    0.06 (36.9\%) &     0.0 (59.0\%) \\
    & Flickr &  0.25 (10.2\%) &  0.25 (5.0\%) &   0.19 (6.4\%) &   0.26 (5.6\%) &    0.11 (11.2\%) &    0.07 (21.8\%) \\
  & Physics &  0.40 (47.4\%) & 0.37 (42.4\%) &  0.32 (49.7\%) &  0.14 (28.0\%) &    0.15 (60.3\%) &     0.0 (-0.3\%) \\
  & PubMed &  0.30 (23.4\%) & 0.13 (10.5\%) &  0.12 (15.9\%) &  0.24 (10.8\%) &    0.06 (11.8\%) &    0.01 (36.4\%) \\
      & Shape &  0.23 (91.4\%) & 0.08 (89.8\%) &  0.06 (82.2\%) & 0.17 (88.2\%) &    0.04 (80.2\%) &    0.08 (79.4\%) \\
  \cmidrule[\heavyrulewidth]{2-8} \addlinespace[-\belowrulesep] 
\end{tabular}
}
\end{center}
\tabbottomvspace
\end{table}

\subsection{Synthetic Graphs}
\label{subsec:syn_graph}
We train the \nn on synthetic graphs from common graph generative models and test on larger unseen graphs from the same model. We randomly sample 25 graphs of size $\{512, 612, 712, ..., 2912\}$ from different generative models. If the graph is disconnected, we keep the largest component. We train \nn on the first 5 graphs, use the 5 graphs from the rest 20 graphs as the validation set and the remaining 15 as test graphs. We use the following synthetic graphs: \er graphs (ER), Barabasi-Albert Graph (BA), Watts-Strogatz Graph (WS), random geometric graphs (GEO). See Appendix \ref{syn_graphs} for datasets details. 

For simplicity, we only report experiment results for the reduction ratio 0.5. For complete results of all reduction ratios (0.3, 0.5, 0.7), see Appendix \ref{moreret}. We report both the loss $Loss(L, \widehat{L})$ of different algorithms (w/o learning) and the \emph{relative improvement percentage} defined as $\frac{Loss(L, \widehat{L}) - Loss(L, \widehat{L_t})}{Loss(L, \widehat{L})}$ when \nn is applied, shown in parenthesis. 
As we can see in Table \ref{loss_quadratic_lap_none_eigen_False_ratio_5}, for most methods, trained on small graphs, \nn also performs well on test graphs of larger size across different algorithms and datasets -- Again, the larger improvement percentage is, the larger the improvement by our algorithm is, and a negative value means that our algorithm makes the loss worse. 
Note the size of test graphs are on average $2.6\times$ the size of training graphs. 
For ER and BA graphs, the improvement is relatively smaller compared to GEO and WS graphs. This makes sense since ER and BA graphs are rather homogenous graphs, leaving less room for further improvement. 

\subsection{Real Networks}
\label{subsec:real_graph}
We test on five real networks: Shape, PubMed, Coauthor-CS (CS), Coauthor-Physics (Physics), and Flickr (largest one with 89k vertices), which are much larger than datasets used in \cite{hermsdorff2019unifying} ($\leq$ 1.5k) and \cite{loukas2019graph} ($\leq$ 4k). Since it is hard to obtain multiple large graphs (except for the Shape dataset, which contains meshes from different surface models) coming from similar distribution, we bootstrap the training data in the following way. For the given graph, we randomly sample a collection of landmark vertices and take a random walk of length $l$ starting from selected vertices. We take subgraphs spanned by vertices of random walks as training and validation graphs and the original graph as the test graph. See Appendix \ref{real_graphs} for dataset details. 

As shown in the bottom half of Table \ref{loss_quadratic_lap_none_eigen_False_ratio_5}, across all six different algorithms, \nn significantly improves the result among all five datasets in most cases. For the largest graph Flickr, the size of test graphs is more than $25\times$ of the training graphs, which further demonstrates the strong generalization. %

\subsection{Other losses}
\label{flex}
{\bf {Other differentiable loss.}} To demonstrate that our framework is flexible, we adapt \nn to the following two losses. The two losses are both differentiable w.r.t the weights of coarse graph.

(1) Loss based on normalized graph Laplacian: $
Loss (\mathcal{L}, \mathcal{\hL}_t) = \frac{1}{k} \sum_{i=1}^k |f_i^T \mathcal{L} f_i - (\proj f_i)^T \mathcal{\hL}_t (\proj f_i)| $. Here $\{ f_i\}$ are the set of first $k$ eigenvectors of the normalized Laplacian $\mathcal{L}$ of original grpah $G$, and $ \proj = \widehat{D}^{1/2}PD^{-1/2}$. (2) Conductance difference between original graph and coarse graph. $Loss = \frac{1}{k} \sum_{i=1}^k | \varphi(S_i) - \varphi(\pi(S_i)) | $. $\varphi(S)$ is the conductance $\varphi(S):=\frac{\sum_{i \in S, j \in \bar{S}} a_{i j}}{\min (a(S), a(\bar{S}))}$ where $a(S):=\sum_{i \in S} \sum_{j \in V} a_{i j}$. We randomly sample $k$ subsets of nodes $S_0, ..., S_k \subset V$ where 
$|S_i|$ is set to be a random number sampled from the uniform distribution $U(|V|/4, |V|/2)$. Due to space limits, we present the result for conductance in Appendix \ref{conductance-loss-appendix}. %

Following the same setting as before, we perform experiments to minimize two different losses. As shown in Table \ref{loss_quadratic_lap_sym_eigen_False_ratio_5} and Appendix \ref{conductance-loss-appendix}, for most graphs and methods, \nn still shows good generalization capacity and improvement for both losses. Apart from that, we also observe the initial loss for normalized Laplacian is much smaller than that for standard Laplacian, which might be due to that the fact that eigenvalues of normalized Laplacian are in $[0, 2]$. 

\begin{table}
\tabtopvspace
\caption{Loss: quadratic loss. Laplacian: normalized Laplacian for original and coarse graphs. Each entry $x (y)$ is: $x =$ loss w/o learning, and $y =$ improvement percentage.}
\label{loss_quadratic_lap_sym_eigen_False_ratio_5}
\renewcommand{\arraystretch}{1.2}
\begin{center}
\resizebox{0.8\textwidth}{!}{
\begin{tabular}{@{}lllllllll@{}}
\addlinespace[-\aboverulesep] 
  \cmidrule[\heavyrulewidth]{2-8}
  & Dataset &   BL & Affinity & \makecell[l]{Algebraic \\ Distance} &  \makecell[l]{Heavy \\ Edge} & \makecell[l]{Local var \\ (edges)} & \makecell[l]{Local var \\ (neigh.)} \\ 
  \cmidrule{2-8}
  \parbox[t]{2mm}{\multirow{4}{*}{\rotatebox[origin=c]{90}{Synthetic}}} & BA &  0.13 (76.2\%) & 0.14 (45.0\%) &  0.15 (51.8\%) &  0.15 (46.6\%) &    0.14 (55.3\%) &    0.06 (57.2\%) \\
  & ER &  0.10 (82.2\%) &   0.10 (83.9\%) &  0.09 (79.3\%) &  0.09 (78.8\%) &    0.06 (64.6\%) &    0.06 (75.4\%) \\
  & GEO &  0.04 (52.8\%) &  0.01 (12.4\%) &  0.01 (27.0\%) &  0.03 (56.3\%) &   0.01 (-145.1\%) &    0.02 (-9.7\%) \\
  & WS &  0.05 (83.3\%) &  0.01 (-1.7\%) &  0.01 (38.6\%) & 0.05 (50.3\%)       &    0.01 (40.9\%) &  0.01 (10.8\%) \\
  \cmidrule{2-8}
  \parbox[t]{2mm}{\multirow{5}{*}{\rotatebox[origin=c]{90}{Real}}} & CS & 0.08 (58.0\%) &  0.06 (37.2\%) &  0.04 (12.8\%) &  0.05 (41.5\%) &    0.02 (16.8\%) &    0.01 (50.4\%) \\
  & Flickr &  0.08 (-31.9\%)&       0.06 (-27.6\%) &  0.06 (-67.2\%) &  0.07 (-73.8\%) &   0.02 (-440.1\%) &    0.02 (-43.9\%) \\
  & Physics &  0.07 (47.9\%) &  0.06 (40.1\%) &  0.04 (17.4\%) &  0.04 (61.4\%) &   0.02 (-23.3\%) &    0.01 (35.6\%) \\
  & PubMed &   0.05 (47.8\%) &  0.05 (35.0\%) &  0.05 (41.1\%) &  0.12 (46.8\%) &   0.03 (-66.4\%) &   0.01 (-118.0\%) \\
  & Shape &  0.02 (84.4\%) &  0.01 (67.7\%) &  0.01 (58.4\%) &  0.02 (87.4\%) &    0.0 (13.3\%) &    0.01 (43.8\%) \\
  \cmidrule[\heavyrulewidth]{2-8} \addlinespace[-\belowrulesep] 
\end{tabular}
}
\end{center}
\end{table}

{\bf {Non-differentiable loss}.} In Section \ref{sec:proof-of-concept}, we use Rayleigh loss as a proxy for training but the Eigenerror for validation and test.
Here we train \nn with Rayleigh loss but evaluate \emph{Eigenerror} on \textit{test} graphs, which is more challenging. Number of vectors $k$ is 40 for synthetic graphs and 200 for real networks. 
 As shown in Table \ref{nondiff-table}, our training strategy via Rayleigh loss can improve the eigenvalue alignment between original graphs and coarse graphs in most cases. Reducing Eigenerror is more challenging than other losses, possibly because we are minimizing a differentiable proxy (the Rayleigh loss). Nevertheless, improvement is achieved in most cases. %

\begin{table}
\tabtopvspace
\caption{Loss: Eigenerror. Laplacian: combinatorial Laplacian for original graphs and doubly-weighted Laplacian for coarse ones. Each entry $x (y)$ is: $x =$ loss w/o learning, and $y =$ improvement percentage. $\dagger$ stands for out of memory.}
\label{nondiff-table}
\renewcommand{\arraystretch}{1.2}
\begin{center}
\resizebox{0.8\textwidth}{!}{
\begin{tabular}{@{}lllllllll@{}}
\addlinespace[-\aboverulesep] 
  \cmidrule[\heavyrulewidth]{2-8}
  & Dataset &   BL & Affinity & \makecell[l]{Algebraic \\ Distance} &  \makecell[l]{Heavy \\ Edge} & \makecell[l]{Local var \\ (edges)} & \makecell[l]{Local var \\ (neigh.)} \\ 
  \cmidrule{2-8}
  \parbox[t]{2mm}{\multirow{4}{*}{\rotatebox[origin=c]{90}{Synthetic}}} & BA &   0.36 (7.1\%) &  0.17 (8.2\%) &   0.22 (6.5\%) &  0.22 (4.7\%) &    0.11 (21.1\%) &    0.17 (-15.9\%) \\
  & ER &  0.61 (0.5\%) &  0.70 (1.0\%) &   0.35 (0.6\%) &  0.36 (0.2\%) &    0.19 (1.2\%) &     0.02 (0.8\%) \\
  & GEO &  1.72 (50.3\%) & 0.16 (89.4\%) &  0.18 (91.2\%) & 0.45 (84.9\%) &    0.08 (55.6\%) &     0.20 (86.8\%) \\
  & WS &  1.59 (43.9\%) & 0.11 (88.2\%) &  0.11 (83.9\%) &  0.58 (23.5\%) &    0.10 (88.2\%) &    0.12 (79.7\%) \\
  \cmidrule{2-8}
  \parbox[t]{2mm}{\multirow{5}{*}{\rotatebox[origin=c]{90}{Real}}} & CS & 1.10 (18.0\%) & 0.55 (49.8\%) &  0.33 (60.6\%) &  0.42 (44.5\%) &    0.21 (75.2\%) &    0.0 (-154.2\%) \\
  & Flickr & 0.57 (55.7\%) & $\dagger$ &  0.33 (20.2\%) &  0.31 (55.0\%) &    0.11 (67.6\%) &    0.07 (60.3\%) \\
  & Physics &  1.06 (21.7\%) & 0.58 (67.1\%) &  0.33 (69.5\%) &  0.35 (64.6\%) &    0.20 (79.0\%)   & 0.0 (-377.9\%) \\ 
  & PubMed &  1.25 (7.1\%) &  0.50 (15.5\%) &  0.51 (12.3\%) & 1.19 (-110.1\%) &    0.35 (-8.8\%) &    0.02 (60.4\%) \\
  & Shape &  2.07 (67.7\%) & 0.24 (93.3\%) &  0.17 (90.9\%) & 0.49 (93.0\%) &    0.11 (84.2\%) &     0.20 (90.7\%) \\
  \cmidrule[\heavyrulewidth]{2-8} \addlinespace[-\belowrulesep] 
\end{tabular}
}
\end{center}
\tabbottomvspace
\end{table}

\subsection{On the use of GNN as weight-assignment map.}
Recall that we use GNN to represent a edge-weight assignment map for an edge $(\hat{u}, \hat{v})$ between two super-nodes $\hat{u}, \hat{v}$ in the coarse graph $\Ghat$. The input will be the subgraph $G_{\hat{u},\hat{v}}$ in the original graph $G$ spanning the
clusters $\pi^{-1}(\hat{u})$, $\pi^{-1}(\hat{v})$, and the crossing edges among them; while the goal is to compute the weight of edge $(\hat{u}, \hat{v})$ based on this subgraph $G_{\hat{u}, \hat{v}}$. 
Given that the input is a local graph $G_{\hat{u}, \hat{v}}$, a GNN will be a natural choice to parameterize this edge-weight assignment map. Nevertheless, in principle, any architecture applicable to graph regression can be used for this purpose. 
To better understand if it is necessary to use the power of GNN, we replace GNN with the following baseline for graph regression. In particular, the baseline is a composition of mean pooling of node features in the original graph and a 4-layer MLP with embedding dimension 50 and ReLU nonlinearity. We use mean-pooling as the graph regression component needs to be permutation invariant over the set of node features. However, this baseline ignores the detailed graph structure which GNN will leverage. The results for different reduction ratios are presented in the table \ref{BLComparasion}. We have also implemented another baseline where the MLP module is replaced by a simpler linear regression modue. The results are worse than those of MLP (and thus also GNN) as expected, and therefore omitted from this paper.

As we can see, MLP works reasonably well in most cases, indicating that learning the edge weights is indeed useful for improvement. On the other hand, we see using GNN to parametrize the map generally yields a larger improvement over the MLP, which ignores the topology of subgraphs in the original graph. A systematic understanding of how different models such as various graph kernels \citep{kriege2020survey, vishwanathan2010graph} and graph neural networks affect the performance is an interesting question that we will leave for future work.

\begin{table}
\renewcommand{\arraystretch}{1.2}
\tabtopvspace
\caption{Model comparison between MLP and \nn. Loss: quadratic loss. Laplacian: combinatorial Laplacian for both original and coarse graphs. Each entry $x (y)$ is: $x =$ loss w/o learning, and $y =$ improvement percentage.} 
\label{BLComparasion}
\begin{center}
\resizebox{0.8\textwidth}{!}{
\begin{tabular}{@{}llllllllll@{}}
\addlinespace[-\aboverulesep] 
  \cmidrule[\heavyrulewidth]{2-9}
  & Dataset &   Ratio &  BL & Affinity & \makecell[l]{Algebraic \\ Distance} &  \makecell[l]{Heavy \\ Edge} & \makecell[l]{Local var \\ (edges)} & \makecell[l]{Local var \\ (neigh.)} \\ 
  \cmidrule{2-9}
  
  &  &  0.3 &   0.27 (46.2\%) & 0.04 (4.1\%)   & 0.04 (-38.0\%)  & 0.43 (31.2\%)     & 0.02 (-403.3\%)        & 0.06 (67.0\%) \\
  & WS + MLP &  0.5 &  0.45 (62.9\%) & 0.09 (64.1\%) &    0.09 (15.9\%) &           0.52 (31.2\%)    &   0.09 (31.6\%) &       0.11 (58.5\%) \\
  &  &  0.7 &    0.65 (70.4\%) & 0.15 (57.6\%) &   0.14 (31.6\%) & 0.67 (76.6\%) &      0.15 (43.6\%) &       0.16 (54.0\%) \\
    \cmidrule{2-9}
&     &  0.3 & 0.27 (46.2\%) & 0.04 (65.6\%) &   0.04 (-26.9\%) &  0.43 (32.9\%) &       0.02 (68.2\%) &        0.06 (75.2\%) \\
&    WS + \nn &  0.5 & 0.45 (62.9\%)  &  0.09 (82.1\%) &    0.09 (60.6\%) &            0.52 (51.8\%) &       0.09 (69.9\%) &        0.11 (84.2\%) \\
&     &  0.7  & 0.65 (73.4\%) &  0.15 (78.4\%) &    0.14 (66.7\%) &  0.67 (76.6\%) &       0.15 (80.8\%) &        0.16 (83.2\%) \\
    \cmidrule{2-9}
    &  &  0.3 & 0.13 (76.6\%) &  0.04 (-53.4\%) & 0.03 (-157.0\%) & 0.11 (69.3\%) &     0.0 (-229.6\%) &       0.04 (-7.9\%) \\
    & Shape + MLP &  0.5 & 0.23 (78.4\%) &  0.08 (-11.6\%)&    0.06 (67.6\%) & 0.17 (83.2\%) &      0.04 (44.2\%) &       0.08 (-1.9\%) \\
      &  &  0.7 & 0.34 (69.9\%) &  0.17 (85.1\%) &     0.1 (73.5\%) & 0.24 (65.8\%) &      0.09 (74.3\%) &       0.13 (85.1\%) \\ 
    \cmidrule{2-9}       
    &        &  0.3 & 0.13 (86.8\%) &   0.04 (79.8\%) &    0.03 (69.0\%) &  0.11 (69.7\%) &         0.0 (1.3\%) &        0.04 (73.6\%) \\
 & Shape + \nn &  0.5 &  0.23 (91.4\%) &  0.08 (89.8\%) &    0.06 (82.2\%) &  0.17 (88.2\%) &       0.04 (80.2\%) &        0.08 (79.4\%) \\
 & &  0.7 &   0.34 (91.1\%) &  0.17 (94.3\%) &     0.1 (74.7\%) &  0.24 (95.9\%) &       0.09 (64.6\%) &        0.13 (84.8\%) \\

  \cmidrule[\heavyrulewidth]{2-9} \addlinespace[-\belowrulesep] 
\end{tabular}
}
\end{center}
\tabbottomvspace
\end{table}

\section{Conclusion}
We present a framework to compare original graph and the coarse one via the properly chosen Laplace operators and projection/lift map. Observing the benefits of optimizing over edge weights, we propose a GNN-based framework to learn the edge weights of coarse graph to further improve the existing coarsening algorithms. Through extensive experiments, we demonstrate that our method \nn significantly improves common graph coarsening methods under different metrics, reduction ratios, graph sizes, and graph types. %

For future directions, first, the topology of the coarse graph is currently determined by the coarsening algorithm. It would be desirable to parametrize existing methods by neural networks so that the entire process can be trained end-to-end. Second, extending to other losses (maybe non-differentiable) such as \cite{maretic2019got} which involves inverse Laplacian remains interesting and challenging. Third, as there is no consensus on what specific properties should be preserved, understanding the relationship between different metrics and the downstream task is important. %

\newpage
\bibliographystyle{iclr2021_conference}
\bibliography{iclr2021_conference}
\appendix
\section{Choice of Laplace Operator}
\label{laplace-appendix}

\subsection{Laplace Operator on weighted simplicial complex}
\label{simplicial-laplace-appendix}
Its most general form in the discrete case, presented as the operators on weighted simplicial complexes, is:
\begin{gather*}
\nL_{i}^{u p}=W_{i}^{-1} B_{i}^{T} W_{i+1} B_{i} \quad
\nL_{i}^{d o w n}=B_{i-1} W_{i-1}^{-1} B_{i-1}^{T} W_{i}
\end{gather*}
where $B_i$ is the matrix corresponding to the coboundary operator $\delta_i$, and $W_i$ is the diagonal matrix representing the weights of $i$-th dimensional simplices. See \citep{horak2013spectra} for details. When restricted to the graph (1 simplicial complex), we recover the most common graph Laplacians as special case of $\nL_{0}^{u p}$. Note that although the $\nL_{i}^{u p}$ and $\nL_{i}^{d o w n}$ is not symmetric, we can always symmetrize them by multiple a properly chosen diagonal matrix and its inverse from left and right without altering the spectrum.

\subsection{missing proofs}
\label{appendix-missing-proof}

\begin{wrapfigure}{r}{0.28\textwidth}

\vspace{-20pt}
  \begin{center}
  \label{appendix-fig}
    \includegraphics[width=0.26\textwidth]{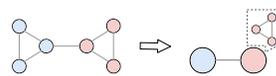}
  \end{center}
\figbottomvspace
  \caption{A toy example.}
\end{wrapfigure}
We provide the missing proofs regarding the properties of the projection/lift map and the resulting operators on the coarse graph. 

Recall as an toy example, a coarsening algorithm will take graph on the left in figure \ref{appendix-fig} and generate a coarse graph on the right, with coarsening matrix 
$P=\left[\begin{array}{cccccc}1 / 3 & 1 / 3 & 1 / 3 & 0 & 0 & 0 \\  0 & 0 & 0 & 1 / 3 & 1 / 3 & 1 / 3 \end{array}\right]$, 
$P^{+}=\left[\begin{array}{ll}
1 & 0  \\
1 & 0  \\
1 & 0  \\
0 & 1  \\
0 & 1  \\ 
0 & 1  
\end{array}\right]$, 
$\Gamma=\left[\begin{array}{cccc}3 &  0  \\ 0 & 3   \end{array}\right]$, 
$\IN = \left[\begin{array}{cccccc}
1 / 3 & 1 / 3 & 1 / 3 & 0 & 0 & 0\\
1 / 3 & 1 / 3 & 1 / 3 & 0 & 0 & 0\\
1 / 3 & 1 / 3 & 1 / 3 & 0 & 0 & 0\\
0 & 0 & 0 & 1 / 3 & 1 / 3 & 1 / 3 \\
0 & 0 & 0 & 1 / 3 & 1 / 3 & 1 / 3 \\
0 & 0 & 0 & 1 / 3 & 1 / 3 & 1 / 3
\end{array}\right]$. $\IN$ in general is a $\N \times \N$ block matrix of rank $\n$. All entries in each block $\IN_j$ is equal to $\frac{1}{\gamma_j}$ where $\gamma_j = \left|\vmap^{-1}(\vhat_j)\right|$.

\begin{table}[htp!]
\centering
\caption{Depending on the choice of $\mathcal{F}$ (quantity that we want to preserve) and $\genL$, we have different projection/lift operators and resulting $\sgenL$ on the coarse graph.}
\small
\begin{tabular}{@{}llllll@{}}
\toprule
Quantity $\mathcal{F}$ of interest & $\genL$  & Projection $\proj$ & Lift $\lift$ & $\sgenL$   & Invariant under $\lift$ \\ \midrule
Quadratic form $\Qform$ & $L$ &  $P$   & ${P^{+}}$  & Combinatorial Laplace $\widehat{L}$ & $\Qform_L(\lift \sx) = \Qform_{\widehat{L}}(\sx)$\\
 Rayleigh quotient $\RQ$ & $L$ & $\Gamma^{-1/2}{{(P^{+})}^T}$ & ${P^{+}}\Gamma^{-1/2}$ & Doubly-weighted Laplace $\sdwL$ & $\RQ_L(\lift \sx) = \RQ_{\sdwL} (\sx)$\\
 Quadratic form $\Qform$ & $\mathcal{L}$ & $ \widehat{D}^{1/2}PD^{-1/2} $  & $ D^{1/2}{(P^{+})} \widehat{D}^{-1/2}$ & Normalized Laplace $\widehat{\mathcal{L}}$  & $\Qform_\mathcal{L}(\lift \sx) = \Qform_{\widehat{\mathcal{L}}}(\sx)$\\ \bottomrule
\end{tabular}
\end{table}

We first make an observation about projection and lift operator, $\proj$ and $\lift$.

\begin{lemma}
$\proj \circ \lift = I.$ $ \lift \circ \proj = \IN. $
\end{lemma}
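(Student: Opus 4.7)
My approach is to reduce the lemma to three purely combinatorial identities about the structured matrices $P$ and $P^+$, and then verify each row of Table \ref{tab:operator} by direct substitution. The three building blocks are: (i) $P P^+ = I_\n$; (ii) $P^+ P = \IN$; and (iii) $(P^+)^T P^+ = \Gamma$. Each follows from writing the matrix product entrywise and using the explicit definitions: $P[r,i] = 1/\gamma_r$ when $v_i \in \pi^{-1}(\hat v_r)$, and $P^+[i,r] = 1$ when $\pi(v_i) = \hat v_r$. For (i), the $(r,s)$ entry of $P P^+$ is a sum that is nonzero only when some $v_i$ satisfies both $v_i \in \pi^{-1}(\hat v_r)$ and $\pi(v_i) = \hat v_s$, forcing $r=s$ and giving $\gamma_r \cdot (1/\gamma_r) = 1$. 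For (ii), the $(i,j)$ entry equals $1/\gamma_r$ precisely when $\pi(v_i) = \pi(v_j) = \hat v_r$, which is exactly the block structure of $\IN$ described in Appendix \ref{appendix-missing-proof}. For (iii), the $(r,s)$ entry of $(P^+)^T P^+$ counts $|\pi^{-1}(\hat v_r) \cap \pi^{-1}(\hat v_s)|$, which is $\gamma_r$ if $r=s$ and $0$ otherwise.

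With these three identities in hand, each row of the table becomes mechanical. Row 1 is immediate: $\proj \circ \lift = P P^+ = I$ and $\lift \circ \proj = P^+ P = \IN$. Row 2: $\proj \circ \lift = \Gamma^{-1/2}(P^+)^T P^+ \Gamma^{-1/2} = \Gamma^{-1/2} \Gamma \Gamma^{-1/2} = I$, while $\lift \circ \proj = P^+ \Gamma^{-1}(P^+)^T$, whose $(i,j)$ entry one checks to be $1/\gamma_r$ exactly when $\pi(v_i) = \pi(v_j) = \hat v_r$, matching $\IN$. Row 3: the diagonal factors telescope, $\proj \circ \lift = \Dhat^{1/2} P D^{-1/2} D^{1/2} P^+ \Dhat^{-1/2} = \Dhat^{1/2}(PP^+)\Dhat^{-1/2} = I$, and similarly $\lift \circ \proj = D^{1/2}(P^+ P) D^{-1/2}$, which is the rank-$\n$ idempotent onto the coarsened subspace in the inner product appropriate to the normalized-Laplacian setting.

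The proof is essentially bookkeeping, so there is no genuine obstacle; the only care needed is in the entrywise verification of $P^+ P = \IN$, where one must correctly identify the block corresponding to each cluster $\pi^{-1}(\hat v_r)$ and confirm every entry in that block is $1/\gamma_r$. Once that identification is made, the remaining two rows follow by substitution plus the cancellation of the diagonal weight matrices $\Gamma^{\pm 1/2}$ and $D^{\pm 1/2}, \Dhat^{\pm 1/2}$.
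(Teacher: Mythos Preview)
Your proposal is correct and follows essentially the same approach as the paper: direct computation of $\proj\circ\lift$ and $\lift\circ\proj$ for each of the three rows, using the explicit entrywise structure of $P$, $P^+$, and $\Gamma$. Your version is in fact more careful than the paper's in two places: you correctly identify $(P^+)^T P^+ = \Gamma$ (the paper's proof writes $\IN$ there, which is a dimension mismatch), and in Row~3 you note that $\lift\circ\proj = D^{1/2}\IN D^{-1/2}$ is the appropriate rank-$\n$ idempotent rather than literally $\IN$, which is the honest statement since $D^{1/2}\IN D^{-1/2} = \IN$ holds only when degrees are constant within each cluster.
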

\begin{proof}
For the first case, it's easy to see $\proj \circ \lift =PP^+ = I$ and $\lift \circ \proj =P^+P = \IN$. \\

For the second case, 
$\proj \circ \lift = \Gamma^{-1/2}\ppt P^+ \Gamma^{-1/2} = \Gamma^{-1/2} \IN \Gamma^{-1/2} = I.$ $\lift \circ \proj =P^+ \Gamma^{-1} \ppt =  I$. \\

For the third case, 
\begin{align*}
\proj \circ \lift & = \Dhat^{1/2}PD^{-1/2} D^{1/2}{(P^{+})} \Dhat^{-1/2} \\
& = \Dhat^{1/2}P{(P^{+})} \Dhat^{-1/2} \\
& = \Dhat^{1/2}I \Dhat^{-1/2} = I. 
\end{align*}
\begin{align*}
\lift \circ \proj & = D^{1/2}{(P^{+})} \Dhat^{-1/2} \Dhat^{1/2}PD^{-1/2} \\
& = D^{1/2}{(P^{+})} PD^{-1/2} \\
&  = D^{1/2}\IN D^{-1/2} = \IN.
\end{align*}
\end{proof}
Now we prove the three lemmas in the main paper. 
\begin{proposition}
For any vector $\sx \in \reals^\n$, we have that $\Qform_{\LtildeVhat}(\sx) = \Qform_{\combL }(P^+ \sx)$. 
In other words, set $\bx:= P^+ \sx$ as the lift of $\sx$ in $\reals^\N$, then $\sx^T \LtildeVhat \sx = \bx^T \combL  \bx$. 
\end{proposition}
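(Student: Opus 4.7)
The statement is essentially a direct unpacking of the definition of $\tilde{L}_{\hat{V}}$, so I do not expect any real obstacle; the plan is to make this explicit in two short steps.

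First, I would recall the two facts established earlier in Section \ref{subsec:coarsegraph}: the operator on the coarse vertex set is defined as $\tilde{L}_{\hat{V}} = (P^+)^T \combL P^+$, and the proposition of \cite{loukas2019graph} (stated just above) identifies this with the combinatorial Laplacian $\hat{L} = \hat{D} - \hat{W}$ of the induced coarse graph. Thus either form of $\tilde{L}_{\hat{V}}$ may be used, and the factored form $(P^+)^T \combL P^+$ is the one that makes the quadratic form transparent.

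Second, for any $\hat{x} \in \reals^{n}$ I would simply compute
\begin{align*}
\Qform_{\tilde{L}_{\hat{V}}}(\hat{x}) \;=\; \hat{x}^T \tilde{L}_{\hat{V}} \hat{x} \;=\; \hat{x}^T (P^+)^T \combL P^+ \hat{x} \;=\; (P^+ \hat{x})^T \combL (P^+ \hat{x}) \;=\; x^T \combL x \;=\; \Qform_{\combL}(x),
\end{align*}
where in the last line I set $x := P^+ \hat{x} \in \reals^{N}$, which is exactly the lift map $\lift = P^+$ from the first row of Table \ref{tab:operator}. This gives the claimed identity $\Qform_{\tilde{L}_{\hat{V}}}(\hat{x}) = \Qform_{\combL}(P^+ \hat{x})$.

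There is no real difficulty here: the entire content of the proposition is that the coarse Laplace operator was \emph{defined} as a congruence $(P^+)^T \combL P^+$, which automatically turns quadratic forms on $\reals^{n}$ into quadratic forms on $\reals^{N}$ evaluated at the lifted vector. The only thing worth emphasizing in the write-up is that this is precisely what justifies the choice $\lift = P^+$, $\proj = P$, $\sgenL = \hat{L}$ in the first row of Table \ref{tab:operator}, and that the analogous Rayleigh-quotient and normalized-Laplacian rows of the table will require a non-trivial rescaling (by $\Gamma^{-1/2}$ or by $D^{1/2}, \hat{D}^{-1/2}$) to preserve their respective functionals $\Fcal$ — those are the statements whose proofs actually carry content.
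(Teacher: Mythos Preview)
Your proposal is correct and is essentially identical to the paper's own proof: both simply substitute the definition $\tilde{L}_{\hat V} = (P^+)^T L P^+$ into the quadratic form and regroup to get $(P^+\hat x)^T L (P^+\hat x)$. Your write-up is in fact slightly cleaner than the paper's one-line version and the surrounding commentary about Table~\ref{tab:operator} is appropriate context.
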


\begin{proof}
$\Qform_L(\lift \sx) = (\lift \sx)^T L \lift \sx = \sx \ppt L P^{+} \sx^T = \sx^T \Lhat \sx = \Qform_{\Lhat}(\sx)$
\end{proof}

\begin{proposition}
For any vector $x \in \reals^\n$, we have that $\RQ_{\sdwL}(\sx) = \RQ_{\combL }(P^+ \Gamma^{-1/2} \sx)$. That is, set the lift of $\sx$ in $\reals^\N$ to be $\bx = P^+ \Gamma^{-1/2} \sx$, then we have that 
$\frac{\sx^T \sdwL \sx}{\sx^T \sx} = \frac{\bx^T \combL  \bx}{\bx^T \bx}$. 
\end{proposition}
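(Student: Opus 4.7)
The plan is to verify the numerator and denominator of the Rayleigh quotient separately. For the numerator, I would substitute the factorization $\sdwL = (P^+ \Gamma^{-1/2})^T \combL (P^+ \Gamma^{-1/2})$ (already recorded just above the proposition) into $\sx^T \sdwL \sx$. Setting $\bx := P^+ \Gamma^{-1/2} \sx$, this immediately gives $\sx^T \sdwL \sx = \bx^T \combL \bx$, so the numerators agree.

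The nontrivial step is matching the denominators, i.e.\ showing $\sx^T \sx = \bx^T \bx$. Expanding, $\bx^T \bx = \sx^T \Gamma^{-1/2} (P^+)^T P^+ \Gamma^{-1/2} \sx$, so it suffices to prove the identity $(P^+)^T P^+ = \Gamma$. This is where the explicit description of $P^+$ from the paper enters: $P^+$ is the $\N \times \n$ indicator-type matrix with $P^+[i,r]=1$ iff $\pi(v_i)=\vhat_r$ and $0$ otherwise. Hence the columns of $P^+$ have disjoint supports, so $(P^+)^T P^+$ is diagonal with $r$th diagonal entry equal to $\sum_i (P^+[i,r])^2 = |\pi^{-1}(\vhat_r)| = \gamma_r$, which is exactly $\Gamma[r,r]$. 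This identity is essentially the first half of the earlier lemma stating $\proj \circ \lift = I$ for the choice $\proj = \Gamma^{-1/2}(P^+)^T$, $\lift = P^+ \Gamma^{-1/2}$, which I would in fact invoke directly.

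Combining the two parts,
\begin{align*}
\RQ_{\sdwL}(\sx) \;=\; \frac{\sx^T \sdwL \sx}{\sx^T \sx} \;=\; \frac{\bx^T \combL \bx}{\bx^T \bx} \;=\; \RQ_{\combL}(\bx),
\end{align*}
which is exactly the claim. The argument is essentially a one-line substitution once the key combinatorial fact $(P^+)^T P^+ = \Gamma$ is in hand; I expect no real obstacle beyond recognizing that the chosen lift $\lift = P^+ \Gamma^{-1/2}$ is precisely the one that makes $\proj\circ\lift = I$ and therefore norm-preserving on $\reals^\n$, so that the $\sx^T\sx$ in the denominator is untouched by lifting.
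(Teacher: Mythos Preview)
Your proposal is correct and follows essentially the same route as the paper: verify the numerator and denominator of the Rayleigh quotient separately, using the factorization $\sdwL = (P^+\Gamma^{-1/2})^T \combL (P^+\Gamma^{-1/2})$ for the former and the identity $(P^+)^T P^+ = \Gamma$ (equivalently $\proj\circ\lift = I$ for this choice of lift/projection) for the latter. The paper's write-up is terser and does not spell out the combinatorial reason why $(P^+)^T P^+ = \Gamma$, but the structure of the argument is identical to yours.
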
 

\begin{proof}
By definition $R_L(\lift \sx) = \frac{\Qform_L(\lift \sx)}{|| \lift \sx ||_2^2}$, $R_{\dwL}(x) = \frac{\Qform_{\Lhat}( x)}{||  x ||_2^2}.$ We will prove the lemma by showing
$\Qform_L(\lift \sx) = \Qform_{\dwL}( x)$ and $|| \lift \sx ||_2^2 = ||  x ||_2^2$.
\begin{align*}
\Qform_L(\lift \sx) & = (\lift \sx)^T L \lift \sx \\
& =  \sx^T \Gamma^{-1/2} \ppt L P^{+}\Gamma^{-1/2} \sx   \\
& = \sx^T \Gamma^{-1/2} \Lhat \Gamma^{-1/2} \sx \\
& = \sx^T\sdwL \sx\\
& =\Qform_{\sdwL}( \sx)
\end{align*}
$|| \lift \sx ||_2^2 = \sx^T\Gamma^{-1/2} \ppt P^{+}\Gamma^{-1/2} \sx = \sx^T\sx = ||  \sx ||_2^2$.
Since both numerator and denominator stay the same under the action of $\lift$, we conclude $R_L(\lift \sx) = R_{\sdwL}(\sx).$ 
\end{proof}

\begin{proposition}
For any vector $x \in \reals^\n$, we have that $\Qform_{\snL}(x) = \Qform_{\nL}(D^{1/2}P^{+}\Dhat^{1/2} x)$. That is, set the lift of $\sx$ in $\reals^\N$ to be $x: = D^{1/2}P^{+}\Dhat^{1/2} x$, then we have that 
$\sx^T \snL \sx = \bx^T \nL \bx$. 
\end{proposition}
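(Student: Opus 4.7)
The plan is to imitate the direct-computation strategy of the two preceding propositions. Before writing anything down, I would note that the lift displayed in the statement almost surely contains a sign typo: the third row of Table~\ref{tab:operator} gives $\lift = D^{1/2}P^{+}\Dhat^{-1/2}$, and this is the only exponent choice that makes $\proj\circ\lift$ equal the identity on $\reals^{\n}$ (as the preceding lemma in Appendix~\ref{appendix-missing-proof} verifies for precisely this $\lift$). I would therefore prove the intended identity
\[
\Qform_{\snL}(\sx) \;=\; \Qform_{\nL}\!\bigl(D^{1/2}P^{+}\Dhat^{-1/2}\sx\bigr).
\]

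From there the argument is a one-line algebraic cancellation. First I would expand the right-hand side using the definition of the quadratic form together with the normalized-Laplacian factorization $\nL = D^{-1/2} L D^{-1/2}$. The two inner pairs $D^{1/2}\!\cdot\! D^{-1/2}$ and $D^{-1/2}\!\cdot\! D^{1/2}$ collapse to identity matrices on $\reals^{\N}$, leaving
\[
\Qform_{\nL}(\lift \sx) \;=\; \sx^T \Dhat^{-1/2} (P^{+})^T L P^{+} \Dhat^{-1/2} \sx.
\]
Next I would invoke the earlier proposition of \cite{loukas2019graph}, already recorded in Section~\ref{subsec:coarsegraph}, which asserts $\Lhat = (P^{+})^T L P^{+}$. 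Substituting this produces $\sx^T \Dhat^{-1/2} \Lhat \Dhat^{-1/2} \sx$, which is exactly $\sx^T \snL \sx = \Qform_{\snL}(\sx)$ by the definition $\snL = \Dhat^{-1/2} \Lhat \Dhat^{-1/2}$ of the normalized coarse Laplacian.

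I do not anticipate any serious obstacle. The content of the identity really amounts to two observations stacked together: (i) the coarse combinatorial Laplacian equals $(P^{+})^T L P^{+}$, which is supplied by the earlier proposition; and (ii) the $D^{\pm 1/2}$ factors decorating $\lift$ and $\proj$ have been chosen precisely so that they absorb the $D^{-1/2}$ conjugations inside the normalized Laplacians on both sides. The only step requiring any thought is picking the correct $\lift$, and this is already forced by the consistency requirement $\proj\circ\lift = \mathrm{Id}_{\n}$ together with the cancellation pattern above; both point to the formula in Table~\ref{tab:operator} rather than the exponent printed in the proposition statement.
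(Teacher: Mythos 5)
Your proof is correct and follows essentially the same route as the paper's: expand $\Qform_{\nL}(\lift\sx)$ with $\lift = D^{1/2}P^{+}\Dhat^{-1/2}$, collapse $D^{1/2}\nL D^{1/2}$ to $L$, invoke $\Lhat = (P^{+})^{T}LP^{+}$, and recognize $\Dhat^{-1/2}\Lhat\Dhat^{-1/2} = \snL$. You also correctly spotted the sign typo in the appendix statement of the lift (which should read $\Dhat^{-1/2}$, matching Table~\ref{tab:operator} and the paper's own computation).
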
 

\begin{proof}
\begin{align*}
\Qform_\nL(\lift \sx) & = (\lift \sx)^T \nL \lift \sx \\
& = \sx \Dhat^{-1/2}\ppt D^{1/2} \nL D^{1/2}{(P^{+})} \Dhat^{-1/2} \sx \\
& = \sx \Dhat^{-1/2}\ppt L{(P^{+})} \Dhat^{-1/2} \sx \\
& = \sx \Dhat^{-1/2} \Lhat \Dhat^{-1/2} \sx  \\
& = \sx \snL \sx = \Qform_{\snL}(\sx) 
\end{align*}
\end{proof}

\section{More Related Work}

\textbf{Graph pooling}. Graph pooling \citep{lee2019self} is proposed in the context of the hierarchical graph representation learning. DiffPool \citep{ying2018hierarchical} is proposed to use graph neural networks to parametrize the soft clustering of nodes. Its limitation in quadratic memory is later improved by \citep{gao2019graph, cangea2018towards}. All those methods are supervised and tested for the graph classification task. %

\textbf{Optimal transportation theory}. Several recent works adapt the concepts from optimal transportation theory to compare graphs of different sizes. \cite{garg2019solving} aims to minimize optimal transport distance between probability measures on the original graph and coarse graph. \cite{maretic2019got} proposes a framework based on Wasserstein distance between graph signal distributions in terms of their graph Laplacian matrices. \cite{ma2019unsupervised} replaces supervised loss tailored for specific downstream tasks with unsupervised ones based on Wasserstein distance.

\section{Dataset}
\subsection{Synthetic graphs}
\label{syn_graphs}
 \er graphs (ER). $G(n, p)$ where $p = \frac{0.1 * 512}{n}$

 Random geometric graphs (GEO). The random geometric graph model places $n$ nodes uniformly at random in the unit cube. Two nodes are joined by an edge if the distance between the nodes is at most radius $r$. We set $r = \frac{5.12}{\sqrt{n}}$.

 Barabasi-Albert Graph (BA). A graph of $n$ nodes is grown by attaching new nodes each with $m$ edges that are preferentially attached to existing nodes with high degrees. We set $m$ to be $4$.

 Watts-Strogatz Graph (WS). It is first created from a ring over $n$ nodes. Then each node in the ring is joined to its $k$ nearest neighbors (or $k - 1$ neighbors if $k$ is odd). Then shortcuts are created by replacing some edges as follows: for each edge $(u, v)$ in the underlying "$n$-ring with $k$ nearest neighbors" with probability $p$ replace it with a new edge $(u, w)$ with a uniformly random choice of existing node $w$. We set $k, p$ to be $10$ and  $0.1$. 

\subsection{Dataset from Loukas's paper}
\label{loukas-data}
Yeast. Protein-to-protein interaction network in budding yeast, analyzed by \citep{jeong2001lethality}. The network has $N = 1458$ vertices and $M = 1948$ edges. %

Airfoil. Finite-element graph obtained by airow simulation \citep{preis1997party}, consisting of $N = 4000$ vertices and $M = 11,490$ edges. %

Minnesota \citep{gleich2008matlabbgl}. Road network with $N = 2642$ vertices and $M = 3304$ edges. %

Bunny \citep{turk1994zippered}. Point cloud consisting of $N = 2503$ vertices and $M = 65,490$ edges. The point cloud has been sub-sampled from its original size. %

\subsection{Real networks}
\label{real_graphs}
 Shape graphs (Shape). Each graph is KNN graph formed by 1024 points sampled from shapes from ShapeNet where each node is connected 10 nearest neighbors. 

Coauthor-CS (CS) and Coauthor-Physics (Physics) are co-authorship graphs based on the Microsoft Academic Graph from the KDD Cup 2016 challenge. Coauthor CS has $N = 18,333$ nodes and $M = 81,894$ edges. Coauthor Physics has $N = 34,493$ nodes and $M = 247,962$ edges.

PubMed \citep{sen2008collective} has $N = 19,717$ nodes and $M = 44,324$ edges. Nodes are documents and edges are citation links.

Flickr \citep{zeng2019graphsaint} has $N=89,250$ nodes and $M=899,756$ edges. One node in the graph represents one image uploaded to Flickr. If two images share some common properties (e.g., same geographic location, same gallery, comments by the same user, etc.), there is an edge between the nodes of these two images.

\section{Existing graph coarsening methods}
\label{methods}
\textbf{Heavy Edge Matching}. 
At each level of the scheme, the contraction family is obtained
by computing a maximum-weight matching with the weight of each contraction set $(v_i, v_j)$ calculated as $w_{ij} / \text{max}\{d_i, d_j\}$. In this manner, heavier edges connecting vertices that are well separated from the rest of the graph are contracted first. %

\textbf{Algebraic Distance}.
This method differs from heavy edge matching in that the weight of each candidate set $(v_i, v_j) \in E$ is calculated as $\left(\sum_{q=1}^{Q}\left(x_{q}(i)-x_{q}(j)\right)^{2}\right)^{1 / 2}$, where $x_k$ is an $N$-dimensional test vector computed by successive sweeps of Jacobi relaxation. The complete method is described by \cite{ron2011relaxation}, see also \cite{chen2011algebraic}. %

\textbf{Affinity}.
This is a vertex proximity heuristic in the spirit of the algebraic distance that was proposed by \cite{livne2012lean} in the context of their work on the lean algebraic multigrid. As per the author suggests, the $Q = k$ test vectors are here computed by a single sweep of a Gauss-Seidel iteration.

\textbf{Local Variation}. There are two variations of local variation methods, edge-based local variation, and neighborhood-based local variation. They differ in how the contraction set is chosen. Edge-based variation is constructed for each edge, while the neighborhood-based variant takes every vertex and its neighbors as contraction set. What two methods have common is that they both optimize an upper bound of the restricted spectral approximation objective. In each step, they greedily pick the sets whose local variation is the smallest. See \cite{loukas2019graph} for more details. 

\textbf{Baseline}. We also implement a simple baseline that randomly chooses a collection of nodes in the original graph as landmarks and contract other nodes to the nearest landmarks. If there are multiple nearest landmarks, we randomly break the tie. The weight of the coarse graph is set to be the sum of the weights of the crossing edges. 
\section{Details of the experimental setup}
\label{model-detail}
\textbf{Feature Initialization.} We initialize the the node feature of subgraphs as a 5 dimensional feature based on a simple heuristics local degree profile (LDP) \citep{cai2018simple}. For each node $v \in G(V)$, let $DN(v)$ denote the multiset of the degree of all the neighboring nodes of $v$, i.e., $DN(v) = \{\text{degree}(u) | (u, v) \in E \}$. We take five node features, which are (degree($v$), min(DN($v$)), max(DN($v$)),mean(DN($v$)), std(DN($v$))). In other words, each node feature summarizes the degree information of this node and its 1- neighborhood. We use the edge weight as 1 dimensional edge feature.

\textbf{Optimization.} All models are trained with Adam optimizer \citep{kingma2014adam} with a learning rate of 0.001 and batch size 600. We use Pytorch \citep{paszke2017automatic} and Pytorch Geometric \citep{fey2019fast}
for all of our implementation. We train graphs one by one where for each graph we train the model to minimize the loss for certain epochs (see hyper-parameters for details) before moving to the next graph. We save the model that performs best on the validation graphs and test it on the test graphs.  

\textbf{Model Architecture.}
The building block of our graph neural networks is based on the modification of Graph Isomorphism Network (GIN) that can handle both node and edge features. In particular, we first linear transform both node feature and edge feature to be vectors of the same dimension. At the $k$-th layer, GNNs update node representations by

\begin{equation}
h_{v}^{(k)}=\operatorname{ReLU}\left(\operatorname{MLP}^{(k)}\left(\sum_{u \in \mathcal{N}(v) \cup\{v\}} h_{u}^{(k-1)}+\sum_{e=(v, u): u \in \mathcal{N}(v) \cup\{v\}} h_{e}^{(k-1)}\right)\right)
\end{equation}

where $\mathcal{N}(v)$ is a set of nodes adjacent to $v$, and $e = (v; v)$ represents the self-loop edge. Edge features $h_e^{(k-1)}$ is the same across the layers. 

We use average graph pooling to obtained the graph representation from node embeddings, i.e., $h_{G}=\operatorname{MEAN}\left(\left\{h_{v}^{(K)} | v \in G\right\}\right)$.
 The final prediction of weight is $ 1 + \operatorname{ReLu}(\Phi(h_G) )$ where $\Phi$ is a linear layer. We set the number of layers to be 3 and the embedding dimension to be 50.

\textbf{Time Complexity.}
In the preprocessing step, we need to compute the first $k$ eigenvectors of Laplacian (either combinatorial or normalized one) of the original graph as test vectors. Those can be efficiently computed by Restarted Lanczos Method \citep{lehoucq1998arpack} to find the eigenvalues and eigenvectors.

In the training time, our model needs to recompute the term in the loss involving the coarse graph to update the weights of the graph neural networks for each batch. For loss involving Laplacian (either combinatorial or normalized Laplacian), the time complexity to compute the $x^TLx$ is $O(|E|k)$ where $|E|$ is the number of edges in the coarse graph and $k$ is the number of test vectors. For loss involving conductance, computing the conductance of one subset $S \subset E$ is still $O(|E|)$ so in total the time complexity is also $O(|E|k)$. 
In summary, the time complexity for each batch is linear in the number of edges of training graphs.  All experiments are performed on a single Intel Xeon CPU E5-2630 v4@ 2.20GHz $\times$ 40 and 64GB RAM machine.

More concretely, for synthetic graphs, it takes a few minutes to train the model. For real graphs like CS, Physics, PubMed, it takes around 1 hour. For the largest network Flickr of 89k nodes and 899k edges, it takes about 5 hours for most coarsening algorithms and reduction ratios.

\textbf{Hyperparameters.}
We list the major hyperparameters of \nn below.
\begin{itemize}
\item epoch: 50 for synthetic graphs and 30 for real networks.
\item walk length: 5000 for real networks. Note the size of the subgraph is usually around 3500 since the random walk visits some nodes more than once.
\item number of eigenvectors $k$: 40 for synthetic graphs and 200 for real networks.
\item embedding dimension: 50
\item batch size: 600
\item learning rate: 0.001
\end{itemize}

\section{Iterative Algorithm for Spectrum Alignment}
\label{iterative-algo-appendix}

\subsection{Problem Statement}
Given a graph $G$ and its coarse graph $\Ghat$ output by existing algorithm $\mathcal{A}$, ideally we would like to set edge weight of $\Ghat$ so that spectrum of $\widehat{L}$ (denoted as $\mathfrak{L} \w$ below) has prespecified eigenvalues $\lbd$, i.e, 
\begin{equation}
\label{original-opt-problem}
\begin{array}{ll}
& \mathfrak{L} \w = U \operatorname{Diag}(\lbd) U^{T} \\
 & \text { subject to }  \w \geq 0, U^{T} U=I
\end{array}
\end{equation}
We would like to make an important note that in general, given a sequence of non decreasing numbers $0=\lambda_1 \leq \lambda_2, ..., \lambda_n$ and a coarse graph $\Ghat$, it is not always possible to set the edge weights (always positive) so that the resulting eigenvalues of graph Laplacian of $\Ghat$ is $\{0=\lambda_1, \lambda_2, ..., \lambda_n\}$. We introduce some notations before we present the theorem. The theorem is developed in the context of \textit{inverse eigenvalue problem} for graphs \citep{barioli2004two, hogben2005spectral, fallat2020inverse}, which aims to characterize the all possible sets of eigenvalues that can be realized by symmetric matrices whose sparsity pattern is related to the topology of a given graph.

For a symmetric real $n\times n$ matrix $M$, the graph of $M$ is the graph with vertices $\{1, ..., n \}$ and edges $\left\{\{i, j\} \mid b_{i j} \neq 0 \text { and } i \neq j\right\}$. Note that the diagonal of $M$ is ignored in determining $\mathcal{G}(M)$. Let $S_n$ be the set of real symmetric $n \times n$ matrices. For a graph $\Ghat$ with $n$ nodes, define $\mathcal{S}(\Ghat)=\left\{M \in S_{n} \mid \mathcal{G}(M)=\Ghat\right\}$.

\begin{theorem}
\label{impossible-thm}\citep{barioli2004two, hogben2005spectral} 
If $T$ is a tree, for any $M\in \mathcal{S}(T)$, the diameter of $T$ is less than the number of distinct eigenvalues of $M$.
\end{theorem}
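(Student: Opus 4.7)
My plan is to combine the walk-sum expansion of matrix powers with a minimal-polynomial argument, focusing on a single well-chosen entry along a diameter-realizing path of $T$.

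First I would fix a path $v_0, v_1, \ldots, v_d$ in $T$ whose length equals the diameter $d$. The key structural fact is that since $T$ is a tree, the shortest (indeed, unique simple) walk from $v_0$ to $v_d$ is this path and it has length $d$; in particular there is no walk in $T$ from $v_0$ to $v_d$ of length strictly less than $d$, and the only walk of length exactly $d$ is the path itself, traversing each edge $(v_i,v_{i+1})$ exactly once and using no self-loops.

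Next I would transfer this combinatorial fact to the matrix algebra. Because $M \in \mathcal{S}(T)$, the off-diagonal support of $M$ is exactly the edge set of $T$, so the $(v_0, v_d)$-entry of $M^j$ expands as a sum over length-$j$ walks from $v_0$ to $v_d$, where a step along an edge $(u,v)$ contributes $M_{uv}$ and a stay-in-place step contributes the diagonal entry $M_{uu}$. From the tree observation, this gives $(M^j)_{v_0 v_d} = 0$ for all $j < d$, while $(M^d)_{v_0 v_d} = \prod_{i=0}^{d-1} M_{v_i v_{i+1}} \neq 0$ since each factor is a nonzero off-diagonal entry by definition of $\mathcal{S}(T)$.

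To conclude, I would argue by contradiction: assume $M$ has at most $d$ distinct eigenvalues. Since $M$ is symmetric (hence diagonalizable), its minimal polynomial has degree at most $d$, so $M^d$ lies in the linear span of $I, M, \ldots, M^{d-1}$. Reading off the $(v_0, v_d)$-entry of this linear relation and invoking the vanishing established in the previous step forces $(M^d)_{v_0 v_d} = 0$, contradicting the nonvanishing just proved. Therefore $M$ has at least $d+1$ distinct eigenvalues, which is exactly the claim that the diameter is strictly less than the number of distinct eigenvalues.

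The main obstacle is really pinpointing where the tree hypothesis is essential: the argument depends crucially both on the absence of any walk of length less than $d$ between the endpoints of a diameter and on the length-$d$ walk contribution being a single product that cannot cancel. In a general graph both properties can fail (shorter walks may exist via cycles, and multiple length-$d$ walks could contribute with cancelling signs), so one must invoke acyclicity explicitly in the walk-counting step. The remainder is standard bookkeeping between matrix powers and walks.
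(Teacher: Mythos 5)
The paper states this theorem as a cited result from the inverse eigenvalue problem literature \citep{barioli2004two, hogben2005spectral} and does not supply its own proof, so there is nothing in the paper to compare your argument against. Your proof is correct and, in fact, is the standard argument for this bound in that literature: expand $(M^j)_{v_0 v_d}$ as a sum over length-$j$ walks (with self-loop steps allowed to account for nonzero diagonal entries), and combine the resulting vanishing/nonvanishing pattern with the fact that a diagonalizable matrix with $q$ distinct eigenvalues has minimal polynomial of degree exactly $q$.

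Both pillars hold up cleanly. For the walk count, let $\phi(u)$ be the tree distance from $u$ to $v_d$; a self-loop step leaves $\phi$ fixed and an edge step changes $\phi$ by exactly $\pm 1$, so any walk from $v_0$ to $v_d$ needs at least $d$ edge steps. Hence $(M^j)_{v_0 v_d}=0$ for $j<d$, and a length-$d$ walk must consist of $d$ edge steps each strictly decreasing $\phi$, which in a tree forces the unique path and gives $(M^d)_{v_0 v_d}=\prod_{i=0}^{d-1} M_{v_i v_{i+1}}\neq 0$. The minimal-polynomial step then delivers the contradiction you describe, noting that $(I)_{v_0 v_d}=0$ because $v_0\neq v_d$. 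The only implicit assumption is $d\ge 1$; the degenerate case $d=0$ is a single-vertex tree, where the claim is trivially $0<1$. Your closing remark about where acyclicity enters (no short walks around cycles, and no possibility of sign cancellation among multiple length-$d$ walks) correctly identifies why the argument does not extend to general graphs.
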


For any graph $\Ghat$, its Laplacian (both combinatorial and normalized Laplacian) belongs to $\mathcal{S}(\Ghat)$, the above theorem therefore applies. In other words, given a tree $T$ and given a sequence of non-decreasing numbers $0=\lambda_1 \leq \lambda_2, ... \lambda_n$, as long as the number of distinct values in sequences is less than the diameter of $T$, then this sequence can not be realized as the eigenvalues of graph Laplacian of $T$, no matter how we set the edge weights. 

Therefore Instead of looking for the a graph with exact spectral alignment with original graph, which is impossible for some nondecreasing sequences as illustrated by the theorem \ref{impossible-thm}, we relax the equality in equation \ref{original-opt-problem} by instead minimizing the $|| \mathfrak{L} \w-U \operatorname{Diag}(\lbd) U^{T} ||_F^2$. %
We first present an algorithm for the complete graph $\widehat{G}$ of size $n$. This algorithm is essentially the special case of \citep{kumar2019structured}. We then show relaxing $\Ghat$ from the complete graph to the arbitrary graph will not change the convergence result. Before that, we introduce some notations.

\subsection{Notation}
\begin{definition}
The linear operator $\mathfrak{L}: \w \in \mathbb{R}_{+}^{\frac{n(n-1)}{2}} \rightarrow \mathfrak{L} \w \in \mathbb{R}^{n \times n}$ is defined as
\begin{equation*}
[\mathfrak{L} \w]_{i j}=\left\{\begin{array}{ll}-w_{i+d_{j}} & i>j \\ {[\mathfrak{L} \w]_{j i}} & i > j \\ \sum_{i \neq j}[\mathfrak{L} \w]_{i j} & i=j\end{array}\right.
\end{equation*}
where $d_{j}=-j+\frac{j-1}{2}(2 n-j)$
\end{definition}

A toy example is given to illustrate the operators, Consider a
weight vector $\w=\left[w_{1}, w_{2}, w_{3}, w_{4}, w_{5}, w_{6}\right]^{T}$, The Laplacian operator $\mathfrak{L}$ on $\w$ gives
\begin{equation*}
\mathfrak{L} \w=\left[\begin{array}{cccc}\sum_{i=1,2,3} w_{i} & -w_{1} & -w_{2} & -w_{3} \\ -w_{1} & \sum_{i=1,4,5} w_{i} & -w_{4} & -w_{5} \\ -w_{2} & -w_{4} & \sum_{i=2,4,6} w_{i} & -w_{6} \\ -w_{3} & -w_{5} & -w_{6} & \sum_{i=3,5,6} w_{i}\end{array}\right]
\end{equation*}

Adjoint operator $\lapstar{}$ is defined to satisfy $\langle\lap{\w}, Y\rangle = \langle\w, \lapstar{Y}\rangle.$

\subsection{Complete Graph Case}
Recall our goal is to 

\begin{equation}
\label{opt-problem-equ}
\begin{array}{ll}
\underset{\w, U}{\mini} & \left\|\mathfrak{L} \w-U \operatorname{Diag}(\lbd) U^{T}\right\|_{F}^{2} \\ 
\text { subject to } & \w \geq 0, U^{T} U=I
\end{array}
\end{equation}

\begin{algorithm}[H]
\label{opt-algo-appendix}
\DontPrintSemicolon
  \KwInput{coarse graph $\widehat{G}$, error tolerance $\epsilon$, iteration limit $T$ }
  \KwOutput{coarse graph with new edge weights}
  Initialize $U$ as random element in orthogonal group $O(n, \mathbb{R})$ and $t=0$. \\
   \While{$\epsilon$ is smaller than the threshold or $t>T$}
   {
   		Update $\w^{t+1}, U^{t+1}$ according to \ref{update-w-appendix} and Lemma \ref{update-u-appendix}\;
		Compute Error $\epsilon$	\;
		 $t=t+1$
   }
   From $w^t$, output coarse graph with new edge weights.
\caption{Iterative algorithm for edge weight optimization}
\end{algorithm}

where $\lbd$ is the desired eigenvalues of the smaller graph. One choice of $\lbd$ can be the first $n$ eigenvalues of the original graph of size $N$. $\w$ and $U$ are variables of size $n(n-1)/2$ and $n \times n$. %

The algorithm proceeds by iteratively updating $U$ and $\w$ while fixing the other one.

\textbf{Update for $\w$:} It can be seen when $U$ is fixed, minimizing $\w$ is equivalent to a non-negative quadratic problem

\begin{equation}
\label{equ:updatew}
\underset{\w \geq 0 }\mini \quad f(\w)=\frac{1}{2}\|\mathfrak{L} \w\|_{F}^{2}-\mathbf{c}^{T} \w
\end{equation}
which is strictly convex where $\mathbf{c}=\mathfrak{L}^{*}(U \operatorname{Diag}(\boldsymbol{\lambda}) U^{T})$.
It is easy to see that the problem is strictly convex. However, due the the non-negativity constraint for $\w$, there is no closed form solution. Thus we derive a majorization function via the following lemma. 
\begin{lemma}
 The function $f(w)$ is majorized at $w_t$ by the function
 \begin{equation}
 \label{equ:major}
g(\w | \w^{t})=f(\w^{t})+(\w-\w^{t})^{T} \nabla f(\w^{t})+\frac{L_{1}}{2}\left\|\w-\w^{t}\right\|^{2}
\end{equation}

where $\w^{t}$ is the update from previous iteration an $L_{1}=\|\mathfrak{L}\|_{2}^{2}=2 n$.
\end{lemma}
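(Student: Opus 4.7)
My plan is to verify the two defining properties of a majorizing surrogate: tangency at $\w^t$ (i.e.\ $g(\w^t \mid \w^t) = f(\w^t)$) and the global upper bound $g(\w \mid \w^t) \ge f(\w)$. Tangency is immediate since the last two terms of $g$ vanish at $\w = \w^t$. For the upper bound, observe that $f$ is a convex quadratic form: expanding $\|\mathfrak{L}\w\|_F^2 = \langle \mathfrak{L}\w, \mathfrak{L}\w\rangle = \w^T \mathfrak{L}^*\mathfrak{L} \w$, we get $\nabla f(\w) = \mathfrak{L}^*\mathfrak{L}\w - \mathbf{c}$ and $\nabla^2 f(\w) = \mathfrak{L}^*\mathfrak{L}$. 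Because $f$ is quadratic, Taylor's theorem is exact:
\begin{equation*}
f(\w) = f(\w^t) + (\w-\w^t)^T \nabla f(\w^t) + \tfrac{1}{2}(\w-\w^t)^T \mathfrak{L}^*\mathfrak{L}(\w-\w^t).
\end{equation*}
Subtracting this from $g(\w\mid\w^t)$ yields
\begin{equation*}
g(\w\mid\w^t) - f(\w) = \tfrac{1}{2}(\w-\w^t)^T \bigl(L_1 I - \mathfrak{L}^*\mathfrak{L}\bigr)(\w-\w^t),
\end{equation*}
so majorization reduces to the operator-norm bound $\|\mathfrak{L}\|_2^2 \le L_1 = 2n$.

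The main (and only nontrivial) step is establishing $\|\mathfrak{L}\|_2^2 = 2n$. The plan is to bound $\|\mathfrak{L}\w\|_F^2$ in terms of $\|\w\|^2 = \sum_{i<j} w_{ij}^2$ by splitting into off-diagonal and diagonal contributions. From the definition of $\mathfrak{L}$, the off-diagonal entries contribute $\sum_{i\neq j} w_{ij}^2 = 2\|\w\|^2$, while the diagonal entries contribute $\sum_i \bigl(\sum_{j\neq i} w_{ij}\bigr)^2$. Applying Cauchy--Schwarz to each diagonal term, $\bigl(\sum_{j\neq i} w_{ij}\bigr)^2 \le (n-1)\sum_{j\neq i} w_{ij}^2$, and summing over $i$ gives $\sum_i (\cdots)^2 \le 2(n-1)\|\w\|^2$. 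Adding the two contributions gives $\|\mathfrak{L}\w\|_F^2 \le 2n\,\|\w\|^2$ for all $\w$, so $\|\mathfrak{L}\|_2^2 \le 2n$. Tightness (and hence equality) is witnessed by any $\w$ whose edges all incide on a common vertex.

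Plugging $L_1 = 2n \ge \|\mathfrak{L}^*\mathfrak{L}\|_2$ into the displayed identity makes $L_1 I - \mathfrak{L}^*\mathfrak{L}$ positive semidefinite, so $g(\w\mid\w^t) \ge f(\w)$ for every $\w$, with equality at $\w = \w^t$. Combined with the trivially verified tangency, this establishes that $g(\,\cdot \mid \w^t)$ majorizes $f$ at $\w^t$, as claimed. The only subtle point is the Cauchy--Schwarz step for the diagonal contribution; everything else is bookkeeping with the quadratic form.
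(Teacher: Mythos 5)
The paper itself does not prove this lemma; it inherits it from \cite{kumar2019structured} and presents it without argument, so your self-contained derivation is a genuine addition rather than a parallel route. Your main chain is sound: since $f$ is a convex quadratic, the second-order Taylor expansion is exact, the gap $g(\w\mid\w^t)-f(\w)=\tfrac12(\w-\w^t)^T(L_1 I-\mathfrak{L}^*\mathfrak{L})(\w-\w^t)$ is nonnegative as soon as $L_1\ge\|\mathfrak{L}^*\mathfrak{L}\|_2=\|\mathfrak{L}\|_2^2$, and your split of $\|\mathfrak{L}\w\|_F^2$ into $2\|\w\|^2$ (off-diagonal) plus $\sum_i\bigl(\sum_{j\neq i}w_{ij}\bigr)^2\le 2(n-1)\|\w\|^2$ (diagonal via Cauchy--Schwarz) correctly yields $\|\mathfrak{L}\|_2^2\le 2n$, which is all the majorization actually requires.

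The one genuine error is your tightness witness. A weight vector supported on a star (all edges incident to one vertex) does \emph{not} attain the bound: for $\w$ with $w_{1j}=1$, $j=2,\dots,n$, one computes $\|\w\|^2=n-1$ and $\|\mathfrak{L}\w\|_F^2=(n-1)^2+(n-1)+2(n-1)=(n-1)(n+2)$, so the Rayleigh quotient is $n+2$, strictly less than $2n$ for $n>2$. The extremizer is instead the all-ones vector $\w=\mathbf 1$ (unit weights on the complete graph): then $\mathfrak{L}\mathbf 1 = nI-J$, $\|\mathfrak{L}\mathbf 1\|_F^2=n(n-1)^2+n(n-1)=n^2(n-1)$, and $\|\mathbf 1\|^2=n(n-1)/2$, giving exactly $2n$. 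Structurally this is because $\mathfrak{L}^*\mathfrak{L}=4I+B$ with $B$ the adjacency matrix of the Johnson graph $J(n,2)$ (the line graph of $K_n$), whose Perron eigenvector is constant with eigenvalue $2(n-2)$, so the top eigenvalue is $4+2(n-2)=2n$. This does not affect the validity of the majorization conclusion, since only the inequality $\|\mathfrak{L}\|_2^2\le 2n$ is used there, but as written your proof fails to establish the equality $\|\mathfrak{L}\|_2^2=2n$ asserted in the lemma statement.
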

After ignoring the constant terms in \ref{equ:major}, the majorized problem of \ref{equ:updatew} at $\w^{t}$ is given
\begin{equation}
\label{update-w-appendix}
\underset{\w \geq 0}{\mini} \quad g(\w | \w^{t})=\frac{1}{2} \w^{T} \w-\boldsymbol{a}^{T} \w,
\end{equation}
where $a=\w^{t}-\frac{1}{L_{1}} \nabla f(\w^{t})$ and $\nabla f(\w^{t})=\mathfrak{L}^{*}(\mathfrak{L} \w^{t})-\mathbf{c}$

\begin{lemma}
From the KKT optimality conditions we can easily obtain the optimal solution to \ref{equ:major} as
\begin{equation*}
\w^{t+1}=(\w^{t}-\frac{1}{L_{1}} \nabla f(\w^{t}))^{+}
\end{equation*}
where $(x)^{+}:=\max (x, 0)$.
\end{lemma}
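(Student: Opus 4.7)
The plan is to observe that the majorized problem
\[
\underset{\w \geq 0}{\mini} \quad g(\w\mid \w^{t})=\tfrac{1}{2}\w^{T}\w - \va^{T}\w
\]
is \emph{separable} across coordinates of $\w$, since both the quadratic term $\tfrac{1}{2}\w^T\w=\tfrac{1}{2}\sum_i w_i^2$ and the linear term $\va^T\w=\sum_i a_i w_i$ decompose into independent one-dimensional problems $\min_{w_i\geq 0}\tfrac{1}{2}w_i^2 - a_i w_i$. Thus it suffices to solve each of these scalar problems and concatenate the results; no joint optimization is needed.

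First I would write out the KKT system for the vector problem. Introducing a multiplier $\vmu\in\reals^{n(n-1)/2}$ for the constraint $\w\geq 0$, the Lagrangian is $\mathcal{L}(\w,\vmu)=\tfrac{1}{2}\w^T\w-\va^T\w-\vmu^T\w$, giving stationarity $\w-\va-\vmu=\vzero$, primal feasibility $\w\geq 0$, dual feasibility $\vmu\geq 0$, and complementary slackness $w_i\mu_i=0$ for each $i$. Because $g(\cdot\mid\w^t)$ is strictly convex (its Hessian is the identity) and the feasible set $\{\w\geq 0\}$ is a convex polyhedron, these conditions are both necessary and sufficient for a global minimizer, so any $\w$ satisfying them is \emph{the} optimum.

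Next I would verify coordinate-wise that $w_i^\star=\max(a_i,0)$ satisfies all KKT conditions. If $a_i\geq 0$, set $\mu_i=0$; then $w_i^\star=a_i\geq 0$ and $w_i^\star\mu_i=0$. If $a_i<0$, set $w_i^\star=0$ and $\mu_i=-a_i>0$; then stationarity $w_i^\star - a_i-\mu_i=0-a_i-(-a_i)=0$ holds, and complementary slackness is immediate. Hence $\w^\star=(\va)^+$ coordinate-wise. Substituting the definition $\va=\w^t-\tfrac{1}{L_1}\nabla f(\w^t)$ then yields the claimed update
\[
\w^{t+1}=\bigl(\w^{t}-\tfrac{1}{L_{1}}\nabla f(\w^{t})\bigr)^{+}.
\]

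I do not anticipate any real obstacle: the key fact is just separability plus strict convexity, which together reduce the problem to a one-dimensional projection onto $\reals_{\geq 0}$. The only thing worth being careful about is justifying that the KKT conditions are sufficient (not merely necessary), which is immediate here because the objective is strictly convex quadratic and the constraint set is a convex cone, so Slater's condition holds trivially (e.g.\ $\w=\vone$) and strong duality applies.
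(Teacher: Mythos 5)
Your proof is correct and follows exactly the route the paper gestures at (it simply cites ``KKT optimality conditions'' without spelling anything out). You have filled in the missing details — writing the Lagrangian, verifying the conditions coordinate-wise via separability, and noting that strict convexity makes the KKT conditions sufficient — which is precisely the argument the paper intends.
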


\textbf{Update for $U$:} When $\w$ is fixed, the problem of optimizing $U$ is equivalent to

\begin{equation}
\label{kkt-equalation}
\begin{array}{ll}
\underset{U}{\mini} & \text{tr}(U^T\lap{\w}U Diag(\boldsymbol{\lambda}))
\\ \text { subject to } & U^TU = I
\end{array}
\end{equation}

It can be shown that the optimal $U$ at iteration $t$ is achieved by $U^{t+1} = \text{eigenvectors}(L_w)$.
\begin{lemma}\label{update-u-appendix}
From KKT optimality condition, the solution to \ref{kkt-equalation} is given by $U^{t+1} = eigenvectors(\lap{\w}).$
\end{lemma}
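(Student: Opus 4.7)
\textbf{Proof proposal for Lemma \ref{update-u-appendix}.} The optimization problem is
\begin{equation*}
\min_{U} \; \operatorname{tr}\bigl(U^{T}\lap{\w}U\,\operatorname{Diag}(\lbd)\bigr) \quad \text{s.t.}\quad U^{T}U = I,
\end{equation*}
with $\lap{\w}$ symmetric PSD and $\operatorname{Diag}(\lbd)$ diagonal. My plan is to attack this through the KKT conditions as the statement instructs, and then to verify that the stationary points of the right form are in fact the minimizers. To keep the writeup clean I would first assume that the entries of $\lbd$ are pairwise distinct, and treat equal entries as a degenerate limiting case at the end.

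First I would form the Lagrangian $\mathcal{L}(U,\Lambda) = \operatorname{tr}\bigl(U^{T}\lap{\w}U\,\operatorname{Diag}(\lbd)\bigr) - \operatorname{tr}\bigl(\Lambda(U^{T}U - I)\bigr)$, where $\Lambda$ is a symmetric $n\times n$ multiplier matrix encoding the $\tfrac{n(n+1)}{2}$ scalar equality constraints in $U^{T}U=I$. Setting $\nabla_{U}\mathcal{L} = 0$ yields the stationarity condition
\begin{equation*}
\lap{\w}\, U\, \operatorname{Diag}(\lbd) \;=\; U\,\Lambda.
\end{equation*}
Left-multiplying by $U^{T}$ and using $U^{T}U = I$ gives $\Lambda = U^{T}\lap{\w}\,U\,\operatorname{Diag}(\lbd)$. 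The key observation is that $\Lambda$ is symmetric while $\operatorname{Diag}(\lbd)$ is diagonal, so if I write $S := U^{T}\lap{\w}\,U$ (which is symmetric) the relation $S\,\operatorname{Diag}(\lbd) = (S\,\operatorname{Diag}(\lbd))^{T}$ forces the entrywise equality $S_{ij}(\lambda_{j} - \lambda_{i}) = 0$. Under the assumption that the $\lambda_{i}$'s are distinct, this shows $S$ must be diagonal, i.e., $U^{T}\lap{\w}\,U = \operatorname{Diag}(\mu_{1},\dots,\mu_{n})$ for some scalars $\mu_{i}$. Since $\lap{\w}$ is similar to a diagonal matrix via the orthogonal $U$, the $\mu_{i}$'s must be precisely the eigenvalues of $\lap{\w}$ and the columns of $U$ must be a corresponding orthonormal basis of eigenvectors.

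At this point every stationary $U$ is an orthogonal matrix whose columns are eigenvectors of $\lap{\w}$, so the candidate minimizers form a finite collection indexed by the choice of ordering (and sign) of eigenvectors. To pin down the optimal ordering I would substitute back to get $\operatorname{tr}(U^{T}\lap{\w}U\,\operatorname{Diag}(\lbd)) = \sum_{i}\lambda_{i}\mu_{\sigma(i)}$ for the permutation $\sigma$ induced by $U$, and invoke the classical rearrangement inequality (equivalently the Hardy–Littlewood–Pólya inequality, or the Ruhe/von Neumann trace inequality) to conclude that the minimum is attained when the $\mu$'s are arranged in the order opposite to $\lbd$. For the degenerate case where some $\lambda_{i}$'s coincide, the stationarity equation only pins $S$ down up to block-diagonal rotations within the repeated blocks, but any such choice of $U$ still consists of an orthonormal eigenbasis of $\lap{\w}$, which is all the lemma claims.

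The main obstacle I anticipate is purely notational: the statement says simply ``$U^{t+1} = \mathrm{eigenvectors}(\lap{\w})$'' without specifying the ordering, so I would add a remark that the update uses the ordering prescribed by the rearrangement step above, which is the piece the KKT argument alone does not supply. The other subtle point is justifying that the symmetric multiplier is the correct parametrization (rather than a general $n\times n$ $\Lambda$) — this follows from the fact that $U^{T}U - I$ is symmetric, so only its symmetric part contributes to the Lagrangian and to the gradient computation.
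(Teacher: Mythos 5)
The paper itself does not supply a proof of this lemma: both this lemma and the convergence theorem that follows are imported wholesale from \cite{kumar2019structured}, to which the appendix defers. So there is no ``paper proof'' to compare against line by line; what you have produced is a self-contained derivation. Your Lagrangian/KKT argument is the standard one for trace optimization over the orthogonal group and is essentially correct: the stationarity condition $\lap{\w}\,U\,\operatorname{Diag}(\lbd) = U\Lambda$ together with $\Lambda = \Lambda^T$ forces $U^T\lap{\w}\,U$ to commute with $\operatorname{Diag}(\lbd)$, hence to be (block-)diagonal, hence $U$ to be an eigenbasis (possibly after a within-block rotation), and the rearrangement/von Neumann trace inequality then selects the ordering.

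Two points are worth tightening. First, the paper's subproblem is a sign typo: expanding the Frobenius objective $\left\|\mathfrak{L}\w - U\operatorname{Diag}(\lbd)U^T\right\|_F^2$ produces the cross term $-2\operatorname{tr}\bigl(U^T\lap{\w}\,U\,\operatorname{Diag}(\lbd)\bigr)$, so the $U$-subproblem should be a \emph{maximization}; your ``opposite ordering'' conclusion proves the lemma as literally stated but, for the algorithm to make sense, the correct pairing is same-sorted (increasing $\mu$'s against increasing $\lambda$'s), which is exactly what a standard $\mathrm{eigenvectors}(\cdot)$ routine returns. Your observation that the bare KKT argument does not pin down the ordering is right, and is precisely where the sign matters. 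Second, in the degenerate case your sentence ``any such choice of $U$ still consists of an orthonormal eigenbasis'' is not literally true: if $S = U^T\lap{\w}\,U$ is block-diagonal but not diagonal, the columns of $U$ only span the relevant invariant subspaces. The fix is the one you gesture at — replace $U$ by $UV$ where $V$ diagonalizes each block; $V$ commutes with $\operatorname{Diag}(\lbd)$ on those blocks, so the objective is unchanged and the eigenvector form of the update is still valid. With those two adjustments your proof is a complete and correct replacement for the citation the paper leans on.
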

The following theorem is proved at \citep{kumar2019structured}.
\begin{theorem}
The sequence $(\w^{t}, U^{t})$ generated by Algorithm \ref{opt-algo-appendix} converges to the set of $K K T$ points of \ref{opt-problem-equ}.
\end{theorem}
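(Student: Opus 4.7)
The plan is to cast Algorithm~\ref{opt-algo-appendix} as a two-block majorization-minimization (or block successive upper bound minimization, BSUM) scheme and then appeal to the standard convergence guarantees of that framework, e.g.\ in the sense of Razaviyayn-Hong-Luo. The two blocks are the nonnegative edge-weight vector $\w$ and the orthogonal factor $U$. For the $U$-block we compute the exact global minimizer of the subproblem by eigendecomposition (Lemma~\ref{update-u-appendix}), while for the $\w$-block we minimize a tight quadratic majorizer $g(\w\mid\w^t)$ of $f(\w)$ rather than $f$ itself.

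First I would verify the three standard MM conditions on the surrogate $g$: (i) tangency at the current iterate, $g(\w^t\mid\w^t)=f(\w^t)$; (ii) global upper bound, $g(\w\mid\w^t)\ge f(\w)$ for all $\w\ge 0$, which follows from the choice $L_1=\|\mathfrak{L}\|_2^2$ so that $\frac{L_1}{2}\|\w-\w^t\|^2-\frac{1}{2}\|\mathfrak{L}(\w-\w^t)\|_F^2\ge 0$; and (iii) gradient matching, $\nabla_\w g(\w^t\mid\w^t)=\nabla f(\w^t)$. Combined with the closed-form projected update $\w^{t+1}=(\w^t-L_1^{-1}\nabla f(\w^t))^+$, these yield the monotone descent $f(\w^{t+1})\le g(\w^{t+1}\mid\w^t)\le g(\w^t\mid\w^t)=f(\w^t)$, and together with the exact minimization in $U$ the overall objective $F(\w,U)=\|\mathfrak{L}\w-U\operatorname{Diag}(\lbd)U^T\|_F^2$ is non-increasing along the iterations. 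Boundedness of the iterates comes for free in $U$ (the Stiefel manifold is compact) and in $\w$ from coercivity of $F(\cdot,U)$: every entry $w_i$ appears explicitly as an off-diagonal entry of $\mathfrak{L}\w$, so $\mathfrak{L}$ is injective and $\|\mathfrak{L}\w\|_F\to\infty$ as $\|\w\|\to\infty$; the descent property then confines $\{\w^t\}$ to a compact sublevel set.

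By Bolzano-Weierstrass there is a convergent subsequence $(\w^{t_k},U^{t_k})\to(\w^*,U^*)$. Applying the BSUM convergence theorem, every such limit point is a coordinate-wise stationary point of $F$. For the $\w$-block, tangency and gradient matching of $g$ imply that $\w^*$ is a stationary point of the projected gradient iteration for $f$ on the orthant, which gives the KKT conditions $\w^*\ge 0$, $\nabla f(\w^*)\ge 0$, and $\w^*\odot\nabla f(\w^*)=0$. For the $U$-block, the exact eigenvector update yields $(\mathfrak{L}\w^*)\,U^*=U^*\operatorname{Diag}(\boldsymbol{\mu})$ for some multipliers $\boldsymbol{\mu}$, which is precisely the first-order stationarity condition for minimization over the Stiefel manifold with symmetric Lagrange multiplier, and together with the primal feasibility $(U^*)^TU^*=I$ this completes the KKT characterization of (\ref{opt-problem-equ}).

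The main obstacle I foresee is the non-uniqueness of the eigendecomposition in the $U$-update whenever $\mathfrak{L}\w^t$ develops repeated eigenvalues: the matrix iterate $U^t$ itself may then fail to converge even when the objective values and the $\w$-sequence do. I would handle this by working at the level of objective values together with subsequences, and by noting that the BSUM guarantee concerns stationarity of limit points rather than iterate convergence of $U^t$; alternatively, under a mild genericity assumption on $\lbd$ the eigendecomposition is locally unique up to signs and the ambiguity disappears. A secondary technicality is ensuring that the projection $(\cdot)^+$ in the $\w$-update does not break the MM descent near the boundary of the orthant, but this is a routine consequence of the KKT conditions for the strictly convex majorized subproblem (\ref{update-w-appendix}).
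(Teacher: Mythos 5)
The paper does not actually supply its own proof of this theorem: it states that the result is proved in Kumar et al.\ (2019), and Algorithm~\ref{opt-algo-appendix} is presented there as a special case of their method. Your reconstruction follows the standard route for such an alternating scheme---a quadratic majorizer with step $L_1=\|\mathfrak{L}\|_2^2$ on the $\w$-block, exact minimization via eigendecomposition on the $U$-block, monotone descent, boundedness (compactness of the orthogonal group plus coercivity in $\w$ via injectivity of $\mathfrak{L}$), and stationarity of subsequential limits---and this is almost certainly the same structure as the cited proof. Your verification of the three MM conditions, the descent chain $f(\w^{t+1})\le g(\w^{t+1}\mid\w^t)\le g(\w^t\mid\w^t)=f(\w^t)$, and the per-block KKT characterizations are all correct, and you are right to flag eigendecomposition non-uniqueness as the principal nuisance.

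One caution is warranted about the phrase ``applying the BSUM convergence theorem.'' The Razaviyayn--Hong--Luo BSUM theorem assumes each block constraint set is closed and \emph{convex}; the Stiefel/orthogonal constraint on $U$ violates this, so the theorem cannot be invoked as a black box. Fortunately you do not need it: because the $U$-subproblem is solved to global optimality over a compact set, a more elementary argument closes the gap. Monotone descent and boundedness yield a convergent subsequence $(\w^{t_k},U^{t_k})\to(\w^\ast,U^\ast)$; for every orthogonal $U$ one has $\operatorname{tr}\bigl((U^{t_k})^T(\mathfrak{L}\w^{t_k})U^{t_k}\operatorname{Diag}(\lbd)\bigr)\le\operatorname{tr}\bigl(U^T(\mathfrak{L}\w^{t_k})U\operatorname{Diag}(\lbd)\bigr)$, and passing to the limit shows $U^\ast$ is a global minimizer over the Stiefel manifold at $\w^\ast$, hence satisfies the stationarity condition $(\mathfrak{L}\w^\ast)U^\ast=U^\ast\operatorname{Diag}(\boldsymbol{\mu})$; and the fixed-point relation $\w^\ast=(\w^\ast-L_1^{-1}\nabla f(\w^\ast))^+$ gives the complementarity conditions for $\w$. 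Stating it this way both sidesteps the convexity mismatch in the BSUM assumptions and makes your eigendecomposition caveat harmless, since the claim concerns stationarity of limit points rather than convergence of the matrix iterates $U^t$ themselves.
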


\subsection{Non-complete Graph Case}
\label{convergence-appendix}
The only complication in the case of the non-complete graph is that $\w$ has only $|E|$ number of free variables instead of $\frac{n(n-1)}{2}$ variables as the case of the complete graph. We will argue that $w$ will stay at the subspace of dimension $|E|$ during the iteration.

For simplicity, given a non-compete graph $\Ghat = (\Vhat, \Ehat)$, let us denote $\vhat = [n] = \{1, 2, ..., n\}$ and each edge will be represented as $(i, j)$ where $i > j$, and $i, j\in [n]$. It is easy to see that we can map each edge $(i, j)$ ($i>j$) to $k$-th coordinate of $\w$ via $k = \Phi(i, j) = i-j + \frac{(j-1)(2p-j)}{2}$.

 Let us denote $\mask{\w}$ (to emphasize its dependence on $\Ghat$, it is also denoted as $\wg$ later.) to be the same as $\w$ on coordinates that corresponds to edges in $G$ and 0 for the rest entries. In other words, %

\begin{equation*}
  \mask{\w}[k] =
  \begin{cases}
   \w[k] & \text{if } \Phi^{-1}(k)\in E \\
   0    & \text {o.w.}
  \end{cases}
\end{equation*}

Similarly, for any symmetric matrix $A$ of size $n \times n$
\begin{equation*}
  \mask{A}[i, j] =
  \begin{cases}
   A[i, j] & \text{if} (i, j) \in E \text{ or } (j, i) \in E \\
   0    & \text {o.w.}
  \end{cases}
\end{equation*}
Let us also define a $\Ghat$-subspace of $\w$ (denoted as $\Ghat$-subspace when there is no ambiguity) as $\{\mask{\w} | \w \in \mathbb{R}_{+}^{n(n-1) / 2} \}$.  
What we need to prove is that if we initialize the algorithm with $\wg$ instead of $\w$, $\wgt$ will remain in the $\Ghat$-subspace of $\w$ for any $t \in \mathbb{Z}_{+}$. 

First, we have the following lemma.
\begin{lemma}
We have 
\begin{enumerate}
\item $\lap{\mask{w}} = \mask{\lap{w}}$. 
\item $\langle\mask{\w_1}, \w_2\rangle = \langle\w_1, \mask{\w_2}\rangle = \langle\mask{\w_1}, \mask{\w_2}\rangle.$
\item $\mathfrak{L}^{*}\mask{Y} = \mask{\mathfrak{L}^{*}Y}$
\end{enumerate}
\end{lemma}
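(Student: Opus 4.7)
The plan is to establish the three identities in increasing order of intricacy by exploiting the fact that the vector operator $\w\mapsto\mask{\w}$ is the orthogonal projection of $\reals^{n(n-1)/2}$ onto the coordinate subspace indexed by the edges of $\Ghat$, and that the matrix operator $A\mapsto\mask{A}$ is the analogous restriction of symmetric matrices to the sparsity pattern of a Laplacian on $\Ghat$ (with diagonals recomputed as row sums of the surviving off-diagonals).

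First I would dispatch part (2), which concerns only vectors. The vector mask is immediately self-adjoint under the standard Euclidean inner product because it is diagonal in the coordinate basis, and it is idempotent since masking twice changes nothing. Self-adjointness yields $\langle \mask{\w_1},\w_2\rangle=\langle \w_1,\mask{\w_2}\rangle$, and combining it with idempotence gives $\langle \w_1,\mask{\w_2}\rangle=\langle \w_1,\mask{\mask{\w_2}}\rangle=\langle \mask{\w_1},\mask{\w_2}\rangle$. A few lines of coordinate bookkeeping suffice.

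Next I would handle part (1) by an entrywise comparison. For each off-diagonal position $(i,j)$ with $i\ne j$, unpacking the definitions gives $[\lap{\mask{\w}}]_{ij}=-\mask{\w}[\Phi(i,j)]$ and $[\mask{\lap{\w}}]_{ij}=-\w[\Phi(i,j)]$ whenever $(i,j)\in\Ehat$, and both are $0$ otherwise, so the off-diagonals agree. For the diagonal positions, both sides must evaluate to $\sum_{j:(i,j)\in\Ehat}\w[\Phi(i,j)]$: on the left this is the row-sum built into the definition of $\lap{\cdot}$ applied to the already-masked vector; on the right one uses the convention above, that masking a Laplacian-type matrix re-computes the diagonal as the row sum of the surviving off-diagonals. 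This is the main subtlety, and I would spell out the convention before beginning the entry check, since a purely literal reading of the definition of $\mask{A}$ only prescribes behavior on off-diagonal positions and would otherwise make $(1)$ fail on the diagonal.

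Finally, (3) drops out by combining (1) and (2) with the defining property of the adjoint $\langle\lap{\w},Y\rangle=\langle \w,\lapstar{Y}\rangle$. For an arbitrary $\w$,
\begin{align*}
\langle \w,\lapstar{\mask{Y}}\rangle
&=\langle\lap{\w},\mask{Y}\rangle=\langle\mask{\lap{\w}},Y\rangle\\
&=\langle\lap{\mask{\w}},Y\rangle=\langle \mask{\w},\lapstar{Y}\rangle=\langle \w,\mask{\lapstar{Y}}\rangle,
\end{align*}
where the second and fifth equalities use the matrix analogue of (2) (self-adjointness of the matrix mask under the Frobenius inner product, which reduces to the vector case entrywise), the middle equality uses (1), and the first and fourth use the adjoint property. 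Since $\w$ was arbitrary, $\lapstar{\mask{Y}}=\mask{\lapstar{Y}}$. The hard part will be pinning down the matrix-mask convention so that (1) reads as written; once that is fixed, (2) and (3) reduce to routine symbol-pushing with projections and the adjoint relation.
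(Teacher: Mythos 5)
Your overall architecture matches the paper's exactly: part (2) is the observation that the vector mask is an orthogonal coordinate projection, and part (3) is established by the identical six-term inner-product chain using the adjoint relation together with parts (1) and (2). The extra care you take on part (1) is warranted and correct: the paper's written definition of the matrix mask only prescribes off-diagonal positions, hence literally forces $\mask{A}_{ii}=0$, which would make (1) fail on the diagonal. Some convention is needed, and the one you propose (recomputing the diagonal of $\mask{A}$ as minus the row sum of the surviving off-diagonals) is the one that makes (1) hold.

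The gap is that this very convention breaks the second equality of your chain. You invoke self-adjointness of the matrix mask under the Frobenius inner product on the grounds that it ``reduces to the vector case entrywise,'' but with recomputed diagonals $\mask{\cdot}$ is no longer an entrywise operator on matrices: $\mask{Y}_{ii}$ now mixes off-diagonal entries of $Y$, and $\mask{\cdot}$ is not its own Frobenius adjoint. Concretely, with $\Ehat=\{(1,2)\}$, taking $A$ nonzero only at the $(1,1)$ entry and $Y$ nonzero only at the symmetric pair $(1,2),(2,1)$ gives $\langle A,\mask{Y}\rangle_F=\mask{Y}_{11}=-Y_{12}\neq 0$ while $\mask{A}=0$, so $\langle\mask{A},Y\rangle_F=0$. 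Moreover, from $\lapstar{Y}[k]=Y_{ii}+Y_{jj}-2Y_{ij}$ with $(i,j)=\Phi^{-1}(k)$, at any non-edge index $k$ the left side of (3) evaluates to $\mask{Y}_{ii}+\mask{Y}_{jj}$, which is generically nonzero under your convention, whereas the right side is $0$ by definition of the vector mask. So no single matrix-mask convention simultaneously validates (1) and the Frobenius self-adjointness on which your second step (and the paper's) relies, and the chain cannot be closed as written. This tension is present in the paper's own one-line proof of (3), only hidden by dispatching (1) and (2) as ``by definition''; your attempt to spell out the convention is precisely what exposes it.
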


\begin{figure}[t]
\label{approx}
\begin{center}
\includegraphics[scale=.4]{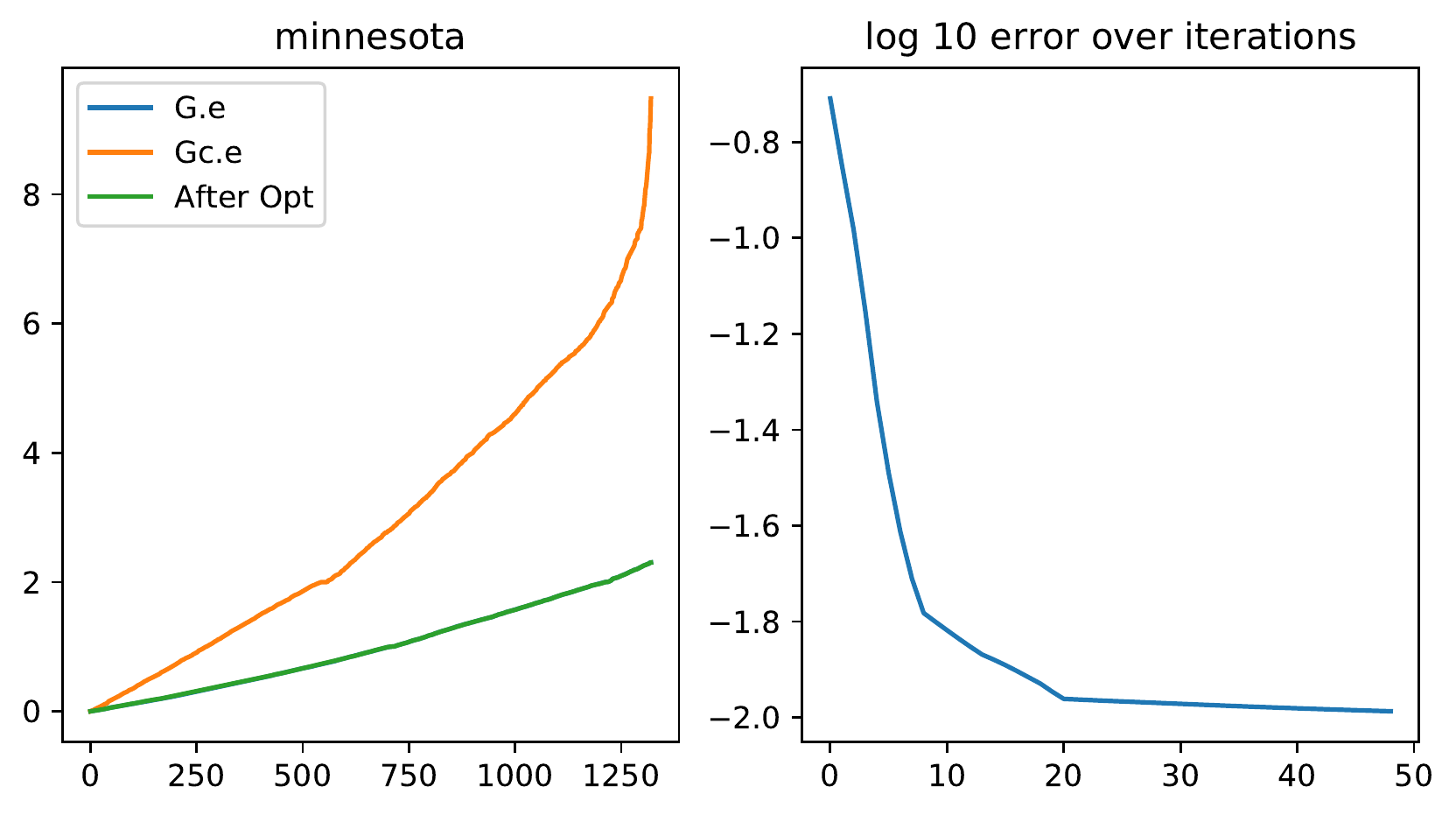}
\includegraphics[scale=.4]{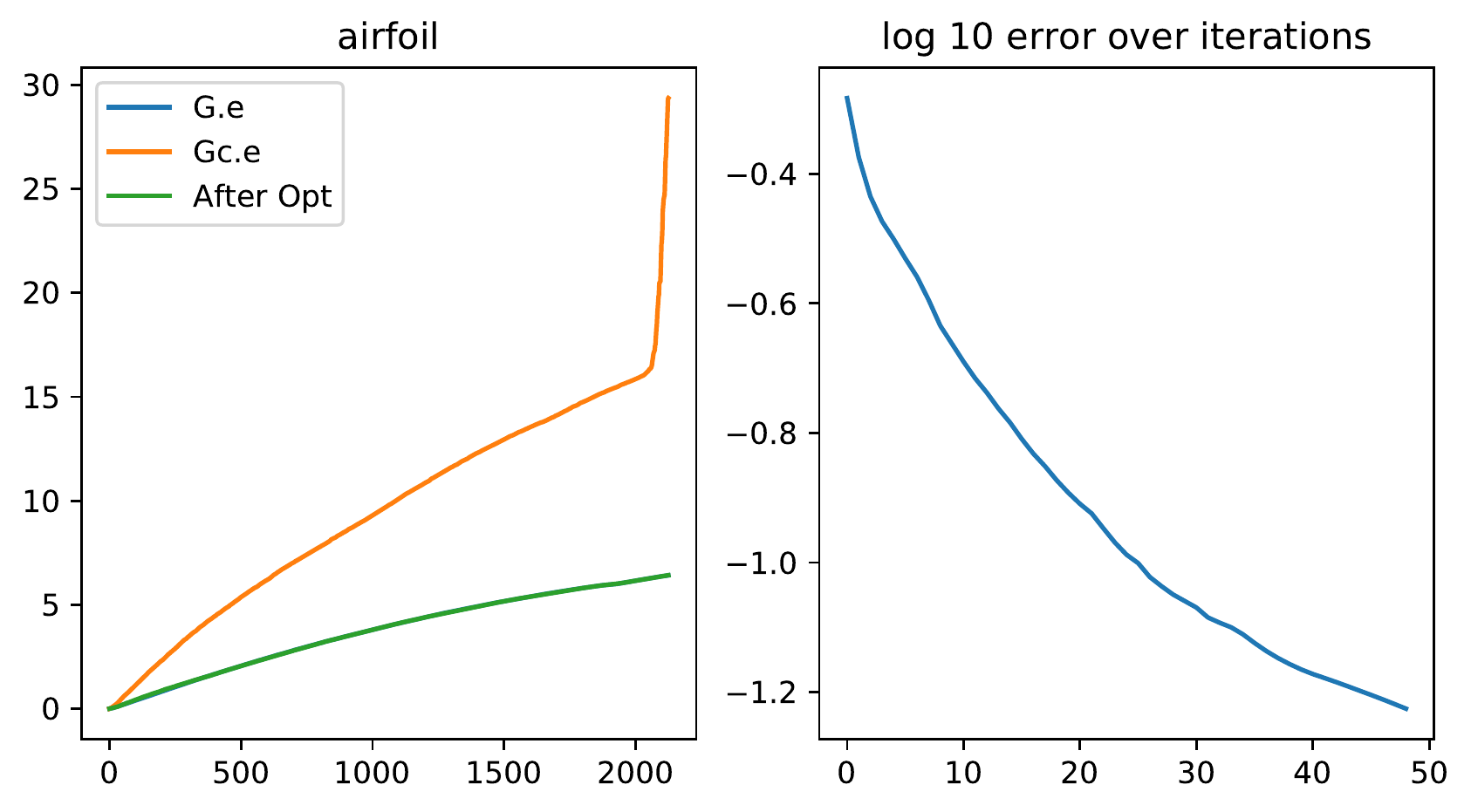}
\caption{After optimizing edge weights, we can construct a smaller graph with eigenvalues much closer to eigenvalues of original graph. $G.e$ and $Gc.e$ stand for the eigenvalues of original graph and coarse graph output by Variation-Edge algorithm. After-Opt stands for the eigenvalues of graphs where weights are optimized. The error is measured by the maximum absolute difference over \textit{all} eigenvalues of original graph and the coarse graph (after optimization).}
\label{algo-appendix}
\end{center}
\end{figure}

\begin{proof}
Lemma 1 and 2 can be proved by definition. Now we prove the last lemma.
For any $\w$ $\in \mathbb{R}_{+}^{\frac{n(n-1)}{2}}$ and $Y\in \mathbb{R}^{n \times n}$
\begin{equation*}
\langle\w, \lapstar{\mask{Y}} \rangle = \langle\lap{\w}, \mask{Y}\rangle = \langle\mask{\lap{\w}}, Y\rangle
 = \langle\lap{\mask{\w}}, Y\rangle = \langle\mask{\w}, \lapstar{Y}\rangle = \langle\w, \mask{\lapstar{Y}}\rangle
\end{equation*}
where the fourth equation follows from the definition of $\lapstar{}$ and the other equations directly follows from the previous two lemmas.
Therefore $\lapstar{\mask{Y}} = \mask{\lapstar{Y}}$.
\end{proof}

Recall that we minimize the following objective when updating $\wg$

\begin{equation*}
\underset{\wg \geq 0}{\operatorname{minimize}} \quad \left \| \mathfrak{L} \wg-U \operatorname{Diag}(\boldsymbol{\lambda}) U^{T}\right\|_{F}^{2}
\end{equation*}

which is equivalent to be 
\begin{equation}
\label{mask-equ}
\underset{\wg \geq 0}{\operatorname{minimize}} \quad \left \| \mathfrak{L} \wg- \mask{\A} \right\|_{F}^{2}
\end{equation}

Now following the same process for the case of complete graph. Equation \ref{mask-equ} is equivalent to 
\begin{equation*}
\underset{\wg \geq 0}{\operatorname{minimize}} \quad f(\wg)=\frac{1}{2}\|\mathfrak{L} \wg\|_{F}^{2}-\mathbf{c}^{T} \wg
\end{equation*}
where $\mathbf{c}=\mathfrak{L}^{*}(\mask{\A} )$.

Use the same majorization function as the case of complete graph, we can get the following update rule
\begin{lemma}
From the KKT optimality conditions we can easily obtain the optimal solution to as
\begin{equation*}
\wg^{t+1}=(\wg^{t}-\frac{1}{L_{1}} \nabla f(\wg^{t}))^{+}
\end{equation*}
where $(x)^{+}:=\max (x, 0)$ and $\nabla f(\wg^{t}) = \lapstar{(\lap{\wgt} - \mask{\A})}$.
\end{lemma}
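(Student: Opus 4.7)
The plan is to reduce the masked subproblem to the same form as the complete-graph subproblem and invoke first-order optimality, then separately verify the subspace invariance that the lemma implicitly relies on.

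First, I would write the majorized objective explicitly. Following the complete-graph derivation verbatim but with target $\mask{\A}$ in place of $\A$, the smooth component $f(\wg) = \frac{1}{2}\|\lap{\wg}\|_F^2 - \mathbf{c}^T \wg$ with $\mathbf{c} = \lapstar{\mask{\A}}$ is still convex with Lipschitz-gradient constant $L_1 = 2n$, since the operator $\lap{}$ itself is unchanged and only the constant target has been replaced. The quadratic majorizer at $\wgt$ then reduces (after dropping terms constant in $\wg$) to
\[
g(\wg \mid \wgt) = \frac{1}{2}\|\wg\|^2 - \mathbf{a}^T \wg, \qquad \mathbf{a} := \wgt - \frac{1}{L_1}\nabla f(\wgt),
\]
with $\nabla f(\wgt) = \lapstar{(\lap{\wgt} - \mask{\A})}$.

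Second, I would solve $\min_{\wg \geq 0} g(\wg \mid \wgt)$ via KKT. Stationarity gives $\wg - \mathbf{a} = \boldsymbol{\mu}$ with $\boldsymbol{\mu} \geq 0$, $\wg \geq 0$, and coordinate-wise complementarity $\mu_i (\wg)_i = 0$. A direct case split on each coordinate (either $(\wg)_i = 0$ with $\mu_i = -a_i \geq 0$, or $\mu_i = 0$ with $(\wg)_i = a_i \geq 0$) yields the coordinate-wise projection $\wg^{t+1} = (\mathbf{a})^+$, which is the claimed formula.

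Third, although the lemma only states the update rule, the algorithm's correctness depends on subspace invariance: if $\wgt$ lies in the $\Ghat$-subspace, so must $\wg^{t+1}$. This follows cleanly from the preceding auxiliary lemma. Since $\wgt$ is masked, $\lap{\wgt} = \mask{\lap{\wgt}}$ by item~1, so $\lap{\wgt} - \mask{\A}$ is a masked matrix; by item~3, $\lapstar{}$ of a masked matrix is itself masked, so $\nabla f(\wgt)$ is masked. Thus $\mathbf{a}$ is masked, and since the positive-part map acts coordinate-wise, it preserves the zero pattern, so $\wg^{t+1}$ lies in the $\Ghat$-subspace. The main (mild) obstacle is the bookkeeping to ensure every ``masked in implies masked out'' step survives the substitution of $\mask{\A}$ for $\A$; once tracked through $\lapstar{}$, the convergence argument of \citet{kumar2019structured} transfers verbatim because on the $\Ghat$-subspace the iterates behave exactly as in the complete-graph setting applied to the restricted objective.
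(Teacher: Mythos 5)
Your proposal is correct and matches the paper's approach: both reduce the masked subproblem to the complete-graph projected quadratic by substituting $\mask{\A}$ for $\A$, then read the update off the KKT conditions for $\min_{\wg\ge 0}\frac12\|\wg\|^2-\mathbf a^T\wg$. You are slightly more explicit than the paper, which simply says ``use the same majorization function as the case of complete graph''; your coordinate-wise case split spells out the projection step. Your third paragraph (subspace invariance) is not part of the lemma itself in the paper --- it is argued immediately afterward using the same chain $\nabla f(\wgt)=\lapstar{(\mask{\lap{\wt}}-\mask{\A})}=\mask{\lapstar{(\lap{\wt}-\A)}}$ --- but including it is harmless and makes clear why the lemma's formula is meaningful for the restricted problem.
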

Since $\nabla f(\wgt) = \lapstar{(\lap{\mask{\wt}})-\mask{A})} =
 \lapstar{({\mask{\lap{\wt}} - \mask{A}})} 
 = \mask{\lapstar{(\lap{\wt} -A )}}$ 
 where $A=\A$, therefore $\wg^{t+1}$ will remain in the $\Ghat$-subspace if $\wg^t$ is in the $\Ghat$-subspace. Since $\wg^{0}$ is initialized inside $\Ghat$-subspace, by induction $\wg^{t}$ stays in the $\Ghat$-subspace for any $t\in \mathbb{Z}^+$. Therefore, we conclude
\begin{theorem}
In the case of non-complete graph, the sequence $(\w^{t}, U^{t})$ generated by Algorithm \ref{opt-algo-appendix} converges to the set of $K K T$ points of \ref{opt-problem-equ}.
\end{theorem}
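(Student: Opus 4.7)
The plan is to reduce the non-complete graph case to the already-established complete graph case by showing that the iterates $\wg^{t}$ remain in the $\Ghat$-subspace $\{\mask{\w}:\w\in\reals_+^{n(n-1)/2}\}$ throughout the algorithm. Once this invariance is established, the updates become identical to the MM updates from the complete graph analysis but with $\A$ replaced by $\mask{\A}$, and the convergence claim follows from the Kumar et al.\ (2019) result cited earlier.

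First, I would reformulate the objective. Because $(\lap{\wg})_{ij}=0$ whenever $(i,j)\notin\Ehat$, the orthogonal decomposition $\A=\mask{\A}+(\A-\mask{\A})$ yields
\begin{equation*}
\|\lap{\wg}-\A\|_F^2 = \|\lap{\wg}-\mask{\A}\|_F^2 + \|\A-\mask{\A}\|_F^2,
\end{equation*}
so the $U$-update minimizing in $\wg$ is equivalent to the masked problem. Expanding gives the strictly convex program $\min_{\wg\ge 0}\; \tfrac12\|\lap{\wg}\|_F^2-\mathbf{c}^T\wg$ with $\mathbf{c}=\lapstar{\mask{\A}}$, which is exactly the form used in the complete graph case. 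The $U$-update is unchanged, since $U^{t+1}$ is the eigenbasis of $\lap{\wgt}$ and does not depend on the $\Ghat$-subspace constraint.

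Second, I would verify the invariance. Applying the same quadratic majorization with Lipschitz constant $L_1=\|\lap{}\|_2^2=2n$, the proximal step reads $\wg^{t+1}=\bigl(\wgt-\tfrac{1}{L_1}\nabla f(\wgt)\bigr)^+$, where $\nabla f(\wgt)=\lapstar{(\lap{\wgt}-\mask{\A})}$. Using the three identities from the preceding lemma,
\begin{equation*}
\nabla f(\wgt) \;=\; \lapstar{(\lap{\mask{\wt}}-\mask{A})} \;=\; \lapstar{(\mask{\lap{\wt}}-\mask{A})} \;=\; \mask{\lapstar{(\lap{\wt}-A)}},
\end{equation*}
which lies in the $\Ghat$-subspace whenever $\wgt$ does. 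Since the $\Ghat$-subspace is closed under scaling, subtraction, and the componentwise positive part (zero stays zero), induction from the initialization $\wg^{0}$ inside the subspace guarantees $\wg^{t}$ remains inside for every $t\in\sZ_+$.

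Finally, having established that the iterates never leave the $\Ghat$-subspace, the sequence $(\wgt,U^t)$ is precisely the MM-BCD sequence for the complete graph problem with target $\mask{\A}$, so convergence of $(\wgt,U^t)$ to the KKT set of \ref{opt-problem-equ} follows directly from the previously stated theorem for complete graphs. The main obstacle I anticipate is verifying carefully that the KKT conditions for the restricted (masked) problem coincide with the KKT conditions of \ref{opt-problem-equ} when the feasible set is restricted to the $\Ghat$-subspace; this requires checking that the Lagrange multipliers for the extra ``non-edge entries are zero'' constraints can be chosen consistently, which should follow from the fact that $\lapstar{}$ commutes with the masking operator, so the stationarity conditions decouple cleanly between edge and non-edge coordinates.
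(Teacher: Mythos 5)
Your proof takes essentially the same route as the paper: rewrite the objective with the masked target $\mask{\A}$, use the three masking identities to show $\nabla f(\wgt)=\mask{\lapstar{(\lap{\wt}-A)}}$, and conclude by induction that the iterates never leave the $\Ghat$-subspace, after which the complete-graph convergence theorem of Kumar et al.\ applies. You add two things the paper leaves implicit: the orthogonal decomposition $\|\lap{\wg}-\A\|_F^2 = \|\lap{\wg}-\mask{\A}\|_F^2 + \|\A-\mask{\A}\|_F^2$ justifying the reformulation, and the observation that one should still check the KKT conditions of the masked problem agree with those of the original formulation on the restricted subspace. That last point is a real subtlety which the paper's proof does not discuss at all, so flagging it (even without fully resolving it) is a genuine improvement in rigor over the paper's own argument.
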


\textbf{Remark}: since for each iteration a full eigendecomposition is conducted, the computational complexity is $O(n^3)$ for each iteration, which is certainly prohibitive for large scale application. Another drawback is that the algorithm is not adaptive to the data so we have to run the same algorithm for graphs from the same generative distribution. The main takeaway of this algorithm is that it is possible to improve the spectral alignment of the original graph and coarse graph by optimizing over edge weights, as shown in Figure \ref{algo-appendix}.

\section{More results}
We list the full results from Section \ref{flex} for loss involving normalized Laplacian and conductance. 

\label{moreret}

\subsection{Details about Section \ref{sec:proof-of-concept}}
We list the full details of Section \ref{sec:proof-of-concept}.
\begin{table}
\renewcommand{\arraystretch}{1.2}
\label{better-fit}
\caption{Relative eigenvalue error (Eigenerror) by different coarsening algorithm and the improvement (in percentage) after applying \nntight.}
\begin{center}
\scalebox{0.8}{
\begin{tabular}{@{}lllllll@{}}
\toprule
  Dataset &    Ratio  & Affinity & \makecell[l]{Algebraic \\ Distance} &    \makecell[l]{Heavy \\ Edge} & \makecell[l]{Local var \\ (edges)} & \makecell[l]{Local var \\ (neigh.)} \\
\midrule 
     &   0.3 &  0.262 (82.1\%) &   0.208 (64.9\%) &  0.279 (80.3\%) &     0.102 (-67.6\%) &            0.184 (69.6\%) \\
     Airfoil &   0.5 &   0.750 (91.7\%) &   0.672 (88.2\%) &  0.568 (86.1\%) &      0.336 (43.2\%) &            0.364 (73.6\%) \\
      &   0.7 &  2.422 (96.4\%) &   2.136 (93.5\%) &  1.979 (96.7\%) &      0.782 (78.8\%) &            0.876 (87.8\%) \\ \cmidrule{2-7}
    &   0.3 &  0.322 (-5.0\%) &    0.206 (0.5\%) &  0.357 (-4.5\%) &      0.118 (-5.9\%) &           0.114 (-14.0\%) \\
   Minnesota &   0.5 &  1.345 (49.8\%) &   1.054 (57.2\%) &  0.996 (30.1\%) &       0.457 (5.5\%) &             0.382 (1.6\%) \\ 
    &   0.7 &   4.290 (70.4\%) &   3.787 (76.6\%) &  3.423 (58.9\%) &      2.073 (55.0\%) &                   1.572 (38.1\%) \\ \cmidrule{2-7}
        &   0.3 &  0.202 (10.4\%) &    0.108 (5.6\%) &   0.291 (1.4\%) &       0.113 (6.2\%) &           0.024 (-58.3\%) \\
       Yeast &   0.5 &  0.795 (49.7\%) &   0.485 (51.3\%) &   1.080 (37.4\%) &      0.398 (27.9\%) &            0.133 (21.1\%) \\ 
        &   0.7 &   2.520 (60.4\%) &   2.479 (72.4\%) &  3.482 (52.9\%) &      2.073 (58.9\%) &            0.458 (45.9\%) \\ \cmidrule{2-7}
        &   0.3 &     0.046 (32.6\%) & 0.217 (50.0\%) & 0.258 (74.4\%) & 0.007 (-328.5\%) & 0.082 (74.8\%) \\
       Bunny & 0.5 &   0.085 (84.7\%) &  0.372 (69.1\%) & 0.420 (61.2\%) & 0.057 (19.3\%) & 0.169 (81.6\%)\\
        &   0.7 &   0.182 (84.6\%) &  0.574 (78.6\%) & 0.533 (75.4\%) & 0.094 (45.7\%) & 0.283 (73.9\%)\\
\bottomrule
\end{tabular}
}
\end{center}
\end{table}

\subsection{Details about Section \ref{subsec:syn_graph} and \ref{subsec:real_graph}}
We list the full details of Section \ref{subsec:syn_graph} and \ref{subsec:real_graph}. 

\begin{table}
\renewcommand{\arraystretch}{1.2}
\caption{Loss: quadratic loss. Laplacian: \textit{combinatorial} Laplacian for both original and coarse graphs. Each entry $x (y)$ is: $x =$ loss w/o learning, and $y =$ improvement percentage. BL stands for the baseline. } 
\label{loss_quadratic_lap_none_eigen_False}
\begin{center}
\scalebox{.8}{
\begin{tabular}{@{}llllllll@{}}
\toprule
 Dataset &     Ratio & BL & Affinity & \makecell[l]{Algebraic \\ Distance} &    \makecell[l]{Heavy \\ Edge} & \makecell[l]{Local var \\ (edges)} & \makecell[l]{Local var \\ (neigh.)} \\ \midrule %
  &  0.3 &   0.36 (6.8\%) &   0.22 (2.9\%) &     0.56 (1.9\%) &   0.49 (1.7\%) &       0.06 (16.6\%) &        0.17 (73.1\%) \\
    BA &  0.5 &  0.44 (16.1\%) &   0.44 (4.4\%) &     0.68 (4.3\%) &   0.61 (3.6\%) &       0.21 (14.1\%) &        0.18 (72.7\%) \\
     &  0.7 &  0.21 (32.0\%) &  0.43 (16.5\%) &    0.47 (17.7\%) &   0.4 (19.3\%) &        0.2 (48.2\%) &        0.11 (11.1\%) \\
\midrule
       &  0.3 &  0.25 (28.7\%) &  0.08 (24.8\%) &    0.05 (21.5\%) &    0.09 (15.6\%) &     0.0 (-254.3\%) &         0.0 (60.6\%) \\
      CS &  0.5 &  0.39 (40.0\%) &  0.21 (29.8\%) &    0.17 (26.4\%) &    0.14 (20.9\%) &       0.06 (36.9\%) &         0.0 (59.0\%) \\
       &  0.7 &  0.46 (55.5\%) &  0.57 (36.8\%) &    0.33 (36.6\%) &    0.28 (29.3\%) &       0.18 (44.2\%) &        0.09 (26.5\%) \\
\midrule
  &  0.3 &  0.26 (35.4\%) &  0.36 (36.6\%) &     0.2 (29.7\%) &     0.1 (18.6\%) &    0.0 (-42.0\%) &          0.0 (2.5\%) \\
 Physics &  0.5 &   0.4 (47.4\%) &  0.37 (42.4\%) &    0.32 (49.7\%) &    0.14 (28.0\%) &       0.15 (60.3\%) &         0.0 (-0.3\%) \\
  &  0.7 &  0.47 (60.0\%) &  0.53 (55.3\%) &    0.42 (61.4\%) &    0.27 (34.4\%) &       0.25 (67.0\%) &        0.01 (-4.9\%) \\
 \midrule
   &  0.3 &   0.16 (5.3\%) &   0.17 (2.0\%) &     0.08 (4.3\%) &     0.18 (2.7\%) &       0.01 (16.0\%) &        0.02 (33.7\%) \\
  Flickr &  0.5 &  0.25 (10.2\%) &   0.25 (5.0\%) &     0.19 (6.4\%) &     0.26 (5.6\%) &       0.11 (11.2\%) &        0.07 (21.8\%) \\
   &  0.7 &  0.28 (21.0\%) &  0.31 (12.4\%) &    0.37 (18.7\%) &    0.33 (11.3\%) &        0.2 (17.2\%) &         0.2 (21.4\%) \\
  \midrule
   &  0.3 &  0.17 (13.6\%) &   0.06 (6.2\%) &     0.03 (9.5\%) &      0.1 (4.7\%) &       0.01 (18.8\%) &         0.0 (39.9\%) \\
  PubMed &  0.5 &   0.3 (23.4\%) &  0.13 (10.5\%) &    0.12 (15.9\%) &    0.24 (10.8\%) &       0.06 (11.8\%) &        0.01 (36.4\%) \\
   &  0.7 &  0.31 (41.3\%) &  0.23 (22.4\%) &     0.14 (8.3\%) &  0.14 (-491.6\%) &       0.16 (12.5\%) &        0.05 (21.2\%) \\
\midrule
     &  0.3 &   0.25 (0.5\%) &   0.41 (0.2\%) &      0.2 (0.5\%) &   0.23 (0.2\%) &        0.01 (4.8\%) &         0.01 (5.9\%) \\
    ER &  0.5 &   0.36 (1.1\%) &   0.52 (0.8\%) &     0.35 (0.4\%) &   0.36 (0.2\%) &        0.18 (1.2\%) &         0.02 (7.4\%) \\
     &  0.7 &   0.39 (3.2\%) &   0.55 (2.5\%) &     0.44 (2.0\%) &   0.43 (0.8\%) &        0.23 (2.9\%) &        0.29 (10.4\%) \\
\midrule
    &  0.3 &  0.44 (86.4\%) &  0.11 (65.1\%) &    0.12 (81.5\%) &  0.34 (80.7\%) &        0.01 (0.3\%) &        0.14 (70.4\%) \\
   GEO &  0.5 &  0.71 (87.3\%) &   0.2 (57.8\%) &    0.24 (31.4\%) &  0.55 (80.4\%) &        0.1 (59.6\%) &        0.27 (65.0\%) \\
    &  0.7 &  0.96 (83.2\%) &   0.4 (55.2\%) &    0.33 (54.8\%) &  0.72 (90.0\%) &       0.19 (72.4\%) &        0.41 (61.0\%) \\
\midrule
  &  0.3 &  0.13 (86.6\%) &  0.04 (79.8\%) &    0.03 (69.0\%) &  0.11 (69.7\%) &         0.0 (1.3\%) &        0.04 (73.6\%) \\
 Shape &  0.5 &  0.23 (91.4\%) &  0.08 (89.8\%) &    0.06 (82.2\%) &  0.17 (88.2\%) &       0.04 (80.2\%) &        0.08 (79.4\%) \\
  &  0.7 &  0.34 (91.1\%) &  0.17 (94.3\%) &     0.1 (74.7\%) &  0.24 (95.9\%) &       0.09 (64.6\%) &        0.13 (84.8\%) \\
\midrule
     &  0.3 &  0.27 (46.2\%) &  0.04 (65.6\%) &   0.04 (-26.9\%) &  0.43 (32.9\%) &       0.02 (68.2\%) &        0.06 (75.2\%) \\
    WS &  0.5 &  0.45 (62.9\%) &  0.09 (82.1\%) &    0.09 (60.6\%) &            0.52 (51.8\%) &       0.09 (69.9\%) &        0.11 (84.2\%) \\
     &  0.7 &  0.65 (73.4\%) &  0.15 (78.4\%) &    0.14 (66.7\%) &  0.67 (76.6\%) &       0.15 (80.8\%) &        0.16 (83.2\%) \\
\bottomrule
\end{tabular}
}
\end{center}
\end{table}

\subsection{Details about Section \ref{flex}. }
\label{appendix-conductance}
We list the full details of Section \ref{flex}. 
\begin{table}
\renewcommand{\arraystretch}{1.2}

\caption{Loss: quadratic loss. Laplacian: \textit{normalized} Laplacian for both original and coarse graphs. Each entry $x (y)$ is: $x =$ loss w/o learning, and $y =$ improvement percentage. BL stands for the baseline. } 
\label{loss_quadratic_lap_sym_eigen_False}

\begin{center}
\scalebox{.8}{
\begin{tabular}{@{}llllllll@{}}
\toprule
 Dataset &     Ratio & BL & Affinity & \makecell[l]{Algebraic \\ Distance} &    \makecell[l]{Heavy \\ Edge} & \makecell[l]{Local var \\ (edges)} & \makecell[l]{Local var \\ (neigh.)} \\ \midrule 
      &  0.3 &    0.06 (68.6\%) &  0.07 (73.9\%) &    0.08 (80.6\%) &                               0.08 (79.6\%) &       0.06 (79.4\%) &       0.01 (-15.8\%) \\
      BA &  0.5 &    0.13 (76.2\%) &  0.14 (45.0\%) &    0.15 (51.8\%) &                               0.15 (46.6\%) &       0.14 (55.3\%) &        0.06 (57.2\%) \\
       &  0.7 &    0.22 (17.0\%) &   0.23 (5.5\%) &    0.24 (10.8\%) &                                0.24 (9.7\%) &        0.23 (5.4\%) &        0.17 (36.8\%) \\
\midrule
       &  0.3 & 0.04 (50.2\%) &   0.03 (44.1\%) &    0.01 (-7.0\%) &    0.03 (50.1\%) &      0.0 (-135.0\%) &       0.01 (-11.7\%) \\
      CS &  0.5 & 0.08 (58.0\%) &   0.06 (37.2\%) &    0.04 (12.8\%) &    0.05 (41.5\%) &       0.02 (16.8\%) &        0.01 (50.4\%) \\
       &  0.7 & 0.13 (57.8\%) &    0.1 (36.3\%) &    0.09 (21.4\%) &    0.09 (29.3\%) &       0.05 (11.6\%) &        0.04 (10.8\%) \\
\midrule
 &  0.3 & 0.05 (32.3\%) &    0.04 (5.4\%) &   0.02 (-16.5\%) &    0.03 (69.3\%) &     0.0 (-1102.4\%) &        0.0 (-59.8\%) \\
 Physics &  0.5 & 0.07 (47.9\%) &   0.06 (40.1\%) &    0.04 (17.4\%) &    0.04 (61.4\%) &      0.02 (-23.3\%) &        0.01 (35.6\%) \\
  &  0.7 &  0.14 (60.8\%)&    0.1 (52.0\%) &    0.06 (20.9\%) &    0.07 (29.9\%) &       0.04 (11.9\%) &        0.02 (39.1\%) \\
 \midrule
  &  0.3 &0.05 (-29.8\%)&             0.05 (-31.7\%) &   0.05 (-21.8\%) &   0.05 (-66.8\%) &      0.0 (-293.4\%) &        0.01 (13.4\%) \\
  Flickr &  0.5 & 0.08 (-31.9\%)&             0.06 (-27.6\%) &   0.06 (-67.2\%) &   0.07 (-73.8\%) &     0.02 (-440.1\%) &       0.02 (-43.9\%) \\
  &  0.7 &0.08 (-55.3\%)&             0.07 (-32.3\%) &  0.04 (-316.0\%) &  0.07 (-138.4\%) &     0.03 (-384.6\%) &      0.04 (-195.6\%) \\
\midrule
  &  0.3 & 0.03 (13.1\%) &  0.03 (-15.7\%) &   0.01 (-79.9\%) &    0.04 (-3.2\%) &     0.01 (-191.7\%) &        0.0 (-53.7\%) \\
  PubMed &  0.5 & 0.05 (47.8\%) &   0.05 (35.0\%) &    0.05 (41.1\%) &    0.12 (46.8\%) &      0.03 (-66.4\%) &      0.01 (-118.0\%) \\
  &  0.7 & 0.09 (58.0\%) &   0.09 (34.7\%) &    0.07 (68.7\%) &    0.07 (21.2\%) &       0.08 (67.2\%) &        0.03 (43.1\%) \\
\midrule
     &  0.3 &    0.06 (84.3\%) &    0.06 (82.0\%) &    0.05 (76.8\%) &    0.06 (80.5\%) &       0.03 (65.2\%) &        0.04 (80.8\%) \\
    ER &  0.5 &     0.1 (82.2\%) &     0.1 (83.9\%) &    0.09 (79.3\%) &    0.09 (78.8\%) &       0.06 (64.6\%) &        0.06 (75.4\%) \\
     &  0.7 &    0.12 (59.0\%) &    0.14 (52.3\%) &    0.12 (55.7\%) &    0.13 (57.1\%) &       0.08 (25.1\%) &        0.09 (50.3\%) \\
\midrule
    &  0.3 &    0.02 (73.1\%) &   0.01 (-37.1\%) &    0.01 (-4.9\%) &    0.02 (64.8\%) &      0.0 (-204.1\%) &       0.01 (-22.0\%) \\
   GEO &  0.5 &    0.04 (52.8\%) &    0.01 (12.4\%) &    0.01 (27.0\%) &    0.03 (56.3\%) &     0.01 (-145.1\%) &        0.02 (-9.7\%) \\
    &  0.7 &    0.05 (66.5\%) &    0.02 (39.8\%) &    0.02 (42.6\%) &    0.04 (66.0\%) &      0.01 (-56.2\%) &         0.02 (0.9\%) \\
\midrule
  &  0.3 &    0.01 (82.6\%) &     0.0 (41.9\%) &     0.0 (25.6\%) &    0.01 (87.3\%) &       0.0 (-73.6\%) &         0.0 (11.8\%) \\
 Shape &  0.5 &    0.02 (84.4\%) &    0.01 (67.7\%) &    0.01 (58.4\%) &    0.02 (87.4\%) &        0.0 (13.3\%) &        0.01 (43.8\%) \\
  &  0.7 &    0.03 (85.2\%) &    0.01 (78.9\%) &    0.01 (58.2\%) &    0.02 (87.9\%) &       0.01 (43.6\%) &        0.01 (59.4\%) \\
\midrule
  &  0.3 &    0.03 (78.9\%) &     0.0 (-4.4\%) &     0.0 (-7.2\%) &    0.04 (73.7\%) &      0.0 (-253.3\%) &        0.01 (60.8\%) \\
    WS &  0.5 &    0.05 (83.3\%) &    0.01 (-1.7\%) &    0.01 (38.6\%) &  0.05 (50.3\%)             &       0.01 (40.9\%) &        0.01 (10.8\%) \\
  &  0.7 &    0.07 (84.1\%) &    0.01 (56.4\%) &    0.01 (65.7\%) &    0.07 (89.5\%) &       0.01 (62.6\%) &        0.02 (68.6\%) \\
\bottomrule
\end{tabular}
}
\end{center}
\end{table}

\begin{table}
\label{conductance-loss-appendix}
\renewcommand{\arraystretch}{1.2}

\caption{Loss: conductance difference. Each entry $x (y)$ is: $x =$ loss w/o learning, and $y =$ improvement percentage. $\dagger$ stands for out of memory error.}
\label{loss_conductance_lap_none_eigen_False}

\begin{center}
\scalebox{.8}{
\begin{tabular}{@{}llllllll@{}}
\toprule
 Dataset &     Ratio & BL & Affinity & \makecell[l]{Algebraic \\ Distance} &    \makecell[l]{Heavy \\ Edge} & \makecell[l]{Local var \\ (edges)} & \makecell[l]{Local var \\ (neigh.)} \\ \midrule 
    &  0.3 &  0.11 (82.3\%) &    0.08 (78.5\%) &     0.10 (74.8\%) &   0.10 (74.3\%) &       0.09 (79.3\%) &        0.11 (83.6\%) \\
    BA &  0.5 &  0.14 (69.6\%) &   0.13 (31.5\%) &    0.14 (37.4\%) &  0.14 (33.9\%) &       0.13 (34.3\%) &        0.13 (56.2\%) \\
     &  0.7 &  0.22 (48.1\%) &   0.20 (11.0\%) &    0.21 (22.4\%) &  0.21 (20.0\%) &        0.20 (13.2\%) &        0.21 (47.8\%) \\
\midrule
     &  0.3 &   0.10 (81.0\%) &   0.09 (74.7\%) &     0.10 (74.3\%) &   0.10 (72.5\%) &       0.09 (76.4\%) &        0.12 (79.0\%) \\
    ER &  0.5 &  0.13 (64.0\%) &   0.14 (33.8\%) &    0.14 (33.6\%) &  0.14 (32.5\%) &       0.14 (31.9\%) &         0.12 (1.4\%) \\
     &  0.7 &   0.20 (43.4\%) &    0.19 (10.1\%) &     0.20 (17.6\%) &   0.20 (17.2\%) &       0.19 (23.4\%) &        0.17 (15.7\%) \\
\midrule
    &  0.3 &   0.10 (91.2\%) &   0.09 (87.0\%) &     0.10 (84.8\%) &   0.10 (85.5\%) &        0.10 (84.6\%) &        0.11 (92.3\%) \\
   GEO &  0.5 &  0.12 (88.1\%) &  0.13 (33.9\%) &    0.13 (32.6\%) &  0.13 (37.6\%) &       0.13 (35.3\%) &        0.13 (90.1\%) \\
    &  0.7 &  0.21 (86.7\%) &   0.17 (21.9\%) &    0.19 (25.2\%) &  0.19 (27.3\%) &       0.19 (27.8\%) &        0.11 (72.4\%)  \\
\midrule
  &  0.3 &        0.10 (82.3\%) &   0.10 (86.8\%) &    0.09 (85.8\%) &  0.09 (86.3\%) &       0.09 (84.8\%) &        0.09 (92.0\%) \\
 Shape &  0.5 &        0.14 (33.2\%) &   0.13 (34.7\%) &    0.13 (34.6\%) &  0.13 (37.7\%) &       0.13 (40.8\%) &        0.12 (89.8\%) \\
  &  0.7 &  0.17 (41.4\%) &   0.19 (23.4\%) &     0.20 (27.7\%) &   0.20 (34.0\%) &        0.20 (34.3\%) &        0.11 (76.8\%) \\
\midrule
  &  0.3 &   0.10 (86.7\%) &   0.09 (82.1\%) &     0.10 (84.3\%) &   0.10 (82.9\%) &       0.09 (81.9\%) &         0.10 (90.5\%) \\
    WS &  0.5 &  0.13 (80.8\%) &   0.13 (31.2\%) &    0.13 (33.1\%) &  0.13 (27.7\%) &       0.13 (34.0\%) &        0.13 (86.5\%) \\
  &  0.7 &  0.19 (45.3\%) &    0.19 (19.3\%) &    0.19 (27.0\%) &  0.19 (26.6\%) &        0.20 (27.1\%) &        0.11 (12.8\%) \\
\midrule
 & 0.3 & 0.11 (75.8\%) &  0.08 (86.8\%) &   0.12 (71.4\%) &   0.11 (62.6\%) &      0.11 (76.7\%) &       0.14 (87.9\%) \\
CS & 0.5 &  0.14 (48.3\%) &  0.12 (16.7\%) &   0.15 (50.0\%) &   0.11 (-7.2\%)  &      0.11 (6.7\%)   &       0.09 (9.6\%) \\
 & 0.7 &  0.26 (40.1\%) &  0.22 (29.0\%) &   0.24 (35.0\%) &   0.24 (41.0\%) &      0.23 (35.2\%) &       0.17 (28.8\%) \\
\midrule
 & 0.3 & 0.10 (81.7\%)  &  0.07 (79.2\%) &   0.11 (73.6\%)  &  0.10 (73.7\%) &      0.11 (79.0\%)   &      0.13 (4.4\%)  \\   
Physics & 0.5 & 0.13 (20.5\%) &  0.19 (39.7\%) &   0.15 (27.8\%)  & 0.16 (31.7\%)&       0.15 (25.4\%)  &     0.11 (-22.3\%) \\
 & 0.7 & 0.24 (60.2\%) &   0.16 (26.1\%) &   0.23 (15.3\%) &  0.24 (16.5\%)&       0.23 (11.2\%)  &       0.20 (35.9\%) \\
\midrule
 & 0.3 & 0.12 (42.8\%) &  0.10 (0.4\%)&      0.18 (3.6\%) &  0.18 (-0.2\%)&        0.19 (0.9\%)&        0.11 (26.4\%)\\
PubMed & 0.5 & 0.15 (19.7\%) &   0.19 (1.3\%)&   0.24 (-12.9\%)&    0.39 (3.7\%)&       0.39 (11.8\%)&        0.16 (16.0\%)\\
 & 0.7 &          0.25 (27.3\%) &            0.33 (0.8\%) &    0.36 (0.0\%) &            0.31 (33.2\%) &                 0.28 (35.3\%) &       0.23 (14.1\%)\\
\midrule
 & 0.3 &             0.11 (62.6\%) &            $\dagger$ &   0.13 (52.5\%)&   0.13 (54.7\%)&       0.12 (74.2\%)&        0.16 (58.3\%)\\
Flickr & 0.5 & 0.09 (-34.5\%)&             $\dagger$ &    0.15 (3.1\%)&    0.16 (3.4\%)&       0.15 (19.9\%)&        0.13 (-6.7\%) \\
 & 0.7 &   0.19 (35.6\%) &            $\dagger$ &      0.20 (6.0\%)&   0.28 (-3.1\%)&        0.29 (5.3\%)&       0.12 (-25.4\%)\\
\bottomrule
\end{tabular}
}
\end{center}
\end{table}
\subsection{Details about Eigenerror}
We list the Eigenerror for all datasets when the objective function is $Loss (L, \hL)$.

\begin{table}
\renewcommand{\arraystretch}{1.2}
\caption{Loss: Eigenerror. Laplacian: combinatorial Laplacian for original graphs and \textit{doubly-weighted Laplacian} for coarse graphs.  Each entry $x (y)$ is: $x =$ loss w/o learning, and $y =$ improvement percentage. $\dagger$ stands for out of memory error.}
\label{loss_quadratic_lap_none_eigen_True}
\begin{center}
\scalebox{.8}{
\begin{tabular}{@{}llllllll@{}}
\toprule
 Dataset &    Ratio   & BL & Affinity & \makecell[l]{Algebraic \\ Distance} &    \makecell[l]{Heavy \\ Edge} & \makecell[l]{Local var \\ (edges)} & \makecell[l]{Local var \\ (neigh.)} \\ \midrule
   &  0.3 &   0.19 (4.1\%) &    0.1 (5.4\%) &     0.12 (5.6\%) &   0.12 (5.0\%) &       0.03 (25.4\%) &        0.1 (-32.2\%) \\
    BA &  0.5 &   0.36 (7.1\%) &   0.17 (8.2\%) &     0.22 (6.5\%) &   0.22 (4.7\%) &       0.11 (21.1\%) &       0.17 (-15.9\%) \\
     &  0.7 &     0.55 (9.2\%)   &  0.32 (12.4\%) &    0.39 (10.2\%) &  0.37 (10.9\%) &       0.21 (33.0\%) &       0.28 (-29.5\%) \\
\midrule
       &  0.3 &  0.46 (16.5\%) &   0.3 (56.9\%) &    0.11 (59.1\%) &    0.23 (38.9\%) &       0.0 (-347.6\%) &      0.0 (-191.8\%) \\ 
      CS &  0.5 &   1.1 (18.0\%) &  0.55 (49.8\%) &    0.33 (60.6\%) &    0.42 (44.5\%) &       0.21 (75.2\%) &       0.0 (-154.2\%) \\
       &  0.7 &  2.28 (16.9\%) &  0.82 (57.0\%) &    0.66 (53.3\%) &    0.73 (38.9\%) &       0.49 (73.4\%) &        0.34 (63.3\%) \\
 \midrule
  &  0.3 & 0.48 (19.5\%) &  0.35 (67.2\%) &    0.14 (65.2\%) &     0.2 (57.4\%) &     0.0 (-521.6\%) &        0.0 (20.7\%) \\ 
 Physics &  0.5 &  1.06 (21.7\%) &  0.58 (67.1\%) &    0.33 (69.5\%) &    0.35 (64.6\%) &        0.2 (79.0\%)     &  0.0 (-377.9\%) \\ 
  &  0.7 &  2.11 (19.1\%) &  0.88 (72.9\%) &    0.62 (66.7\%) &    0.62 (64.9\%) &       0.31 (70.3\%) &      0.01 (-434.0\%) \\
 \midrule
   &  0.3 &  0.33 (20.4\%) & $\dagger$ &     0.16 (7.8\%) &     0.16 (9.1\%) &                 0.02 (63.0\%) &       0.04 (-88.9\%) \\
  Flickr &  0.5 &  0.57 (55.7\%) & $\dagger$ &    0.33 (20.2\%) &    0.31 (55.0\%) &       0.11 (67.6\%) &        0.07 (60.3\%) \\
   &  0.7 &  0.86 (85.2\%) & $\dagger$ &     0.6 (32.6\%) &    0.57 (38.7\%) &       0.23 (92.2\%) &        0.21 (40.7\%) \\
  \midrule
   &  0.3 &   0.56 (5.6\%) &  0.27 (13.8\%) &    0.13 (17.4\%) &    0.34 (10.6\%) &       0.06 (-0.4\%) &         0.0 (31.1\%) \\
  PubMed &  0.5 &   1.25 (7.1\%) &   0.5 (15.5\%) &    0.51 (12.3\%) &  1.19 (-110.1\%) &       0.35 (-8.8\%) &        0.02 (60.4\%) \\
   &  0.7 &   2.61 (8.9\%) &  1.12 (19.4\%) &  2.24 (-149.8\%) &   4.31 (-238.6\%) &     1.51 (-260.2\%) &        0.27 (75.8\%) \\
  \midrule
   &  0.3 &  0.27 (-0.1\%) &   0.35 (0.4\%) &     0.15 (0.6\%) &   0.18 (0.5\%) &        0.01 (5.7\%) &       0.01 (-10.4\%) \\
    ER &  0.5 &   0.61 (0.5\%) &    0.7 (1.0\%) &     0.35 (0.6\%) &   0.36 (0.2\%) &        0.19 (1.2\%) &         0.02 (0.8\%) \\
   &  0.7 &   1.42 (0.8\%) &   1.27 (2.1\%) &      0.7 (1.4\%) &   0.68 (0.3\%) &        0.29 (3.5\%) &        0.33 (10.2\%) \\
\midrule
   &  0.3 &  0.78 (43.4\%) &  0.08 (80.3\%) &    0.09 (77.1\%) &  0.27 (82.2\%) &     0.01 (-524.6\%) &         0.1 (82.5\%) \\
   GEO &  0.5 &  1.72 (50.3\%) &  0.16 (89.4\%) &    0.18 (91.2\%) &  0.45 (84.9\%) &       0.08 (55.6\%) &         0.2 (86.8\%) \\
   &  0.7 &  3.64 (30.4\%) &  0.33 (86.0\%) &    0.25 (86.7\%) &  0.61 (93.0\%) &       0.15 (88.7\%) &        0.32 (79.3\%) \\
\midrule
  &  0.3 &  0.87 (55.4\%) &  0.12 (88.6\%) &    0.07 (56.7\%) &  0.29 (80.4\%) &       0.01 (33.1\%) &        0.09 (84.5\%) \\
 Shape &  0.5 &  2.07 (67.7\%) &  0.24 (93.3\%) &    0.17 (90.9\%) &  0.49 (93.0\%) &       0.11 (84.2\%) &         0.2 (90.7\%) \\
  &  0.7 &  4.93 (69.1\%) &  0.47 (94.9\%) &    0.27 (68.5\%) &  0.71 (95.7\%) &       0.25 (79.1\%) &        0.34 (87.4\%) \\
\midrule
  &  0.3 &   0.7 (32.3\%) &  0.05 (84.7\%) &    0.04 (58.9\%) &  0.44 (37.3\%) &       0.02 (75.0\%) &        0.06 (83.4\%) \\
    WS &  0.5 &  1.59 (43.9\%) &  0.11 (88.2\%) &    0.11 (83.9\%) &   0.58 (23.5\%) &        0.1 (88.2\%) &        0.12 (79.7\%) \\
  &  0.7 &  3.52 (45.6\%) &  0.18 (77.7\%) &    0.17 (78.2\%) &  0.79 (82.8\%) &       0.17 (90.9\%) &        0.19 (65.8\%) \\

\bottomrule
\end{tabular}
}
\end{center}
\end{table}

\section{Visualization}
\label{viz-appendix}

\begin{figure}[h]
\begin{center} 
\includegraphics[scale=.25]{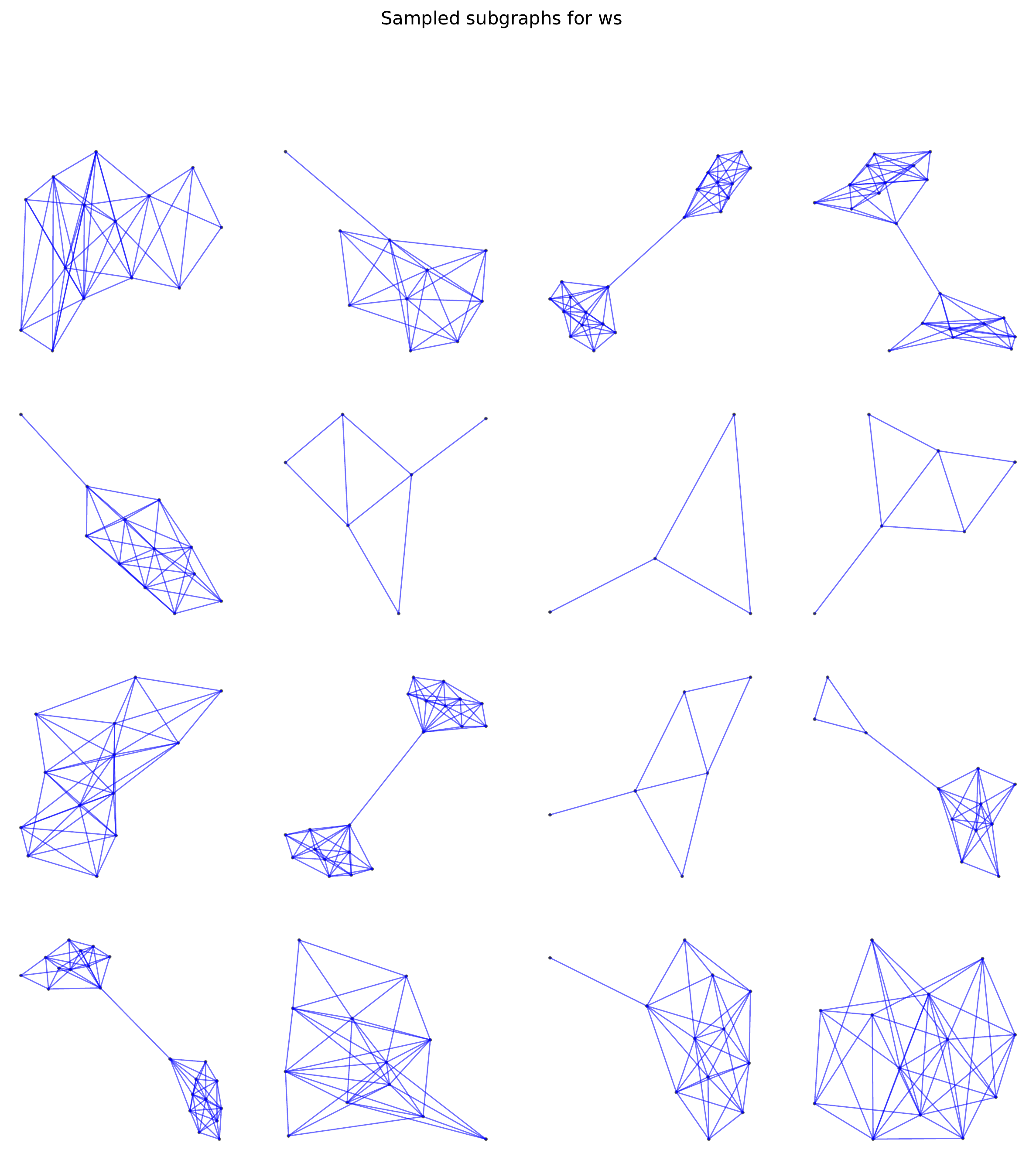}
\includegraphics[scale=.25]{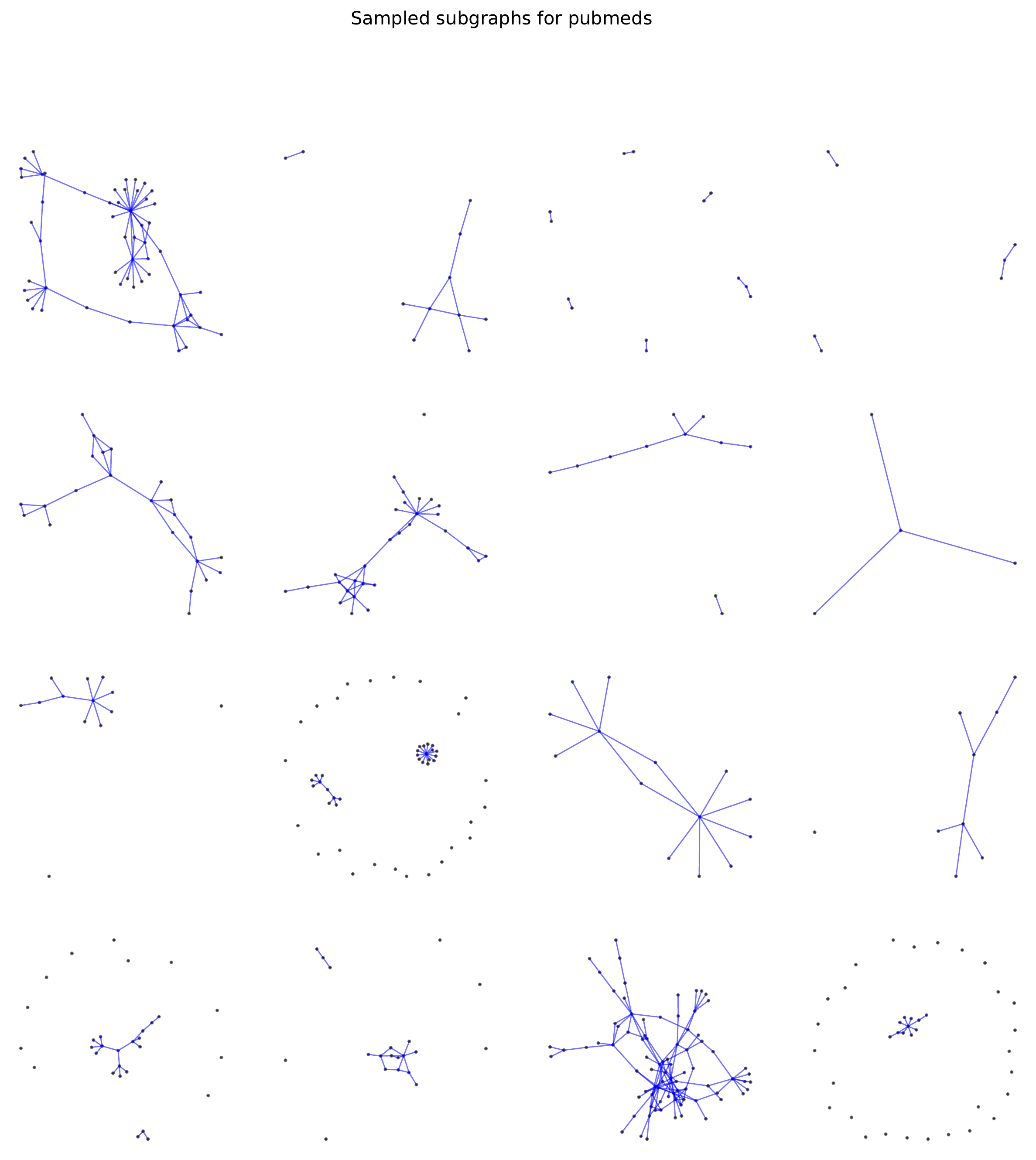}
\end{center}
\caption{A collection of subgraphs corresponding to edges in coarse graphs (WS and PubMed) generated by variation neighborhood algorithm. Reduction ratio is 0.7 and 0.9 respectively.} 
\end{figure}

We visualize the subgraphs corresponding to randomly sampled edges of coarse graphs. For example, in WS graphs, some subgraphs have only a few nodes and edges, while other subgraphs have some common patterns such as the dumbbell shape graph.  For PubMed, most subgraphs have tree-like structures, possibly due to the edge sparsity in the citation network.

\begin{figure}[ht]
\label{wdiff-fig}
\centering
\figtopvspace
\includegraphics[ trim=2.5cm 2cm 2.5cm 2cm,clip,width=.45\textwidth]{./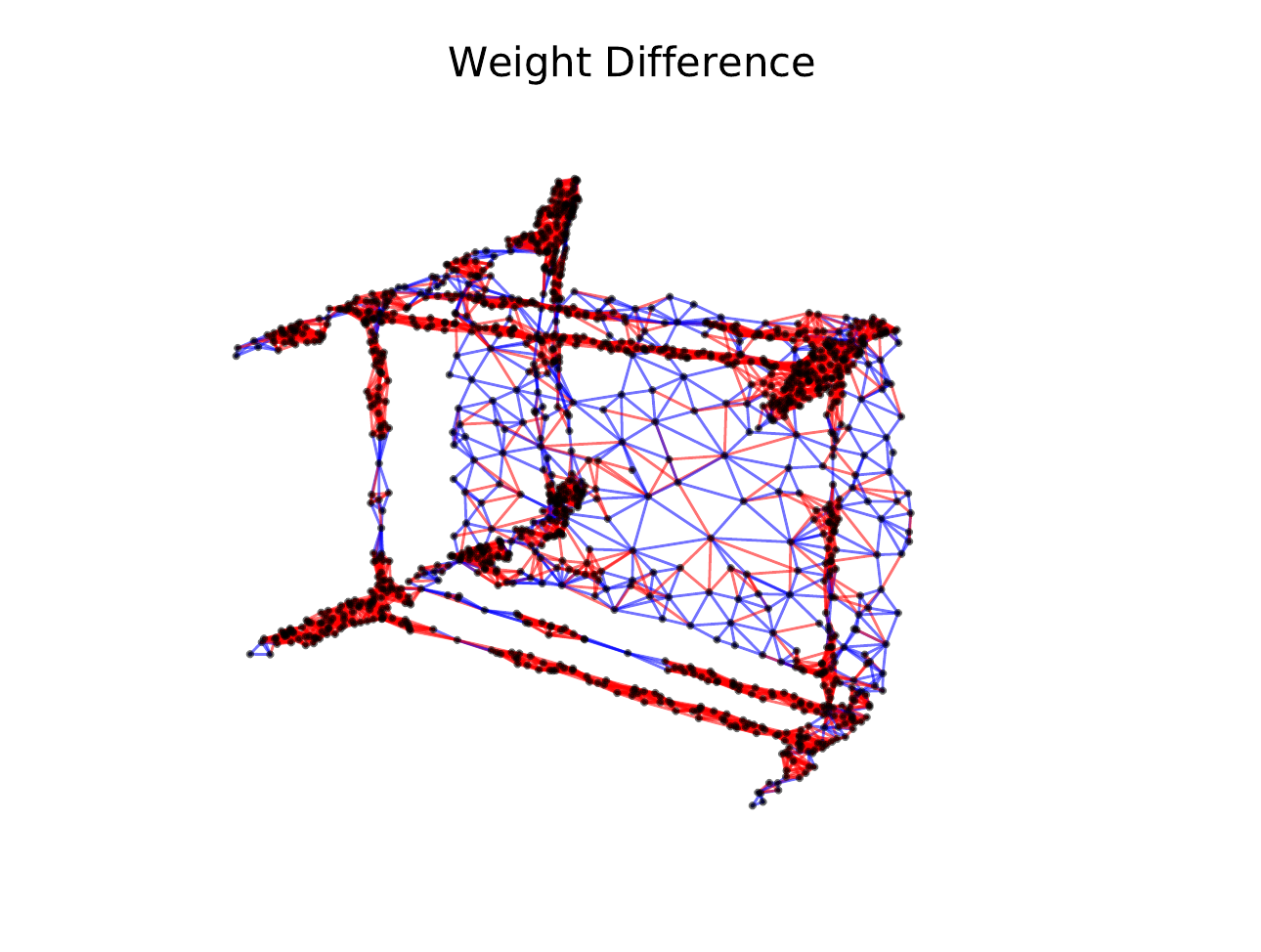}
\includegraphics[ trim=2.5cm 2cm 2.5cm 2cm,clip,width=.45\textwidth]{./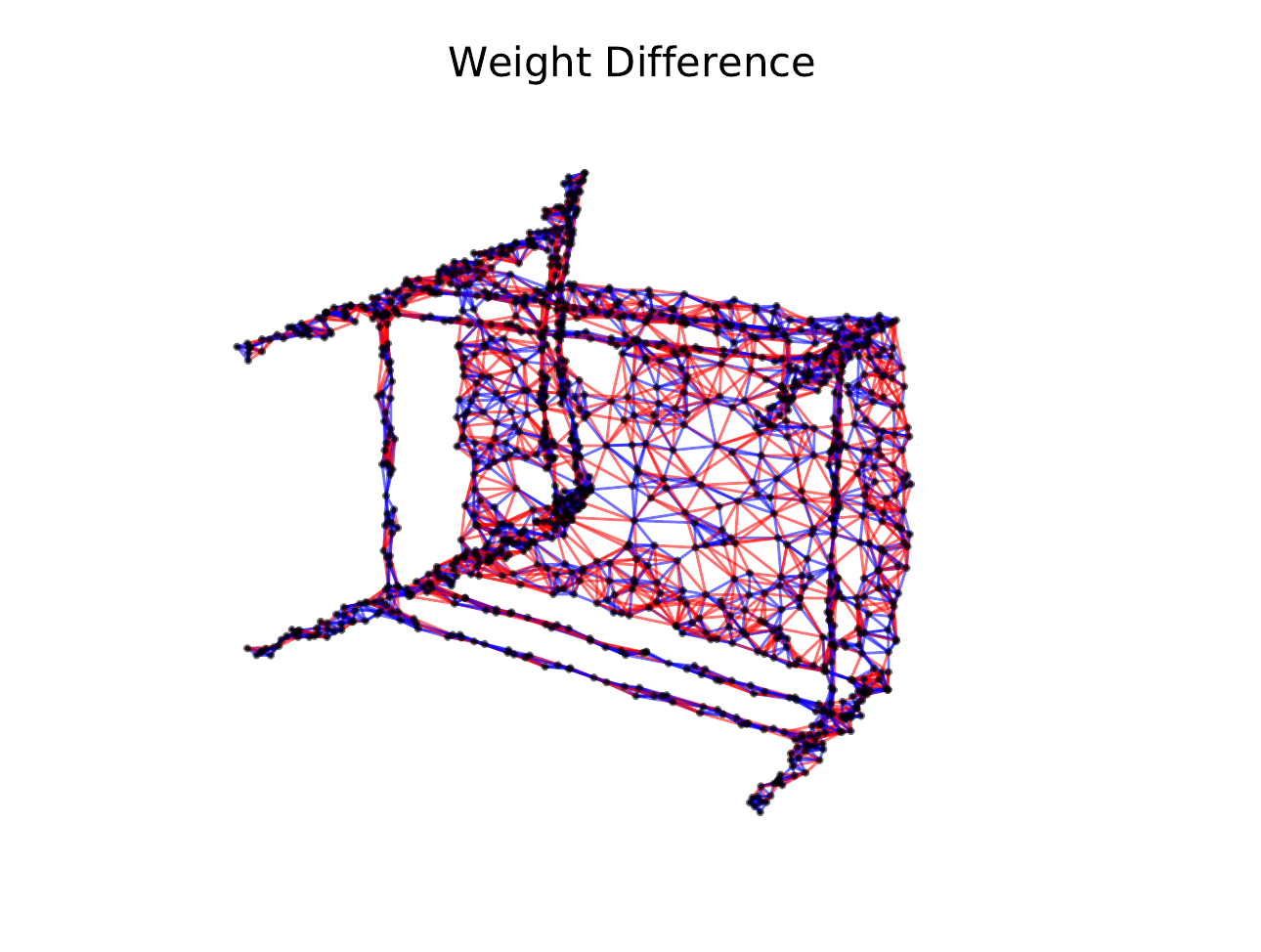}
\includegraphics[scale=.3]{./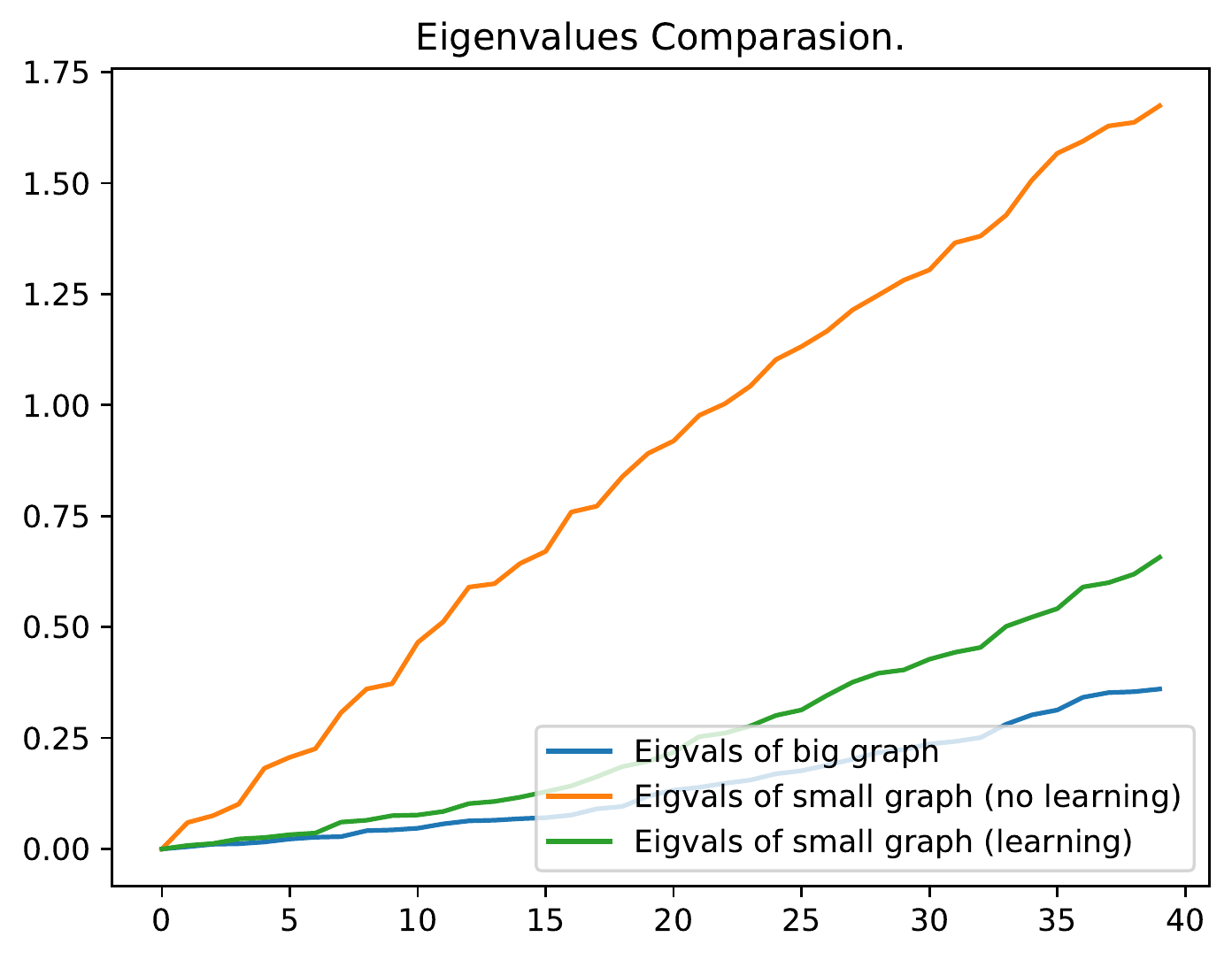}
\includegraphics[scale=.3]{./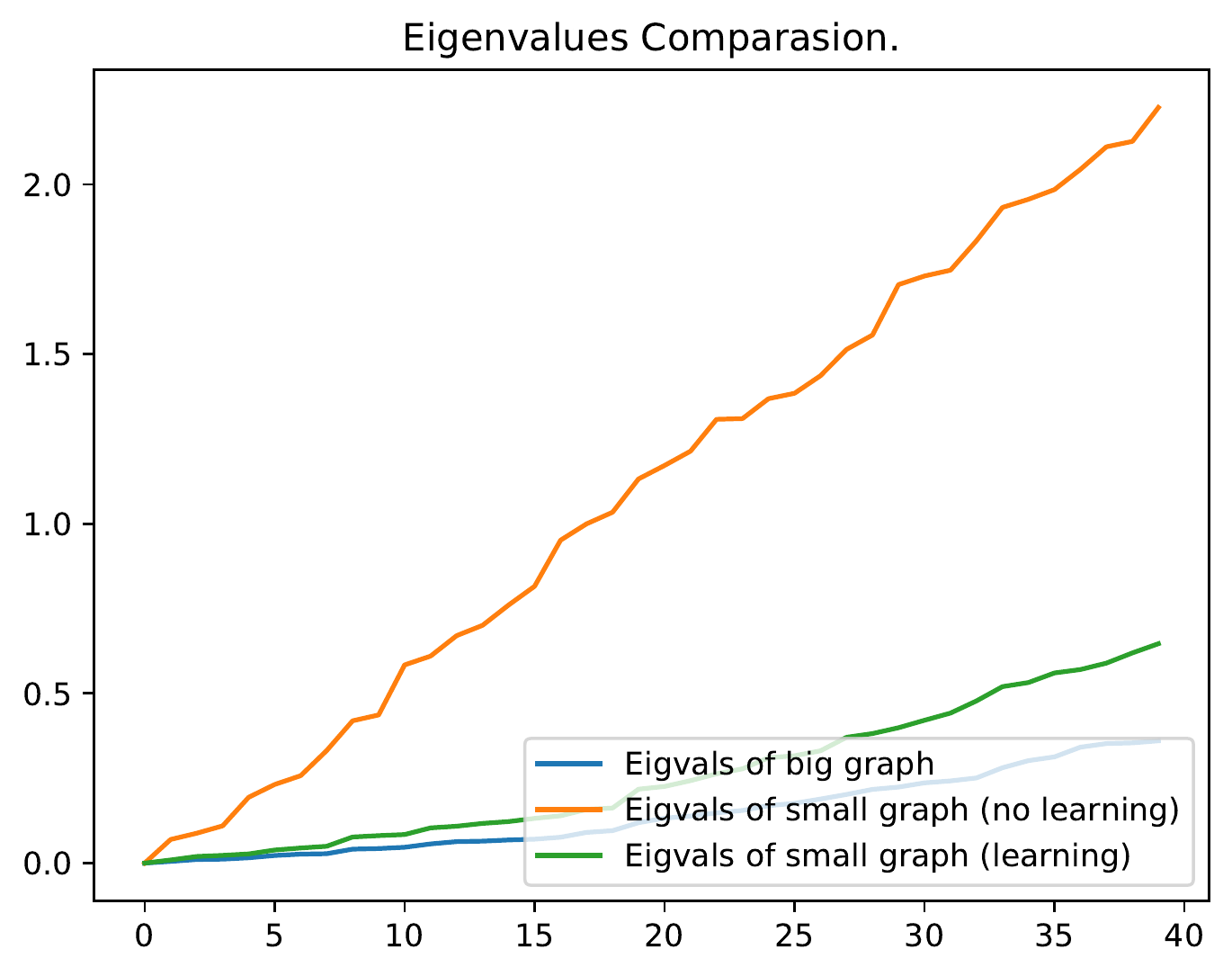}

\caption{The first row illustrates the weight difference for local variation neighborhood (left) and heavy edge (right) with 0.5 as the reduction ratio. Blue (red) edges denote edges whose learned weights is smaller (larger) than the default ones. The second row shows the first 40 eigenvalues of the original graph Laplacian, coarse graph w/o learning, and coarse graph w/ learning.} 
\end{figure}

In Figure \ref{wdiff-fig}, we visualize the weight difference between coarsening algorithms with and without learning. We also plot the eigenvalues of coarse graphs, where the first 40 eigenvalues of the original graph are smaller than the coarse ones. After optimizing edge weights via \nntight, we see both methods produce graphs with eigenvalues closer to the eigenvalues of the original graphs.

\end{document}